\documentclass{article}
\usepackage{palatino,epsfig,latexsym}

%\pdfpagewidth=8.5in
%\pdfpageheight=11in

\usepackage{dsfont,amsmath,amssymb}
\usepackage{xspace}
\usepackage[latin1]{inputenc}
\usepackage{epsfig,wrapfig}

\usepackage{subfigure}          % [Tobias] for subfigures (actually subfig is better)
\usepackage[margin=0pt,font=small,labelfont=bf,format=plain,indention=.5cm]{caption}  

\usepackage{balance}
\usepackage{url}
\usepackage{tikz}

\usepackage[numbers,sort&compress,longnamesfirst,sectionbib]{natbib}
\bibliographystyle{abbrvnat}

\newtheorem{lemma}{Lemma}

\newtheorem{definition}{Definition}
\newtheorem{theorem}[lemma]{Theorem}

\newcommand{\eg}{e.\,g.\xspace}
\newcommand{\R}{\mathbb{R}}
\newcommand{\Oh}[1]{\mathcal{O}(#1)}

\newcommand{\defref}[1]{Definition~\ref{def:#1}}
\newcommand{\thmref}[1]{Theorem~\ref{thm:#1}}
\newcommand{\lemref}[1]{Lemma~\ref{lem:#1}}

\newcommand{\secref}[1]{Section~\ref{sec:#1}}

\newcommand{\figref}[1]{Figure~\ref{fig:#1}}
\newcommand{\figrefs}[2]{Figures~\ref{fig:#1} and~\ref{fig:#2}}

\newcommand{\cX}{\mathcal{X}}

\newcommand{\ee}{\varepsilon}
\renewcommand{\epsilon}{\ee}

\newcommand{\ie}{i.\,e.\xspace}
\newcommand{\wlo}{w.\,l.\,o.\,g.\xspace}

\renewcommand{\P}{\textup{\textbf{P}}\xspace}
\newcommand{\NP}{\textup{\textbf{NP}}\xspace}

\newcommand{\HYP}{\operatorname{HYP}}
\newcommand{\Vol}[1]{\textsc{vol}(#1)}

\newcommand{\VolBigg}[1]{\textsc{vol}\Bigg(#1\Bigg)}

\newcommand{\fasy}{\ensuremath{f^{\textup{asy}}}}
\newcommand{\fsym}{\ensuremath{f^{\textup{sym}}}}

\newcommand{\APP}[1]{\ensuremath{\operatorname{APP}(#1)}}
\newcommand{\OPTAPP}[2]{\ensuremath{X_{\mathrm{opt}}^{\mathrm{APP}}(#1,#2)}}
\newcommand{\OPTAPPr}[2]{\ensuremath{\widehat{X}_{\mathrm{opt}}^{\mathrm{APP}}(#1,#2)}}

\newcommand{\OPTHYP}[2]{\ensuremath{X_{\mathrm{opt}}^{\mathrm{HYP}}(#1,#2)}}
\newcommand{\OPTHYPr}[2]{\ensuremath{\widehat{X}_{\mathrm{opt}}^{\mathrm{HYP}}(#1,#2)}}

\newcommand{\OPT}[2]{\ensuremath{X_{\mathrm{opt}}(#1,#2)}}

\usepackage{setspace}
\usepackage{color}   % just for \NOTE

\newcommand{\ratio}{\delta}

\def\argmax{\operatornamewithlimits{argmax}}
\def\argmin{\operatornamewithlimits{argmin}}

\hyphenation{Compute-Measures-Trellis in-di-ca-tor ob-ject-ive}

\newcommand{\distfigwidth}{.21\textwidth}
\newcommand{\doubledistfigwidth}{.42\textwidth}
\newcommand{\subfigwidth}{.18\textwidth}

% --------------------------------------------------------------------------------
% This means that anything over 90% of a page will create a floatpage for
% itself, else it will be on a page with text.

% --------------------------------------------------------------------------------

% --------------------------------------------------------------------------------
% redefinition of proof environment similar to amsthm package
% -> allows use of \qedhere!!
\makeatletter
\DeclareRobustCommand{\qed}{\ifmmode\mathqed\else\leavevmode\unskip\penalty9999\hbox{}\nobreak\hfill\quad\hbox{\qedsymbol}\fi}
\let\QED@stack\@empty
\let\qed@elt\relax
\newcommand{\pushQED}[1]{\toks@{\qed@elt{#1}}\@temptokena\expandafter{\QED@stack}\xdef\QED@stack{\the\toks@\the\@temptokena}}
\newcommand{\popQED}{\begingroup\let\qed@elt\popQED@elt \QED@stack\relax\relax\endgroup}
\def\popQED@elt#1#2\relax{#1\gdef\QED@stack{#2}}
\newcommand{\qedhere}{\begingroup \let\mathqed\math@qedhere\let\qed@elt\setQED@elt \QED@stack\relax\relax \endgroup}
\newif\ifmeasuring@
\newif\iffirstchoice@ \firstchoice@true
\def\setQED@elt#1#2\relax{\ifmeasuring@\else \iffirstchoice@ \gdef\QED@stack{\qed@elt{}#2}\fi\fi#1}
\newcommand{\mathqed}{\quad\hbox{\qedsymbol}}
\def\linebox@qed{\hfil\hbox{\qedsymbol}\hfilneg}
\def\math@qedhere{\@ifundefined{\@currenvir @qed}{\qed@warning\quad\hbox{\qedsymbol}}{\@xp\aftergroup\csname\@currenvir @qed\endcsname}}
\def\displaymath@qed{\relax\ifmmode\ifinner\aftergroup\linebox@qed\else\eqno\let\eqno\relax \let\leqno\relax \let\veqno\relax\hbox{\qedsymbol}\fi\else\aftergroup\linebox@qed\fi}
\@xp\let\csname equation*@qed\endcsname\displaymath@qed
\def\equation@qed{
  \iftagsleft@\hbox{\phantom{\quad\qedsymbol}}\gdef\alt@tag{\rlap{\hbox to\displaywidth{\hfil\qedsymbol}}\global\let\alt@tag\@empty}
  \else\gdef\alt@tag{\global\let\alt@tag\@empty\vtop{\ialign{\hfil####\cr\tagform@\theequation\cr\qedsymbol\cr}}\setbox\z@}
  \fi
}
\def\qed@tag{\global\tag@true \nonumber&\omit\setboxz@h {\strut@ \qedsymbol}\tagsleft@false\place@tag@gather\kern-\tabskip\ifst@rred \else \global\@eqnswtrue \fi \global\advance\row@\@ne \cr}
\def\split@qed{\def\endsplit{\crcr\egroup \egroup \ctagsplit@false \rendsplit@\aftergroup\align@qed}}
\def\align@qed{\ifmeasuring@ \tag*{\qedsymbol}\else \let\math@cr@@@\qed@tag\fi}
\@xp\let\csname align*@qed\endcsname\align@qed
\@xp\let\csname gather*@qed\endcsname\align@qed
\def\@tempb#1 v#2.#3\@nil{#2}
\ifnum\@xp\@xp\@xp\@tempb\csname ver@amsmath.sty\endcsname v0.0\@nil<\tw@\def\@tempa{TT}\else\def\@tempa{TF}\fi
\if\@tempa\renewcommand{\math@qedhere}{\quad\hbox{\qedsymbol}}\fi
\newcommand{\openbox}{\leavevmode\hbox to.77778em{\hfil\vrule\vbox to.675em{\hrule width.6em\vfil\hrule}\vrule\hfil}}
\DeclareRobustCommand{\textsquare}{\begingroup\usefont{U}{msa}{m}{n}\thr@@\endgroup}
\providecommand{\qedsymbol}{\openbox}
\newenvironment{proof}[1][\proofname]{\par\pushQED{\qed}\normalfont\topsep6\p@\@plus6\p@\relax\trivlist\item[\hskip\labelsep\itshape #1\@addpunct{.}]\ignorespaces}{\popQED\endtrivlist\@endpefalse}
\providecommand{\proofname}{Proof}
\makeatother
% --------------------------------------------------------------------------------

\begin{document}

%\ecjHeader{x}{x}{xxx-xxx}{200X}{Multiplicative Approximations, \ldots}{T.~Friedrich, F.~Neumann, and C.~Thyssen}
\title{\bf Multiplicative Approximations,\\Optimal Hypervolume Distributions,\\and the Choice of the Reference Point}  

\author{
Tobias Friedrich\\
Lehrstuhl f{\"u}r Theoretische Informatik I\\
Fakult{\"a}t f{\"u}r Mathematik und Informatik\\ 
Friedrich-Schiller-Universit{\"a}t Jena, Germany
\and 
Frank Neumann\\
Optimisation and Logistics\\
School of Computer Science\\
The University of Adelaide, Australia
\and 
Christian Thyssen\\
 Lehrstuhl~2, Fakult{\"a}t f{\"u}r Informatik\\
Technische Universit{\"a}t Dortmund, Germany
}

\maketitle

%\author{Tobias~Friedrich, Frank~Neumann, Christian~Thyssen% <-this % stops a space
%\IEEEcompsocitemizethanks{\IEEEcompsocthanksitem Tobias Friedrich is with the Department 1: Algorithms and Complexity at Max-Planck-%Institut f{\"u}r Informatik, 66123 Saarbr{\"u}cken, Germany,
%E-mail: {\ttfamily firstname.lastname@mpi-inf.mpg.de}% <-this % stops a space
%\IEEEcompsocthanksitem Frank Neumann is with the School of Computer Science at The University of Adelaide, Adelaide, SA 5005, Australia,
%E-mail: {\ttfamily firstname.lastname@adelaide.edu.au}% <-this % stops a space
%\IEEEcompsocthanksitem Christian~Thyssen is with Fakult{\"a}t f{\"u}r Informatik, Lehrstuhl~2, Technische Universit{\"a}t Dortmund, 44221 %Dortmund, Germany,
%E-mail: {\ttfamily firstname.lastname@ls2.cs.tu-dortmund.de}}% <-this % stops a space
%\thanks{Manuscript received October 22, 2010}
%}

%\markboth{IEEE Transactions on Evolutionary Computation}%
{%Friedrich \MakeLowercase{\textit{et al.}}: Optimal Hypervolume Distributions and Multiplicative Approximations in Multi-Objective Optimization}

%\IEEEcompsoctitleabstractindextext{%
\begin{abstract}
    Many optimization problems arising in applications have to consider several 
    objective functions at the same time. 
Evolutionary algorithms seem to be a very natural choice for dealing with multi-objective problems as the population of such an algorithm can be used to represent the trade-offs with respect to the given objective functions.
   % Indicator-based algorithms have become a very popular approach to solve 
   % multi-objective optimization problems. 
   In this paper, we contribute to the 
    theoretical understanding of  evolutionary algorithms for multi-objective problems. 
    We consider indicator-based algorithms whose goal is to maximize the hypervolume for a 
    given problem by distributing $\mu$ points on the Pareto front.
% This is a common approach
  %  which performs very well in many problem domains.
   % However, it is hard to understand the implicit optimization goal of stochastic search algorithms that %maximize the hypervolume. 
    To gain new theoretical insights into the behavior of hypervolume-based algorithms we compare their optimization goal to the goal of achieving an optimal multiplicative approximation ratio. Our studies are carried out for different Pareto front shapes of bi-objective problems.
    % for 
    %a given multi-objective problem and relate it to a set of $\mu$ points on 
    %the Pareto front that achieves the best possible approximation ratio.
    For 
    the class of linear fronts and a class of convex fronts, we prove that 
    maximizing the hypervolume gives the best possible approximation ratio when assuming that the extreme points have to be included in both distributions of the points on the Pareto front. Furthermore, we investigate the choice of the reference point on the approximation behavior of hypervolume-based approaches and 
    examine Pareto fronts of different shapes by numerical calculations.
\end{abstract}

%\begin{keywords}
%Approximations, Hypervolume Indicator, Multi-objective optimization, Theory.
%\end{keywords}}

%---------------------------------------------------------------------------------
%---------------------------------------------------------------------------------

\section{Introduction}

%Evolutionary algorithms have been applied to a wide range of optimization problems. They are, in particular, successful if the problem at hand is too complex to come up with good problem-specific algorithms. 
%Classical single-objective combinatorial optimization problems ask for  solutions that are optimal with respect to a given objective function under a given set of constraints.
%This is the case for many classical well-studied problems such as the computation of minimum spanning trees or shortest paths that can be found in any textbook on algorithms (see \eg \cite{CorLeiRivSte01,BookMehlhornSanders}). When good problem specific algorithms with good performance guarantees  are available, there is no need for evolutionary algorithms.
%However, most problems arising in real-world applications are complex and do not have one single objective function that should be optimized. Instead there are many objectives that have to be taken into account. Therefore, problems arising in application are often multi-objective in ``nature''. 

Multi-ob\-ject\-ive optimization \cite{ehrg2005a} deals with the task of optimizing several objective 
functions at the same time. Here,
several attributes of a given problem are employed as objective functions and
are used to define a partial order, called preference order, on the
solutions, for which the set of minimal
(maximal) elements is sought. 
 Usually, the objective functions are conflicting, which means 
that improvements with respect to one function can only be achieved when 
impairing the solution quality with respect to another objective function. 
Due to this, such problems usually do not have a single optimal function value. Instead, there is a set of optimal objective vectors which represents the different trade-offs of the different objective functions.
Solutions that cannot be improved with respect to any function without impairing 
another one are called \emph{Pareto-optimal solutions}. The objective vectors 
associated with these solutions are called \emph{Pareto-optimal objective vectors} and 
the set of all these objective vectors constitutes the \emph{Pareto front}.

In contrast to single-objective optimization, in multi-objective optimization the task is not to compute a single optimal solution but a set of solutions representing the different trade-offs with respect to the given objective functions.
Most of the best-known single-objective polynomially solvable
problems like shortest path or minimum spanning tree become
NP-hard when at least two weight functions have to be
optimized at the same time.  In this sense,
multi-objective optimization is generally considered as more 
difficult than single-objective optimization.

%One classical approach to deal with multi-objective problems is to optimize a single-objective function which is given by a weighted sum of the different objective functions (see \eg \cite{ehrg2005a}).
%Using different weights for such a single-objective approach, a set of optimal trade-offs with respect to the objective functions can be computed.
%However, this approach has some major drawbacks as the optimal solution for the single-objective function heavily depends on the chosen weights. On the other hand, it is not possible to compute all optimal objective vectors by taking the weighted sum approach even if the number of these trade-offs is small.
%This is due to the fact that the weighted sum approach is not able to find Pareto-optimal objective vectors not being part of the convex hull of all objective vectors.

Another, more promising, approach to deal with multi-objective optimization problems is to apply general stochastic search algorithms that evolve a set of possible solutions into a set of solutions that represent the trade-offs with respect to the objective functions. Well-known approaches in this field are evolutionary algorithms \cite{bfm1997a} and ant colony optimization \cite{Dorigo2004}. Especially, multi-objective evolutionary algorithms (MOEAs) have been shown to be very successful when dealing with multi-objective problems~\cite{cvl2002a,deb2001a}. Evolutionary algorithms work with a set of solutions called population which is evolved over time by applying crossover and mutation operators to produce new possible solutions for the underlying multi-objective problem. 
Due to this population-based approach, they  are in a natural way well-suited for 
dealing with multi-ob\-ject\-ive optimization problems.

A major problem when dealing with multi-objective optimization problems is that the number of different trade-offs may be too large. This implies that not \emph{all} trade-offs can be computed efficiently, \ie, in polynomial time.  In the discrete case the 
Pareto front may grow exponentially with respect to the problem size  and may be even infinite in the continuous case. 
In such a case, 
it is not possible to compute the whole Pareto front efficiently and the goal is to 
compute a good approximation consisting of a not too large set of Pareto-optimal solutions.
It has been observed empirically that MOEAs are able to obtain good approximations for a wide range of multi-objective optimization problems.

The aim of this paper is to contribute to the theoretical understanding of MOEAs in particular with respect to their approximation behavior.
Many researchers have worked on how to use evolutionary algorithms for multi-ob\-ject\-ive optimization problems and 
how to find solutions being close to the Pareto front and covering all parts of the Pareto front.
However, often the optimization goal remains rather unclear as it is not stated explicitly how to measure the quality of an approximation that a proposed algorithm should achieve. 

One popular approach to achieve the mentioned objectives is to use the hypervolume indicator \cite{ZitzlerT99} for 
measuring the quality of a population. This approach has gained increasing interest in recent years (see \eg\ \cite{BeumeNE07,IgelHR07,KnowlesCorne03,ZitzlerBT07}). The hypervolume indicator implicitly defines an 
optimization goal for the population of an evolutionary algorithm.
Unfortunately, this optimization goal is rarely understood from a theoretical point of view. Recently, it has been shown in \cite{Augetal2009} that the slope of the front determines which objective 
vectors maximize the value of the hypervolume when dealing with continuous 
Pareto fronts. The aim of this paper is to further increase the theoretical 
understanding of the hypervolume indicator and examine its approximation 
behavior.

As multi-ob\-ject\-ive optimization problems often involve a vast number of Pareto-optimal 
objective vectors, multi-ob\-ject\-ive evolutionary algorithms use a population 
of fixed size and try to evolve the population into a good 
approximation of the Pareto front. However, often it is not stated explicitly 
what a good approximation for a given problem is. One approach that allows a 
rigorous evaluation of the approximation quality is to measure the quality of a 
solution set with respect to its approximation ratio~\cite{PapadimitriouY00}.
We follow this approach and examine the approximation ratio
of a population with respect to all objective vectors of the 
Pareto front.

The advantage of the approximation ratio is that it gives a meaningful scalar value
which allows us to compare the quality of solutions between different
functions, different population sizes, and even different dimensions.
This is not the case for the hypervolume indicator.  A specific dominated volume
does not give a priori any information how well a front is approximated.
Also, the hypervolume measures the space relative to an arbitrary reference
point (cf.~\secref{hyp}).  This (often unwanted) freedom of choice not only
changes the distribution
of the points, but also makes the hypervolumes of different solutions measured
relative to a (typically dynamically changing) reference point very hard to compare.

Our aim is to examine whether a given solution set of $\mu$ search points maximizing 
the hypervolume (called the optimal hypervolume distribution) gives a good approximation measured with respect to the approximation ratio. 
We do this by investigating two classes of objective functions having two objectives each and analyze the optimal distribution for the hypervolume indicator and the one achieving the optimal approximation ratio.

In a first step, we assume that both sets of $\mu$ points have to include both optimal points regarding the given two single objective functions. 
We point out situations where maximizing the hypervolume provably 
leads to the best approximation ratio achievable by choosing $\mu$ 
Pareto-optimal solutions.
After these theoretical investigations, we carry out numerical investigations to see 
how the shape of the Pareto front influences the approximation behavior of the 
hypervolume indicator and point out where the approximation given by 
the hypervolume differs from the best one achievable by a solution set of 
$\mu$ points. These initial theoretical and experimental results investigating the correlation between the hypervolume indicator and multiplicative approximations have been published as a conference version in~\cite{HypGECCO2009}.

This paper extends its conference version in \secref{depref} to the case where the optimal hypervolume distribution is dependent on the chosen reference point. 
The reference point is a crucial parameter when applying hypervolume-based algorithms. It determines the area in the objective space where the algorithm focuses its search. As the hypervolume indicator itself, it is hard to understand the impact of the choice of the reference point. Different studies have been carried out on this topic and initial results on the optimal hypervolume distribution in the dependence of the reference point have been obtained in \cite{Augetal2009} and \cite{Brockhoff2010}.
We provide new insights into how the choice of the reference point may affect the approximation behavior of hypervolume-based algorithms. In our studies, we relate the optimal hypervolume distribution with respect to a given reference to the optimal approximation ratio obtainable when having the freedom to choose the $\mu$ points arbitrarily.

The rest of the paper is structured as follows. In Section~\ref{sec2}, we introduce the hypervolume indicator and our notation of approximations. Section~\ref{sec3} gives analytic results for the approximation achievable by the hypervolume indicator under the assumption that both extreme points have to be included in the two distributions and reports on our numerical investigations  Pareto fronts having different shapes. In \secref{depref}, we generalize our results and study the impact of the reference point on the optimal hypervolume distribution and relate this choice to the best possible overall approximation ratio when choosing $\mu$~points. Finally, we finish with some concluding remarks.

%---------------------------------------------------------------------------------
%---------------------------------------------------------------------------------

\section{The Hypervolume Indicator and Multiplicative Approximations}\label{sec2}

In this paper, we consider bi-objective maximization problems $P \colon \mathcal{S} \to \R^{2}$
for an arbitrary decision space $\mathcal{S}$.
We are interested in the so-called Pareto front of $P$,
which consists of all maximal elements of $P(\mathcal{S})$ with respect to the weak Pareto dominance relation.
We restrict ourselves to problems with a Pareto front that can be written as
$\{(x,f(x)) \mid x \in [x_{\min},x_{\max}]\}$ where
$f \colon [x_{\min},x_{\max}] \to \R$ is a continuous, differentiable, and strictly monotonically decreasing function.
This allows us to denote with $f$ not only the actual function $f \colon [x_{\min},x_{\max}] \to \R$,
but also the front $\{(x,f(x)) \mid x \in [x_{\min},x_{\max}]\}$ itself.
We assume further that $x_{\min}>0$ and $f(x_{\max})>0$ hold.

We intend to find a solution set $X^* = \{x^*_{1},x^*_{2},\dots,x^*_{\mu}\}$ of
$\mu$ Pareto-optimal search points $(x^*_{i},f(x^*_{i}))$ that constitutes a good approximation of the front $f$.

\subsection{Hypervolume indicator}
\label{sec:hyp}

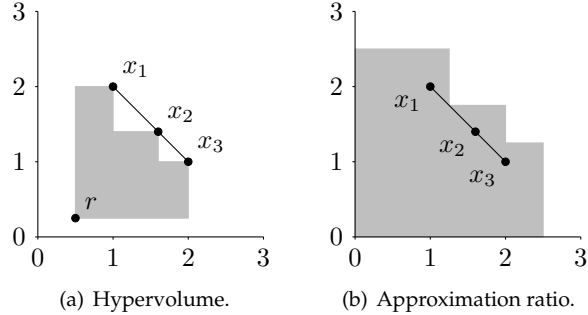
\begin{figure}[tb]
	\centering
	\subfigure[Hypervolume.]{
		\begin{tikzpicture}
			\tikzstyle{point}=[circle,inner sep=1,outer sep=0,draw,fill];
			
			\draw [draw=gray!50,fill=gray!50] (0.5,0.25) -- (0.5,2) -- (1,2) -- (1,1.4) -- (1.6,1.4) -- (1.6,1) -- (2,1) -- (2,0.25) -- cycle;
			
			\path (0.5,0.25) node (R)  [point] {} node [above right] {$r$};
			\path (1,2)      node (X1) [point] {} node [above right] {$x_{1}$};
			\path (1.6,1.4)  node (X2) [point] {} node [above right] {$x_{2}$};
			\path (2,1)      node (X3) [point] {} node [above right] {$x_{3}$};
			
			\draw (1,2) -- (2,1);
			
			\draw (0,0) -- (3,0);
			\foreach \x in {0,1,2,3}
				\draw (\x cm,0) -- (\x cm,-1pt) node [anchor=north] {$\x$};
			\draw (0,0) -- (0,3);
			\foreach \y in {0,1,2,3}
				\draw (0,\y cm) -- (-1pt,\y cm) node [anchor=east] {$\y$};
		\end{tikzpicture}
		\label{fig:sketch1}
	}
	\hspace*{.01\textwidth}
	\subfigure[Approximation ratio.]{
		\begin{tikzpicture}
			\tikzstyle{point}=[circle,inner sep=1,outer sep=0,draw,fill];
			
			\draw [draw=gray!50,fill=gray!50] (0,0) -- (0,2.5) -- (1.25,2.5) -- (1.25,1.75) -- (2,1.75) -- (2,1.25) -- (2.5,1.25) -- (2.5,0) -- cycle;
			
			\path (1,2)      node (X1) [point] {} node [below left] {$x_{1}$};
			\path (1.6,1.4)  node (X2) [point] {} node [below left] {$x_{2}$};
			\path (2,1)      node (X3) [point] {} node [below left] {$x_{3}$};
			
			\draw (1,2) -- (2,1);
			
			\draw (0,0) -- (3,0);
			\foreach \x in {0,1,2,3}
				\draw (\x cm,0) -- (\x cm,-1pt) node [anchor=north] {$\x$};
			\draw (0,0) -- (0,3);
			\foreach \y in {0,1,2,3}
				\draw (0,\y cm) -- (-1pt,\y cm) node [anchor=east] {$\y$};
		\end{tikzpicture}
		\label{fig:sketch2}
	}
	\caption{Point distribution $X=\{1,1.6,2\}$ for the linear front $f \colon [1,2] \to [1,2]$ with $f(x)=3-x$, which achieves a hypervolume of $\HYP(X)=1.865$ with respect to the reference point $r=(0.5,0.25)$ and an approximation ratio of $\APP{X}=1.25$. The shaded areas show the dominated portion of the objective space and the approximated portion of the objective space, respectively.}
	\label{fig:sketch}
\end{figure}

The hypervolume ($\HYP$) measures the volume of the dominated portion of the objective space.
It was first introduced for performance assessment
in multi-ob\-ject\-ive optimization by Zitzler and Thiele~\cite{ZitzlerT99}.
Later on it was used to guide the search in various hypervolume-based evolutionary
optimizers~\cite{BeumeNE07,EmmerichBN05,IgelHR07,KnowlesCF2003,ZitzlerBT07,ZitzlerK04}.

Geometrically speaking, the hypervolume indicator measures the volume
of the dominated space of all solutions contained in a solution set $X \subseteq \R^{d}$.
This space is truncated at a fixed footpoint called the \emph{reference point $r=(r_1,r_2,\dots,r_d)$}.
The \emph{hypervolume $\HYP_r(Y)$ of a solution set $Y$} in dependence of a given reference point $r=(r_1, r_2, \dots, r_d)$ is then defined as
\begin{equation*}
	\HYP_r(Y) := \VolBigg{\bigcup_{(y_1,\dots,y_d) \in Y} [r_1,y_1] \times \dots \times [r_d,y_d]}
\end{equation*}
with $\Vol{\cdot}$ being the usual Lebesgue measure (see \figref{sketch1} for an illustration).

The hypervolume indicator is a popular
second-level sorting criterion in many recent multi-objective
evolutionary algorithms for several reasons.
Besides having a very intuitive interpretation, 
it is also the only common indicator that is strictly Pareto-compliant~\cite{ZitzlerTLFF03}.
Strictly Pareto-compliant means that given two solution sets $A$ and~$B$ the indicator
values $A$ higher than $B$ if the solution set $A$ dominates the solution set $B$.
It has further been shown by \citet{approxjournal}
that the worst-case approximation factor of all possible Pareto fronts
obtained by any hypervolume-optimal set of fixed
size $\mu$ is asymptotically equal
to the best worst-case approximation factor achievable by any set of size $\mu$.
%namely $\Theta(1/\mu)$ for additive approximation
%and $1+\Theta(1/\mu)$ for multiplicative approximation.

In the last years,
the hypervolume has become very popular and several algorithms have been developed to calculate it.
The first one was the Hypervolume by Slicing Objectives (HSO) algorithm, which
was suggested independently by Zitzler~\cite{Zitzler01} and Knowles~\cite{Knowles02}.
For $d\leq3$ it can be solved in (asymptotically optimal)
time $\Oh{n\log n}$~\cite{FonsecaPLI05}.
The currently best asymptotic runtime for $d\in\{4,5,6\}$
is $\Oh{n^{(d-1)/2}\,\log n}$~\cite{YildizSuri12}.
The best known bound for
large dimensions $d\geq 7$
is $\Oh{n^{(d+2)/3}}$~\cite{Bringmann12}.

On the other hand,
Bringmann and Friedrich~\cite{BF08CGTA} proved that 
all hypervolume algorithms
must have a superpolynomial runtime in the number of objectives (unless $\P=\NP$).
Assuming the widely accepted exponential time hypothesis,
the runtime must even be at least $n^{\Omega(d)}$~\cite{2013GECCO}.
As this dashes the hope for fast and exact hypervolume algorithms,
there are several estimation algorithms~\cite{BaderDZ08TR,BF08CGTA,BF09b}
for approximating the hypervolume based on Monte Carlo sampling.

\subsection{Approximations}

In the following, we define our notion of approximation in a formal way.
Let $X = \{x_{1},\dots,x_{\mu}\}$ be a solution set and $f$ a function that describes the Pareto front.
We call a Pareto front convex if the function defining the Pareto front is a convex function.
Otherwise, we call the Pareto front concave. Note that this differs from the notation used in~\cite{HypGECCO2009}.
%the objective vectors dominated by the objective vectors of $f$ form a convex set. We call a Pareto front concave if the set dominating the Pareto front is a convex set.

%\NOTE{F}{added sentences for convex, concave, please check, the problem remains that we name a set convex iff the function is concave and vice versa}

The \emph{approximation ratio $\APP{X}$ of a solution set $X$} with respect to $f$ is defined according to~\cite{PapadimitriouY00} as follows.

\begin{definition}\label{def:ratio}
Let $f \colon [x_{\min},x_{\max}] \to \R$ and $X = \{x_{1},\linebreak[0]x_{2},\linebreak[0]\dots,\linebreak[0]x_{\mu}\}$.
The solution set $X$ is a \emph{$\ratio$-ap\-proxi\-ma\-tion of $f$} iff for each $x \in [x_{\min},x_{\max}]$ there is an $x_{i} \in X$ with
\begin{equation*}
	x \leq \ratio \cdot x_i ~~\text{and}~~ f(x) \leq \ratio \cdot f(x_i)
\end{equation*}
where $\ratio \in \R$, $\ratio \geq 1$.
The \emph{approximation ratio of $X$} with respect to $f$ is defined as
\begin{equation*}
	\APP{X} := \min\{\ratio \in \R \mid \text{$X$ is a $\ratio$-approximation of $f$}\}.
\end{equation*}
\end{definition}

\figref{sketch2} shows the area of the objective space that a certain solution set $X$ $\ratio$-approximates for $\ratio = 1.25$.
Note that this area covers the entire Pareto front $f$.
Since the objective vector $(1.25,1.75)$ is not $\ratio$-approximated for all $\ratio < 1.25$, the approximation ratio of $X$ is $1.25$.

Our definition of approximation is similar to the definition of multiplicative $\ee$-dominance given in \cite{Lauetal2002}. In this paper, an algorithmic framework for discrete multi-objective optimization is proposed which converges to a $(1+\ee)$-approximation of the Pareto front.

%---------------------------------------------------------------------------------
%---------------------------------------------------------------------------------

\section{Results independent of the reference point}\label{sec3}

%with both extremes included

The goal of this paper is to relate the above definition of approximation to the optimization goal implicitly defined by the hypervolume indicator. 
Using the hypervolume, the choice of the reference point decides which parts of the front are covered. 
In this section we avoid the additional influence of the reference point 
by considering only solutions where both extreme points have to be included.
The influence of the reference point is studied in \secref{depref}.

All the functions that we consider in this paper have positive and bounded domains and codomains. Furthermore, the functions that are under consideration don't have infinite or zero derivative at the extremes.
Hence, choosing the reference point $r = (r_1, r_2)$ for appropriate $r_1,r_2\leq 0$ ensures that the points
$x_{\min}$ and $x_{\max}$ are contained in an optimal hypervolume distribution.
A detailed calculation on how to choose the reference point such that $x_{\min}$ and $x_{\max}$ are contained in an optimal hypervolume distribution is given in~\cite{Augetal2009}.
Assuming that $x_{\min}$ and $x_{\max}$ have to be included in the optimal hypervolume distribution, the value of the volume is in this section independent of the choice of the reference point. Therefore, we write  $\HYP(X)$ instead of $\HYP_r(X)$.

%To allow a fair comparison, we require in Section~\ref{sec3} and~\ref{sec4}  that the set minimizing the approximation ratio contains $x_{\min}$ and $x_{\max}$.

Consider a Pareto front $f$.  There is an infinite number of possible solution sets of fixed size $\mu$.
To make this more formal, let $\cX(\mu,f)$ be the set of all subsets of 
\[
	\left\{(x,f(x)) \mid x \in [x_{\min},x_{\max}]\right\}
\]
of cardinality $\mu$
which contain
$(x_{\min},f(x_{\min}))$ and $(x_{\max},f(x_{\max}))$.  We want to compare two specific solution sets from $\cX$
called \emph{optimal hypervolume distribution} and \emph{optimal approximation distribution}
defined as follows.

\begin{definition}\label{def:OPT}
    The optimal hypervolume distribution
    \begin{equation*}
    	\OPTHYP{\mu}{f} := \argmax_{X\in\cX(\mu,f)}{\HYP(X)}
    \end{equation*}
    consists of $\mu$ points that maximize the hypervolume with respect to $f$.
    The optimal approximation distribution
    \begin{equation*}
    	\OPTAPP{\mu}{f} := \argmin_{X\in\cX(\mu,f)}{\APP{X}}
    \end{equation*}
    consists of $\mu$ points that minimize the approximation ratio with respect to $f$.
    For brevity, we will
    also use $\OPT{\mu}{f}$ in Figures~\ref{fig:x3}--\ref{fig:p} as a short form to refer to both sets \OPTHYP{\mu}{f} and \OPTAPP{\mu}{f}.
\end{definition}

Note that ``optimal hypervolume distributions'' are also called
``optimal $\mu$\nobreakdash-distributions''~\cite{Augetal2009,Brockhoff2010}
or ``maximum hypervolume set''~\cite{approxjournal} in the literature.

We want to investigate the approximation ratio obtained by a solution set
maximizing the hypervolume indicator in comparison to an optimal one.
For this, we first examine conditions for an optimal approximation distribution $\OPTAPP{\mu}{f}$.
Later on, we consider two classes of functions $f$ on which 
the optimal hypervolume distribution $\OPTHYP{\mu}{f}$ is equivalent to the
optimal approximation distribution $\OPTAPP{\mu}{f}$ and therefore
provably leads to the best achievable approximation ratio.

%---------------------------------------------------------------------------------
%---------------------------------------------------------------------------------

\subsection{Optimal approximations}

We now consider the optimal approximation ratio that can be achieved placing 
$\mu$ points on the Pareto front given by the function $f$. The following lemma 
states a condition which allows to check whether a given set consisting of $\mu$ 
points achieves an optimal approximation ratio for a given function~$f$.

\begin{lemma}\label{lem:optapprox}
    Let $f \colon [x_{\min},x_{\max}] \to \R$ be a Pareto front
    and $X = \{x_1,\dots,x_\mu\}$ be an arbitrary solution set
    with $x_{1}=x_{\min}$, $x_{\mu}=x_{\max}$, and $x_i \leq x_{i+1}$ for all $1\leq i<\mu$.
    If there is a constant $\ratio > 1$ and a set $=Z=\{z_{1},\dots, z_{\mu-1}\}$
    with $x_{i} \leq z_{i} \leq x_{i+1}$
    and $\ratio = \frac{z_i}{x_i} = \frac{f(z_i)}{f(x_{i+1})}$ for all $1 \leq i < \mu$,
    then $X=\OPTAPP{\mu}{f}$ is the optimal approximation distribution
    with approximation ratio $\ratio$.
\end{lemma}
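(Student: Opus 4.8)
The plan is to prove the lemma in three stages: that $X$ is a $\ratio$-approximation, that it is no better than a $\ratio$-approximation (so $\APP{X}=\ratio$), and finally that no admissible set beats $\ratio$.

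First I would establish $\APP{X}\le\ratio$ by showing that each interval $[x_i,x_{i+1}]$ is $\ratio$-covered by its two endpoints, split at the crossover $z_i$. For $x\in[x_i,z_i]$ I use $x_i$: the first coordinate is fine since $x\le z_i=\ratio x_i$, and the second is automatic because $f$ is decreasing and $\ratio\ge 1$, whence $f(x)\le f(x_i)\le\ratio f(x_i)$. For $x\in[z_i,x_{i+1}]$ I use $x_{i+1}$ symmetrically: the first coordinate is free since $x\le x_{i+1}\le\ratio x_{i+1}$, and the second is tight because $f(x)\le f(z_i)=\ratio f(x_{i+1})$. As $x_1=x_{\min}$ and $x_\mu=x_{\max}$, this covers all of $[x_{\min},x_{\max}]$. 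For the matching lower bound I would argue that the crossover $z_i$ itself forces the ratio: since the points are sorted, any $x_j$ with $j\le i$ satisfies $x_j\le x_i$ and hence needs ratio $z_i/x_j\ge z_i/x_i=\ratio$ in the first coordinate, while any $x_j$ with $j\ge i+1$ satisfies $f(x_j)\le f(x_{i+1})$ and hence needs ratio $f(z_i)/f(x_j)\ge f(z_i)/f(x_{i+1})=\ratio$ in the second. Thus $z_i$ is not $\ratio'$-approximated for any $\ratio'<\ratio$, giving $\APP{X}=\ratio$.

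The crux is optimality. The key observation is that the hypothesis makes $X$ the greedy ``reach as far right as possible'' configuration for ratio $\ratio$: eliminating $z_i$ from $z_i=\ratio x_i$ and $f(z_i)=\ratio f(x_{i+1})$ yields $x_{i+1}=g_\ratio(x_i)$, where $g_{\ratio'}(x):=f^{-1}\big(f(\ratio' x)/\ratio'\big)$. For an arbitrary competitor $X'\in\cX(\mu,f)$ with ordered points and ratio $\ratio'$ I would first derive the coverage recursion $x'_{i+1}\le g_{\ratio'}(x'_i)$. The point is that the only candidates able to $\ratio'$-cover a point strictly between $x'_i$ and $x'_{i+1}$ are $x'_i$ (via the first coordinate, as it is the largest point to the left) and $x'_{i+1}$ (via the second coordinate, as it is the smallest point to the right); hence the two half-intervals they cover must meet, which is precisely $f^{-1}(\ratio' f(x'_{i+1}))\le\ratio' x'_i$, i.e.\ $x'_{i+1}\le g_{\ratio'}(x'_i)$. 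Using that $g_{\ratio'}$ is increasing in its argument, an induction from $x'_1=x_{\min}=x_1$ then gives $x'_i\le g_{\ratio'}^{(i-1)}(x_{\min})$ for all $i$.

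The final step is a monotonicity comparison in the ratio parameter. Since $f$ is strictly decreasing and invertible, $g_{\ratio'}(x)$ is strictly increasing in $\ratio'$ (increasing $\ratio'$ increases $\ratio' x$, so $f(\ratio' x)$ decreases, dividing by the larger $\ratio'$ decreases it further, and the decreasing $f^{-1}$ turns this into an increase). Iterating, $g_{\ratio'}^{(\mu-1)}(x_{\min})<g_{\ratio}^{(\mu-1)}(x_{\min})=x_\mu=x_{\max}$ for every $\ratio'<\ratio$. Combined with $x'_\mu\le g_{\ratio'}^{(\mu-1)}(x_{\min})$ this forces $x_{\max}=x'_\mu<x_{\max}$, a contradiction; hence $\APP{X'}\ge\ratio$ for all admissible $X'$, so $X=\OPTAPP{\mu}{f}$ with ratio $\ratio$. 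I expect this third stage to be the main obstacle: establishing that the critical point between two consecutive solution points can only be served by its immediate neighbors (so the greedy recursion is valid), and verifying the double monotonicity of $g_{\ratio'}$ in both $x$ and $\ratio'$, both rely essentially on the strict monotonicity, continuity, and invertibility of $f$ (continuity also guaranteeing that the iterates $g_{\ratio'}^{(i-1)}(x_{\min})$ stay within the domain for $\ratio'<\ratio$).
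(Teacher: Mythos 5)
Your proof is correct, but it takes a genuinely different route from the paper's. The paper argues by contradiction via an interleaving argument centered on the worst-approximated points $z_i$: if a competitor $X'$ approximated some $z_i$ strictly better than $\ratio$, it would have to place a point strictly inside the gap $(x_i,x_{i+1})$, and iterating this gap by gap toward one extreme shows $X'$ would need a point strictly inside every intermediate gap, until it either fails to $\ratio$-approximate some $z_{i-s}$ or violates $x'_1=x_{\min}$ (symmetrically, $x'_\mu=x_{\max}$). You instead package the covering condition into the reach function $g_{\ratio'}(x)=f^{-1}\bigl(f(\ratio' x)/\ratio'\bigr)$, derive the per-gap recursion $x'_{i+1}\le g_{\ratio'}(x'_i)$, and conclude from the double monotonicity of $g$ (increasing in $x$, strictly increasing in $\ratio'$) that any competitor with ratio $\ratio'<\ratio$ falls strictly short of $x_{\max}$ after $\mu-1$ steps. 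Both proofs rest on the same intuition---with a smaller ratio, $\mu$ points cannot stretch from $x_{\min}$ to $x_{\max}$---but yours is more quantitative and in places more careful than the paper's: you prove the achievability bound $\APP{X}\le\ratio$ explicitly (the paper takes it for granted), you avoid the paper's somewhat informal ``\wlo $j\le i+1$'' index bookkeeping, and your recursion isolates exactly where strict monotonicity, continuity, and invertibility of $f$ are used. The one detail you should still write out is the well-definedness of $g_{\ratio'}$ along the iteration (that $\ratio' x'_i\le x_{\max}$ and that $f(\ratio' x'_i)/\ratio'$ lies in the range of $f$): this is not automatic in general, but follows by strengthening the induction to carry the comparison $x'_i< x_i$ (for $i\ge 2$), since then $\ratio' x'_i<\ratio x_i=z_i\le x_{\max}$ and $f(\ratio' x'_i)/\ratio'>f(z_i)/\ratio=f(x_{i+1})\ge f(x_{\max})$, so the iterates never leave the domain; with that addition the argument is complete.
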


\begin{proof}
We assume that a better approximation ratio than $\ratio$ can be achieved by choosing a different set of solutions $X' = \{x_1',\dots,x_\mu'\}$ with $x_1'=x_{\min}$, $x_{\mu}'=x_{\max}$, and $x_i' \leq x_{i+1}'$, $1 \leq i < \mu$, and show a contradiction.

The points $z_i$, $1 \leq i \leq \mu-1$, are the points that are worst approximated by the set $X$. Each $z_i$ is approximated by a factor of $\delta$. Hence, in order to obtain a better approximation than the one achieved by the set $X$, the points $z_i$ have to be approximated within a ratio of less than $\ratio$.
We now assume that there is a point $z_i$ for which a better approximation is achieved by the set $X'$.
Getting a better approximation of $z_i$ than $\ratio$ means that there is at least one point $x_j' \in X'$ with $x_i< x_j' < x_{i+1}$ as otherwise $z_i$ is approximated within a ratio of at least  $\frac{z_i}{x_i} = \frac{f(z_i)}{f(x_{i+1})} = \ratio$.

We assume \wlo\ that $j \leq i+1$ and show that there is at least one point $z$ with $z \leq z_i$ that is not approximated by a factor of $\ratio$ or that $x_1' > x_{\min}$ holds.
To approximate all points $z$ with $x_{i-1} \leq z \leq x_j'$ by a factor of $\ratio$, the inequality $x_{i-1} < x_{j-1}' < x_i$ has to hold as otherwise $z_{i-1}$ is approximated within a ratio of more than $\ratio$ by $X'$.
We iterate the arguments.  In order to approximate all points in $x_{i-s} \leq y \leq x_{i-s+1}$, $x_{i-s} < x_{j-s}' < x_{i-s+1}$ has to hold as otherwise $z_{i-s}$ is not approximated within a ratio of $\ratio$ by $X'$.
Considering $s=j-1$ either one of the points $z$, $x_{i-j+1} \leq y \leq x_{i-j+2}$ is not approximated within a ratio of $\ratio$ by $X'$ or $x_{\min}=x_1 \leq x_{i-j+1} < x_1'$ holds, which contradicts the assumption that $X'$ includes $x_{\min}$ and constitutes an approximation better than $\ratio$.

The case $j > i+1$ can be handled symmetrically, by showing that either $x_n' < x_{\max}$ or there is a point $z \geq z_{i+1}$ that is not approximated within a ratio of $\ratio$ by $X'$. This completes the proof.
\end{proof}

We will use this lemma in the rest of the paper to check whether an approximation obtained by the hypervolume indicator is optimal as well as use these ideas to identify sets of points that achieve an optimal approximation ratio.

%---------------------------------------------------------------------------------
%---------------------------------------------------------------------------------

\subsection{Analytic results for linear fronts}

\begin{figure}[tb]
    \centering
    \subfigure[Linear front.]{
%\pgfkeys{/pgf/number format/.cd,fixed,precision=1}
\begin{tikzpicture}
\node (full) at (0,0) [inner sep=0pt,above right]
            {\includegraphics[width=\distfigwidth,viewport=0.42in 0.36in 7.5in 7.5in
            ,clip]{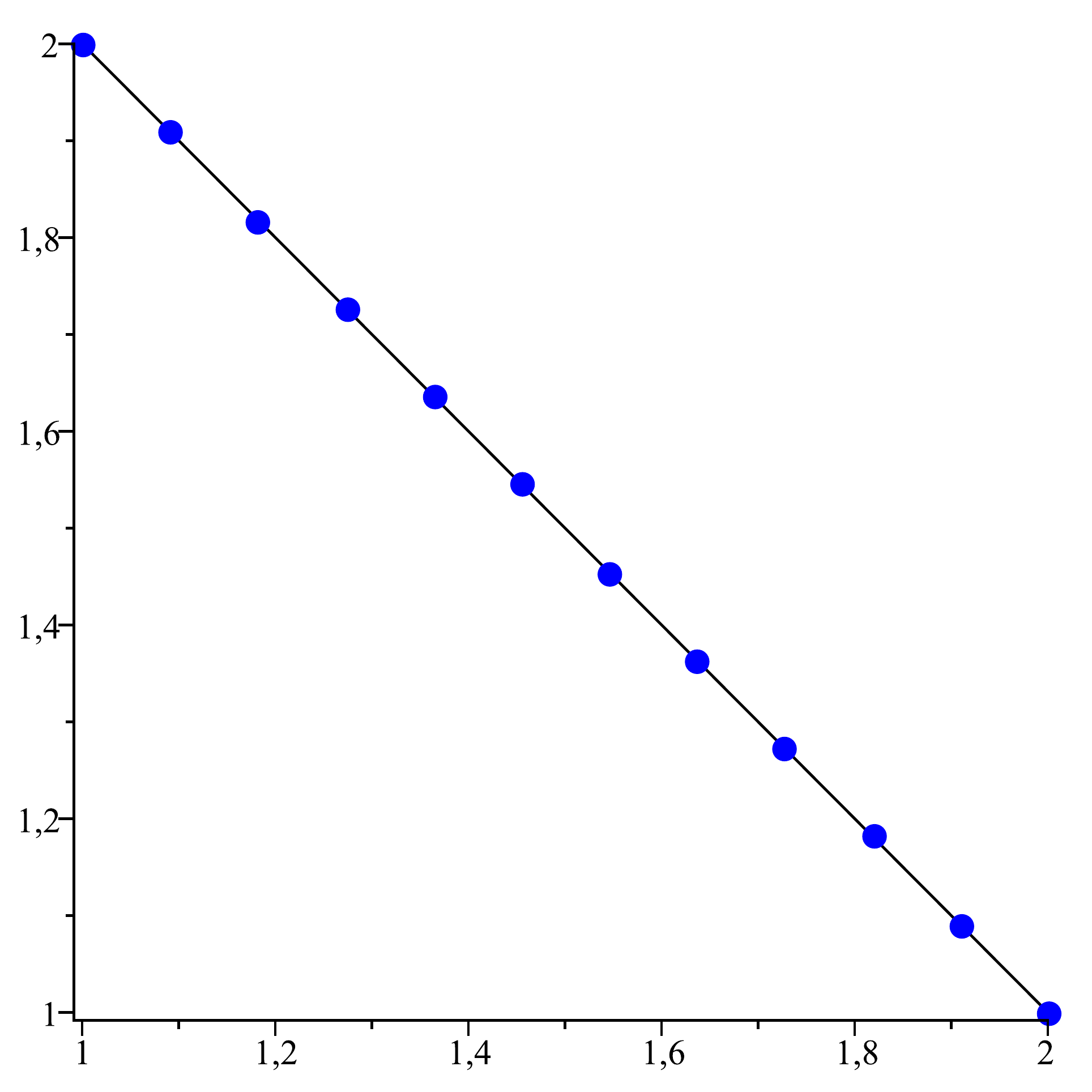}}; 
\foreach \x in {1,1.2,...,2}{
\pgfmathparse{0.08+2.71*(\x-1)}
\node at (\pgfmathresult,.1) [below] {\footnotesize\pgfmathprintnumber{\x}};
}
\foreach \y in {1,1.2,...,2}{
\pgfmathparse{0.08+2.71*(\y-1)}
\node at (.1,\pgfmathresult) [left] {\footnotesize\pgfmathprintnumber{\y}};
}
\end{tikzpicture}
}
    \subfigure[Convex front, $c=2$.]{
\begin{tikzpicture}
\node (full) at (0,0) [inner sep=0pt,above right]
            {\includegraphics[width=\distfigwidth,viewport=0.42in 0.36in 7.5in 7.5in
            ,clip]{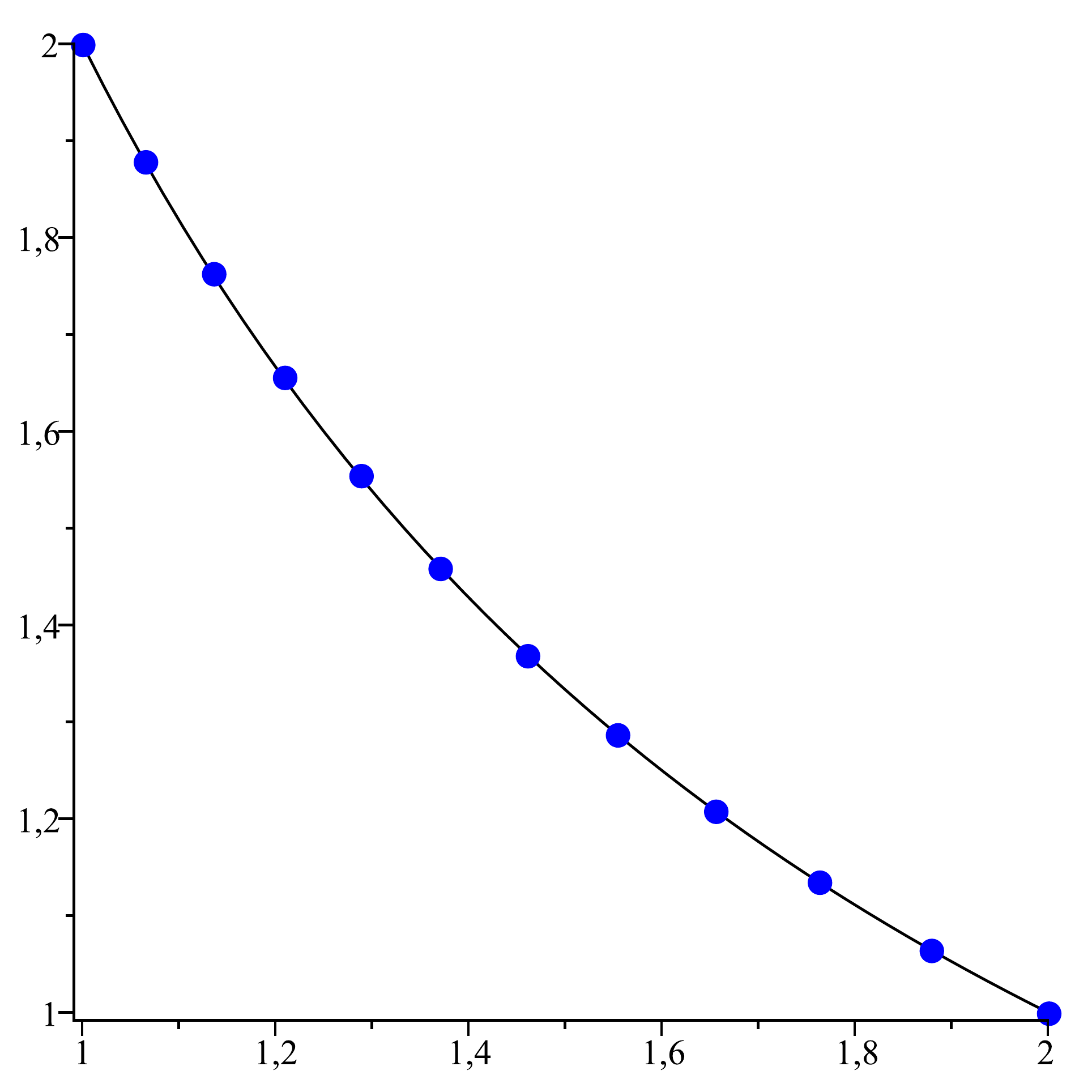}}; 
\foreach \x in {1,1.2,...,2}{
\pgfmathparse{0.08+2.71*(\x-1)}
\node at (\pgfmathresult,.1) [below] {\footnotesize\pgfmathprintnumber{\x}};
}
\foreach \y in {1,1.2,...,2}{
\pgfmathparse{0.08+2.71*(\y-1)}
\node at (.1,\pgfmathresult) [left] {\footnotesize\pgfmathprintnumber{\y}};
}
\end{tikzpicture}
        \label{fig:con2}
    }
    \hspace*{.01\textwidth}
    \subfigure[Convex front, $c=200$.]{
\begin{tikzpicture}
\node (full) at (0,0) [inner sep=0pt,above right]
            {\includegraphics[width=\distfigwidth,viewport=0.484in 0.36in 7.55in 7.5in
            ,clip]{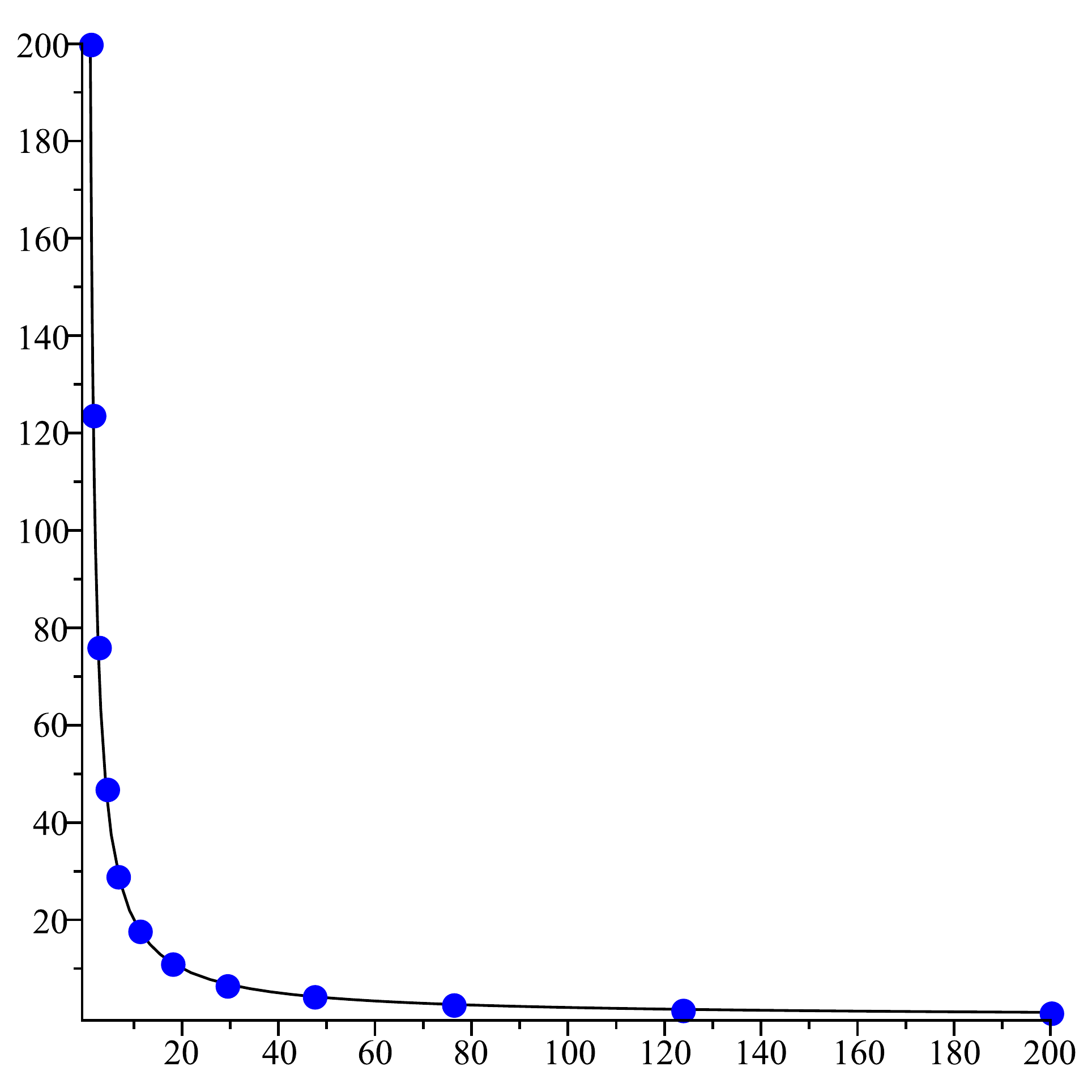}}; 
\foreach \x in {1,40,80,...,200}{
\pgfmathparse{0.08+0.01352*(\x-1)}
\node at (\pgfmathresult,.1) [below] {\footnotesize\pgfmathprintnumber{\x}};
}
\foreach \y in {1,40,80,...,200}{
\pgfmathparse{0.08+0.01367*(\y-1)}
\node at (.1,\pgfmathresult) [left] {\footnotesize\pgfmathprintnumber{\y}};
}
\end{tikzpicture}
        \label{fig:con200}
    }
    \caption{Optimal point distribution $\OPTHYP{12}{f}=\OPTAPP{12}{f}$ for 
             (a) the linear front
            $f \colon [1,2] \to [1,2]$ with  $f(x)=3-x$
            and
            (b,c) the convex fronts
            $f \colon [1,c] \to [1,c]$ with $f(x) = c/x$.
The respective optimal hypervolume distributions and optimal approximation distributions are equivalent in all three cases.}
    \label{fig:con}
    \label{fig:lin}
\end{figure}

The distribution of points maximizing the hypervolume for linear fronts has already been investigated in \cite{Augetal2009,EmmerichDB07}. Therefore, we start by considering the hypervolume indicator with respect to the approximation it 
achieves when the Pareto front is given by a linear function
\begin{equation*}
	f \colon [1,(1-d)/c] \to [1,c+d] ~~\text{with}~~ f(x) = c \cdot x + d
\end{equation*}
where $c < 0$ and $d > 1 - c$ are arbitrary constants.

Auger et al.~\cite{Augetal2009} and Emmerich et al.~\cite{EmmerichDB07} have shown that the maximum hypervolume of $\mu$ 
points on a linear front is reached when the points are distributed in an 
equally spaced manner. We assume that the reference point is 
chosen such that the extreme points of the Pareto front are included in the 
optimal distribution of the $\mu$ points on the Pareto front, that is, $x_1 = 
x_{\min} = 1$ and $x_{\mu} = x_{\max} = (1-d)/c$ hold. The maximal hypervolume 
is achieved by choosing
%\NOTE{T.}{Should we put Eq.~\eqref{eqn:hyp} as own Theorem like \thmref{optlinr} in Section~IV?}
\begin{align}\label{eqn:hyp}
	x_i &= x_{\min} + \frac{i-1}{\mu-1} \cdot \left(x_{\max} - x_{\min}\right)\nonumber\\
	&= 1 + \frac{i-1}{\mu-1} \cdot \left(\frac{1-d}{c} - 1\right)
\end{align}
due to Theorem~6 in \cite{Augetal2009}.

The following theorem shows that the optimal approximation distribution coincides with the optimal hypervolume distribution.

\begin{theorem}\label{thm:lin}
	Let $f \colon [1,(1-d)/c] \to [1,c+d]$ 
	be a linear function $f(x) = c \cdot x + d$ where $c < 0$ and $d > 1 - c$ are arbitrary constants.
	Then
	\begin{equation*}
		\OPTHYP{\mu}{f} = \OPTAPP{\mu}{f}.
	\end{equation*}
\end{theorem}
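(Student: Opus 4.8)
The plan is to apply \lemref{optapprox} directly to the equally-spaced distribution. Theorem~6 of \cite{Augetal2009} already identifies $\OPTHYP{\mu}{f}$ as the set $X=\{x_1,\dots,x_\mu\}$ given by \eqref{eqn:hyp}, whose points have constant spacing $h := x_{i+1}-x_i = \frac{1}{\mu-1}\bigl(\frac{1-d}{c}-1\bigr)>0$ (positivity following from $x_{\max}>x_{\min}$, equivalently from the hypothesis $d>1-c$). It therefore suffices to check that this same $X$ meets the hypotheses of \lemref{optapprox}; the lemma then yields $X=\OPTAPP{\mu}{f}$, and together with $X=\OPTHYP{\mu}{f}$ this gives the asserted equality.

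First I would identify, for each interval $[x_i,x_{i+1}]$, the point $z_i$ that is worst approximated by $X$. For $(z,f(z))$ with $x_i\le z\le x_{i+1}$, the left neighbour $x_i$ approximates it with ratio $z/x_i$ (the $f$-coordinate constraint is automatically satisfied because $f$ is strictly decreasing), while the right neighbour $x_{i+1}$ approximates it with ratio $f(z)/f(x_{i+1})$. As $z$ runs from $x_i$ to $x_{i+1}$ the first quantity increases from $1$ and the second decreases to $1$, so the best-achievable ratio $\min\{z/x_i,\,f(z)/f(x_{i+1})\}$ is maximized at the unique crossing point $z_i$ satisfying $z_i/x_i = f(z_i)/f(x_{i+1})$ --- precisely the relation demanded by the lemma.

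The key step is to compute this common ratio and show it is independent of $i$. Writing $\ratio_i = z_i/x_i$ and substituting $f(z)=cz+d$ into $\ratio_i = f(z_i)/f(x_{i+1})$, clearing denominators gives $\ratio_i\bigl(f(x_{i+1})-c\,x_i\bigr)=d$, hence $\ratio_i = \frac{d}{c(x_{i+1}-x_i)+d} = \frac{d}{ch+d}$. Because the spacing $h$ is constant, this value is the same for every $i$; this is exactly where both the linearity of $f$ and the equal spacing are used. Since $c<0$ and $h>0$ we have $ch+d = f(x_{i+1})-c\,x_i>0$, so $\ratio := d/(ch+d)>1$, and a short check reduces the required bound $z_i\le x_{i+1}$ to $h\,f(x_{i+1})\ge 0$, while $z_i\ge x_i$ follows from $\ratio\ge1$.

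Having produced a single constant $\ratio>1$ together with points $z_1,\dots,z_{\mu-1}$ obeying $x_i\le z_i\le x_{i+1}$ and $\ratio = z_i/x_i = f(z_i)/f(x_{i+1})$, the hypotheses of \lemref{optapprox} are verified, so the equally-spaced set is the optimal approximation distribution and the theorem follows. I expect the only genuine obstacle to be the algebraic collapse of $\ratio_i$ to the $i$-independent value $d/(ch+d)$; the remaining work --- determining which neighbour approximates an interior point and confirming the positivity and range conditions --- is routine bookkeeping.
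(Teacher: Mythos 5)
Your proposal is correct and follows essentially the same route as the paper's proof: both identify the hypervolume-optimal set with the equally spaced distribution via Theorem~6 of Auger et al., impose the crossing condition $z_i/x_i = f(z_i)/f(x_{i+1})$ on each interval, observe that the resulting ratio is independent of $i$, and conclude via \lemref{optapprox}. The only difference is cosmetic: you solve the crossing equation directly for the ratio, obtaining $\ratio = d/(c\,h+d)$ with $h$ the constant spacing, whereas the paper first solves for the crossing point $\tilde{x}$ and then computes the ratio, arriving at the same value $\frac{d(\mu-1)}{d(\mu-2)-c+1}$.
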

\begin{proof}
We determine the approximation ratio that the optimal hypervolume distribution $\OPTHYP{\mu}{f} = \{x_{1}, \dots, x_{\mu}\}$ using $\mu$ points achieves.
Let $\tilde{x}$, $x_i < \tilde{x} < x_{i+1}$, be a Pareto-optimal $x$-coordinate. The approximation given by $x_i$ and $x_{i+1}$ is 
\begin{equation*}
	\min \left\{\frac{\tilde{x}}{x_i}, \frac{f(\tilde{x})}{f(x_{i+1})} \right\}
\end{equation*}
as $f$ is monotonically decreasing.

Furthermore, as $f$ is monotonically decreasing,
the worst-case approximation is attained for a point $\tilde{x}$, $x_i < \tilde{x} < x_{i+1}$, if
\begin{equation}\label{eqn:approx}
	\frac{\tilde{x}}{x_i} = \frac{f(\tilde{x})}{f(x_{i+1})}
\end{equation}
holds.

Using $$x_i = 1 + \frac{i-1}{\mu-1} \cdot \left(\frac{1-d}{c} - 1\right)$$ and 
resolving the Equation~\ref{eqn:approx} with respect to $\tilde{x}$, we get
\begin{equation*}
    \tilde{x} = \frac{d \, ((d + c - 1) \, i - c \mu - d + 1)}
                     {c \, ((\mu - 2) \, d - c + 1)}.
\end{equation*}
For the approximation ratio we get
\begin{equation*}
    \frac{\tilde{x}}{x_i} = \frac{f(\tilde{x})}{f(x_{i+1})} = \frac{d (\mu-1)}
                                                                   {d (\mu-2) - c + 1}.
\end{equation*}
Hence, the worst-case approximation is independent of the choice of $i$ and the same for all intervals $[x_i, x_{i+1}]$ of the Pareto front.
Lemma~\ref{lem:optapprox} implies that the hypervolume achieves the best possible approximation ratio on the class of linear fronts.
\end{proof}

\figref{lin}~(a) shows the optimal distribution for $f(x)=3-x$ and $\mu = 12$.

%---------------------------------------------------------------------------------
%---------------------------------------------------------------------------------

\newcommand{\renumberx}[1]{
\hspace*{-3mm}
\begin{tikzpicture}
	\node (full) at (0,0) [inner sep=0pt,above right]
	            {\includegraphics[width=\subfigwidth,viewport=0.63in 0.362in 7.52in 7.5in
	            ,clip]{pdfs/#1-eps-converted-to.pdf}}; 
	\foreach \x in {2,10,20,...,50}{
		\pgfmathparse{0.038+0.0443*(\x-2)}
		\node at (\pgfmathresult,.1) [below] {\footnotesize\pgfmathprintnumber{\x}};
	}
\end{tikzpicture}
\hspace*{-3mm}
}

\subsection{Analytic results for a class of convex fronts}
\label{sec:confixed}

We now consider the distribution of $\mu$ points on a convex front maximizing 
the hypervolume. In contrast to the class of linear functions where an optimal 
approximation can be achieved by distributing the $\mu$ points in an equally 
spaced manner along the front, the class of functions considered in this section 
requires that the points are distributed exponentially to obtain an optimal approximation.

As already argued we want to make sure that optimal hypervolume distribution includes $x_{\min}$ and $x_{\max}$. For the class of convex fronts that we consider, this can be achieved by choosing the reference point $r=(0,0)$.

The hypervolume of a set of points $X=\{x_1,\dots,x_{\mu}\}$, where \wlo\ $x_1 \leq x_2 \leq \dots \leq x_{\mu}$, is then given by
\begin{align*}
	\HYP(X) = {} & x_1 \cdot f(x_1) + x_2 \cdot f(x_2) - x_1 \cdot f(x_2) + \dots + x_{\mu} \cdot f(x_{\mu}) - x_{\mu-1} \cdot f(x_{\mu})\\
	= {} & x_1 \cdot f(x_1) + x_2 \cdot f(x_2) + \dots + x_{\mu} \cdot f(x_{\mu}) - (x_1 \cdot f(x_2) + \dots + x_{\mu-1} \cdot f(x_{\mu})).
\end{align*}

We consider a Pareto front given by the function
\begin{equation*}
	f \colon [1,c] \to [1,c] ~~\text{and}~~ f(x) = c/x
\end{equation*}
where $c > 1$ is an arbitrary constant. Then we get
\begin{equation*}
	\HYP(X) = c \cdot \mu - c \cdot \left(\frac{x_1}{x_2} + \frac{x_2}{x_3} + \dots + \frac{x_{\mu-2}}{x_{\mu-1}} + \frac{x_{\mu-1}}{x_{\mu}}\right).
\end{equation*}

\newcommand{\renumbermu}[1]{
\hspace*{-4mm}
\begin{tikzpicture}
	\node (full) at (0,0) [inner sep=0pt,above right]
	            {\includegraphics[width=\subfigwidth,viewport=0.63in 0.362in 7.52in 7.5in
	            ,clip]{pdfs/#1-eps-converted-to.pdf}}; 
	\foreach \x in {3,5,10,...,20}{
		\pgfmathparse{0.038+0.1374*(\x-3)}
		\node at (\pgfmathresult,.1) [below] {\footnotesize\pgfmathprintnumber{\x}};
	}
\end{tikzpicture}
\hspace*{-4mm}
}
\newcommand{\renumbermuu}[1]{
\hspace*{-4mm}
\begin{tikzpicture}
	\node (full) at (0,0) [inner sep=0pt,above right]
	            {\includegraphics[width=\subfigwidth,viewport=0.513in 0.362in 7.52in 7.5in
	            ,clip]{pdfs/#1-eps-converted-to.pdf}}; 
	\foreach \x in {3,5,10,...,20}{
		\pgfmathparse{0.038+0.1374*(\x-3)}
		\node at (\pgfmathresult,.1) [below] {\footnotesize\pgfmathprintnumber{\x}};
	}
\end{tikzpicture}
\hspace*{-4mm}
}

\newcommand{\renumberp}[1]{
\hspace*{-3mm}
\begin{tikzpicture}
	\node (full) at (0,0) [inner sep=0pt,above right]
	            {\includegraphics[width=\subfigwidth,viewport=0.63in 0.362in 7.52in 7.5in
	            ,clip]{pdfs/#1-eps-converted-to.pdf}}; 
	\foreach \x/\y in {1/{1/4},2/{1/2},3/1,4/2,5/4}{
		\pgfmathparse{0.038+0.584*(\x-1)}
		\node at (\pgfmathresult,.1) [below] {\footnotesize\y};
	}
\end{tikzpicture}
\hspace*{-3mm}
}

\newcommand{\renumberpp}[1]{
\hspace*{-3mm}
\begin{tikzpicture}
	\node (full) at (0,0) [inner sep=0pt,above right]
	            {\includegraphics[width=\subfigwidth,viewport=0.513in 0.362in 7.52in 7.5in
	            ,clip]{pdfs/#1-eps-converted-to.pdf}}; 
	\foreach \x/\y in {1/{1/4},2/{1/2},3/1,4/2,5/4}{
		\pgfmathparse{0.038+0.584*(\x-1)}
		\node at (\pgfmathresult,.1) [below] {\footnotesize\y};
	}
\end{tikzpicture}
\hspace*{-3mm}
}

\noindent
Hence, to maximize the hypervolume we have to find $\mu$ points minimizing
\begin{equation*}
	h(x_1, \dots, x_{\mu}) := \left(\frac{x_1}{x_2} + \dots + \frac{x_{\mu-1}}{x_{\mu}}\right).
\end{equation*}
Setting $x_1 = 1$ and $x_{\mu} = c$ minimizes $h$, since $x_{1}$ and $x_{\mu}$ occur just in the first and last term of $h$, respectively.
Furthermore, we have $1=x_1 < x_2 < \ldots < x_\mu=c$ as the equality of two points implies that one of them can be exchanged for another unchosen point on the Pareto front and thereby increases the hypervolume. 

We work under these assumptions and aim to find a set of points $X$ that minimizes the function $h$.
To do this, we consider the gradient vector given by the partial derivatives
\begin{align*}
	&h'(x_1,\dots,x_{\mu})= \left(\frac{1}{x_2},-\frac{x_1}{x_2^2} +
	  \frac{1}{x_3},\dots,-\frac{x_{\mu-2}}{x_{\mu-1}^2} +
	  \frac{1}{x_{\mu}},-\frac{x_{\mu-1}}{x_{\mu}^2}\right).
\end{align*}
This implies that $h$ can be minimized by setting
\begin{equation*}
	\begin{array}{lclcl}
		x_3 & = & x_2^2/x_1 & = & x_2^2, \\
		x_4 & = & x_3^2/x_2 & = & x_2^3, \\
		 & \vdots &  & \vdots &  \\
		x_\mu & = & x_{\mu-1}^2/x_{\mu-2} & = & x_2^{\mu-1}.
	\end{array}
\end{equation*}
From the last equation we get
%\NOTE{T.}{Should we put Eq.~\eqref{eqn:hypcon} as own Theorem like \thmref{optcon} in Section~IV?}
\begin{equation}
	\begin{array}{lclcl}
		x_2 & = & x_\mu^{1/(\mu-1)} & = & c^{1/(\mu-1)}, \\
		x_3 & = & x_2^2 & = & c^{2/(\mu-1)}, \\
		 & \vdots &  & \vdots &  \\
		x_{\mu-1} & = & x_2^{\mu-2} & = & c^{(\mu-2)/(\mu-1)}.
	\end{array}
	\label{eqn:hypcon}
\end{equation}

The following theorem shows that the optimal approximation distribution coincides with the optimal hypervolume distribution.

\begin{theorem}\label{thm:con}
	Let $f \colon [1,c] \to [1,c]$ be a convex front with $f(x) = c/x$ where $c > 1$ is an arbitrary constant.
	Then
	\begin{equation*}
		\OPTHYP{\mu}{f} = \OPTAPP{\mu}{f}.
	\end{equation*}
\end{theorem}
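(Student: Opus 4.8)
The plan is to reuse the machinery of \thmref{lin}: the hypervolume-optimal coordinates have already been determined in \eqref{eqn:hypcon} to be $x_i = c^{(i-1)/(\mu-1)}$ for $1 \le i \le \mu$ (with $x_1 = c^0 = 1 = x_{\min}$ and $x_\mu = c = x_{\max}$, and strictly increasing in between), so all that remains is to verify via \lemref{optapprox} that this same distribution minimizes the approximation ratio. Since $f(x)=c/x$ is strictly monotonically decreasing, the point worst approximated inside an interval $[x_i,x_{i+1}]$ by its endpoints is the one that balances the two coordinate ratios, i.e.\ the $z_i$ with $z_i/x_i = f(z_i)/f(x_{i+1})$ --- which is exactly the equalizing condition \lemref{optapprox} asks one to exhibit.

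First I would solve that balancing equation explicitly. Substituting $f(z_i)=c/z_i$ and $f(x_{i+1})=c/x_{i+1}$ converts $z_i/x_i = f(z_i)/f(x_{i+1})$ into $z_i/x_i = x_{i+1}/z_i$, whence $z_i = \sqrt{x_i\,x_{i+1}}$ is simply the geometric mean of two neighbouring chosen points. This immediately gives $x_i \le z_i \le x_{i+1}$, as the lemma requires, and the local approximation ratio contributed by this interval is $z_i/x_i = \sqrt{x_{i+1}/x_i}$.

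The decisive step is to check that this ratio does not depend on $i$. Here the exponential (geometric) spacing of the hypervolume optimum is exactly what is needed: with $x_i = c^{(i-1)/(\mu-1)}$ the quotient of consecutive points is $x_{i+1}/x_i = c^{1/(\mu-1)}$, which is constant across all intervals. Hence every $z_i$ yields the same ratio $\ratio := c^{1/(2(\mu-1))}$, and $\ratio > 1$ because $c > 1$. Taking this constant $\ratio$ together with $Z = \{z_1,\dots,z_{\mu-1}\}$ (the geometric means) verifies all hypotheses of \lemref{optapprox}, so the hypervolume-optimal distribution is an optimal approximation distribution and the two coincide.

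I expect the only real obstacle to be the bookkeeping that confirms this constancy of the local ratio; once the geometric-mean identity for $z_i$ is in hand, everything follows from the already-established form of the hypervolume optimum. The conceptual message, parallel to the linear case, is that equal spacing (for $f(x)=cx+d$) and geometric spacing (for $f(x)=c/x$) are precisely the configurations that equalize the worst-case ratio over all intervals, which is the property \lemref{optapprox} is designed to detect.
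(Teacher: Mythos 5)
Your proposal is correct and follows essentially the same route as the paper's proof: start from the hypervolume-optimal points $x_i = c^{(i-1)/(\mu-1)}$ of \eqref{eqn:hypcon}, solve the balancing condition $z_i/x_i = f(z_i)/f(x_{i+1})$ in each interval, observe the resulting ratio $c^{1/(2\mu-2)}$ is independent of $i$, and invoke \lemref{optapprox}. Your phrasing of the solution as the geometric mean $z_i=\sqrt{x_i x_{i+1}}$ with ratio $\sqrt{x_{i+1}/x_i}$ is just a cleaner packaging of the paper's direct substitution $x = c^{(2i-1)/(2\mu-2)}$; the two computations are identical.
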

\begin{proof}
We determine the approximation ratio that the optimal hypervolume distribution $\OPTHYP{\mu}{f} = \{x_{1}, \dots, x_{\mu}\}$ using $\mu$ points achieves.
As $f$ is monotonically decreasing, the worst-case approximation is attained for a point $x$, $x_i < x < x_{i+1}$, if 
\begin{equation*}
	\frac{x}{x_i} = \frac{f(x)}{f(x_{i+1})}
\end{equation*}
holds.
Substituting the coordinates and function values, we get
\begin{equation*}
	\frac{x}{x_i} = \frac{x}{c^{(i-1)/ (\mu-1)}} ~~\text{and}~~ \frac{f(x)}{f(x_{i+1})} = \frac{c/x}{c/c^{i/ (\mu-1)}} = \frac{c^{i/(\mu-1)}}{x}.
\end{equation*}
Therefore,
\begin{equation*}
	x^2 = c^{i/(\mu-1)} \cdot c^{(i-1)/ (\mu-1)}= c^{(2i-1)/(\mu-1)},
\end{equation*}
which implies
\begin{equation*}
	x = c^{(2i-1)/(2\mu-2)}.
\end{equation*}
Hence, the set of search points maximizing the hypervolume achieves an approximation ratio of
\begin{equation}
	\frac{c^{(2i-1)/(2\mu-2)}}{c^{(i-1)/(\mu-1)}} = c^{1/(2\mu-2)}.
	\label{eqn:conapp}
\end{equation}

We have seen that the requirements of Lemma~\ref{lem:optapprox} are fulfilled.
Hence, an application of Lemma~\ref{lem:optapprox} shows that the hypervolume indicator achieves an 
optimal approximation ratio when the Pareto front is given by $f \colon [1,c] 
\to [1,c]$ with $f(x) = c/x$ where $c \in \R_{> 1}$ is any constant.
\end{proof}

\figref{con} shows the optimal distribution for $\mu = 12$ and $c=2$ as well as $c=200$.

%---------------------------------------------------------------------------------
%---------------------------------------------------------------------------------

\subsection{Numerical evaluation for fronts of different shapes}
\label{sec4}

\begin{figure}[t]
    \centering
    \subfigure[Set $\OPTHYP{12}{\fsym_2}$.]{
	\begin{tikzpicture}
	\node (full) at (0,0) [inner sep=0pt,above right]
	            {\includegraphics[width=\distfigwidth,viewport=0.42in 0.362in 7.5in 7.5in
	            ,clip]{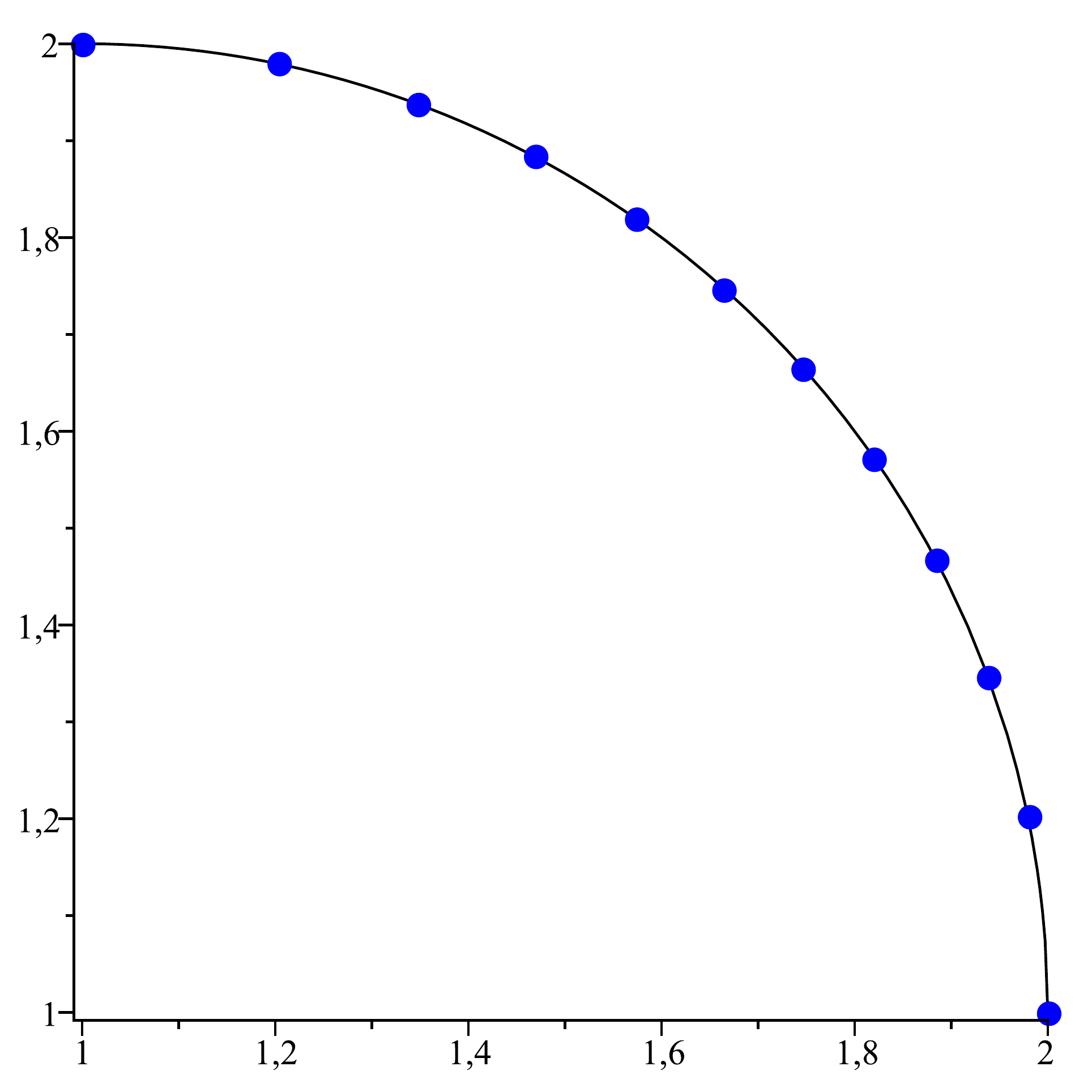}}; 
	\foreach \x in {1,1.2,...,2}{
	\pgfmathparse{0.08+2.71*(\x-1)}
	\node at (\pgfmathresult,.1) [below] {\footnotesize\pgfmathprintnumber{\x}};
	}
	\foreach \y in {1,1.2,...,2}{
	\pgfmathparse{0.08+2.71*(\y-1)}
	\node at (.1,\pgfmathresult) [left] {\footnotesize\pgfmathprintnumber{\y}};
	}
	\end{tikzpicture}
        \label{fig:symhyp}
    }
    \hspace*{.01\textwidth}
    \subfigure[Set $\OPTAPP{12}{\fsym_2}$.]{
    	\begin{tikzpicture}
	\node (full) at (0,0) [inner sep=0pt,above right]
	            {\includegraphics[width=\distfigwidth,viewport=0.42in 0.362in 7.5in 7.5in
	            ,clip]{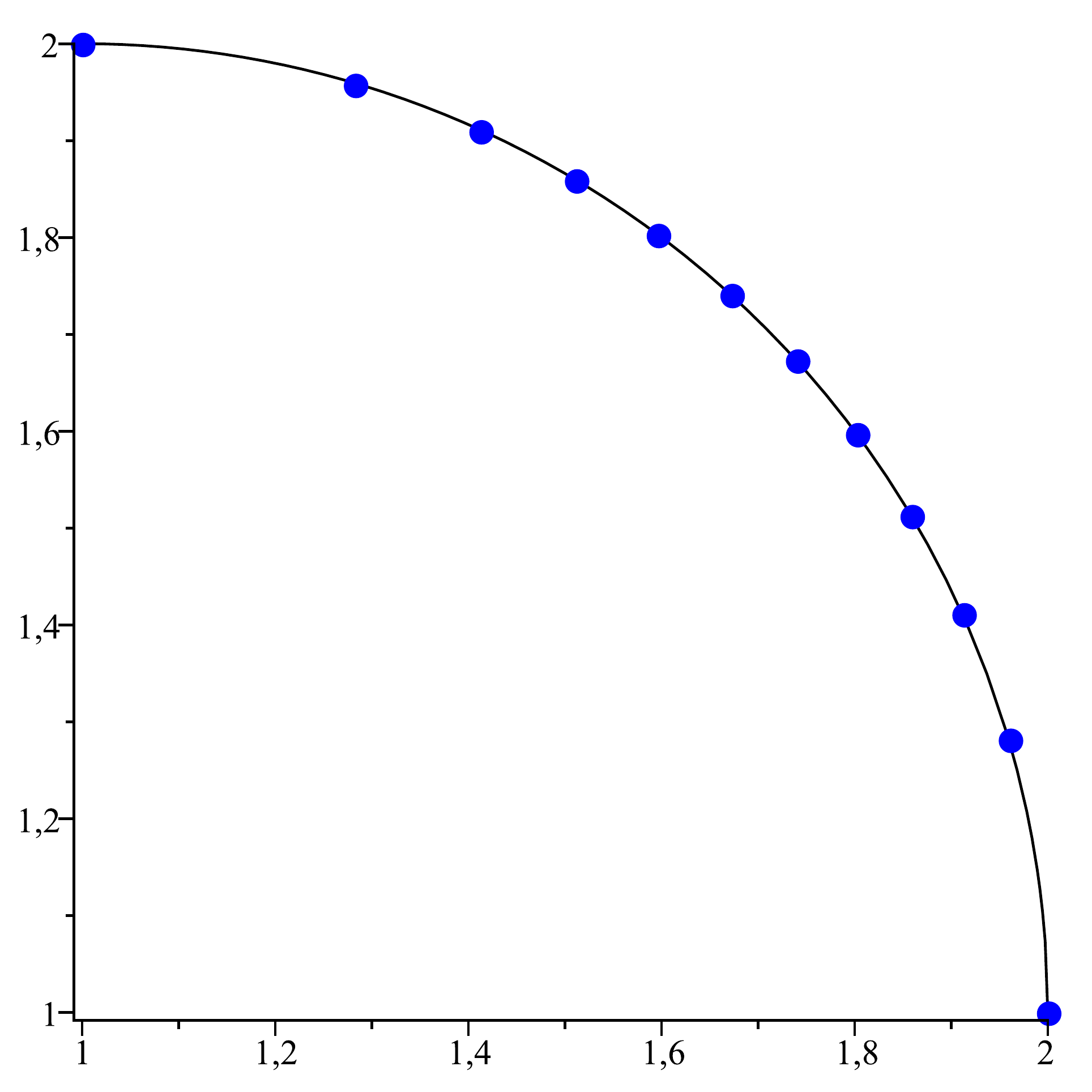}}; 
	\foreach \x in {1,1.2,...,2}{
	\pgfmathparse{0.08+2.71*(\x-1)}
	\node at (\pgfmathresult,.1) [below] {\footnotesize\pgfmathprintnumber{\x}};
	}
	\foreach \y in {1,1.2,...,2}{
	\pgfmathparse{0.08+2.71*(\y-1)}
	\node at (.1,\pgfmathresult) [left] {\footnotesize\pgfmathprintnumber{\y}};
	}
	\end{tikzpicture}
        \label{fig:symapp}
    }
    \caption{Optimal point distributions for symmetric front $\fsym_2$.
             Note that the optimal hypervolume distribution and the
             optimal approximation distribution differ in this case.
             The set of points maximizing the hypervolume yields an
             approximation ratio of $\APP{\OPTHYP{12}{\fsym_2}}\approx1.025$,
             which is $0.457\%$ larger than the optimal approximation ratio
             $\APP{\OPTAPP{12}{\fsym_2}}\approx1.021$.
             }
    \label{fig:sym}
\end{figure}

\begin{figure}[h]
    \centering
    \subfigure[Set $\OPTHYP{12}{\fasy_2}$.]{
	\begin{tikzpicture}
	\node (full) at (0,0) [inner sep=0pt,above right]
	            {\includegraphics[width=\distfigwidth,viewport=0.42in 0.362in 7.5in 7.5in
	            ,clip]{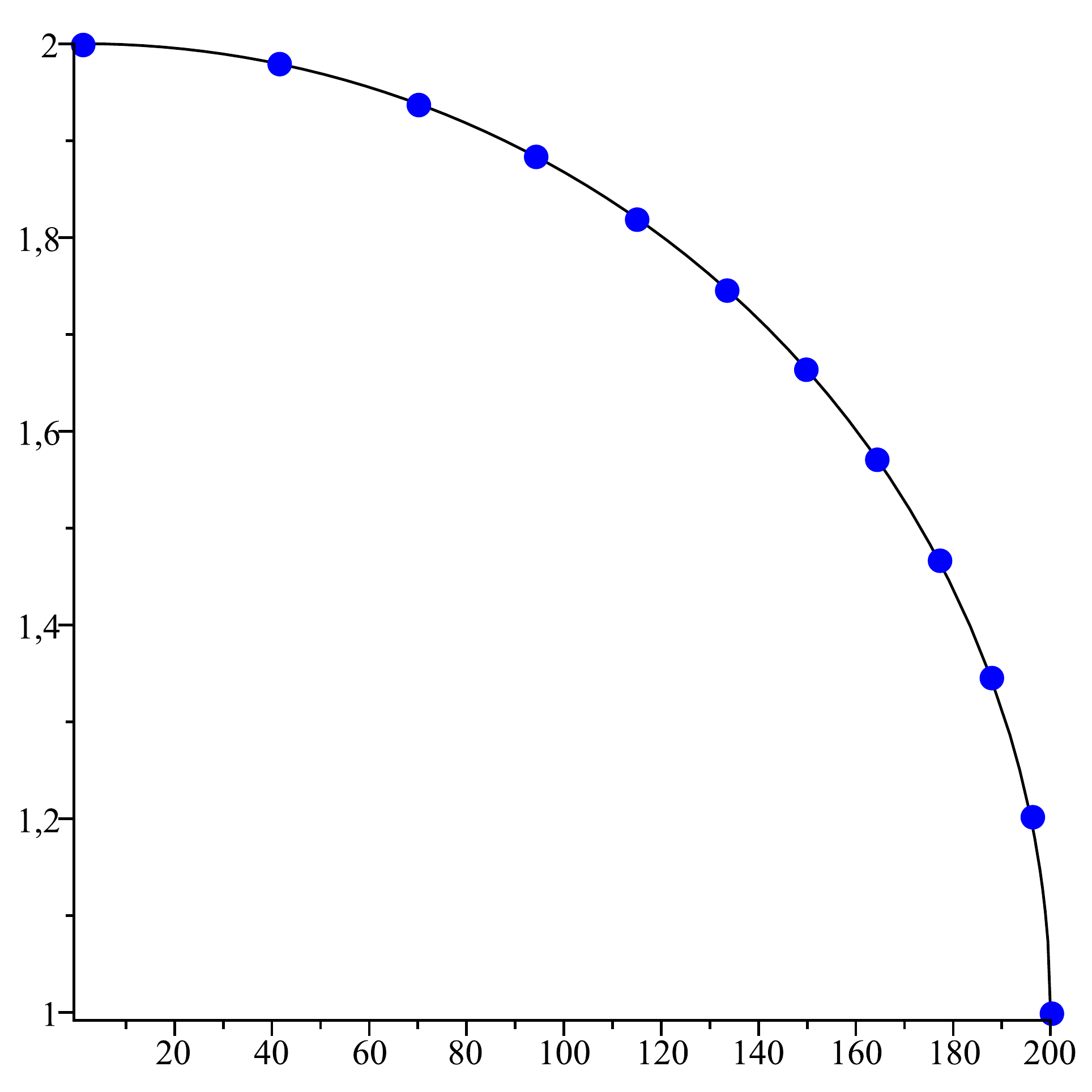}}; 
	\foreach \x in {1,40,80,...,200}{
	\pgfmathparse{0.08+0.01358*(\x-1)}
	\node at (\pgfmathresult,.1) [below] {\footnotesize\pgfmathprintnumber{\x}};
	}
	\foreach \y in {1,1.2,...,2}{
	\pgfmathparse{0.08+2.71*(\y-1)}
	\node at (.1,\pgfmathresult) [left] {\footnotesize\pgfmathprintnumber{\y}};
	}
	\end{tikzpicture}
        \label{fig:asyhyp}
    }
    \hspace*{.01\textwidth}
    \subfigure[Set $\OPTAPP{12}{\fasy_2}$.]{
	\begin{tikzpicture}
	\node (full) at (0,0) [inner sep=0pt,above right]
	            {\includegraphics[width=\distfigwidth,viewport=0.42in 0.362in 7.5in 7.5in
	            ,clip]{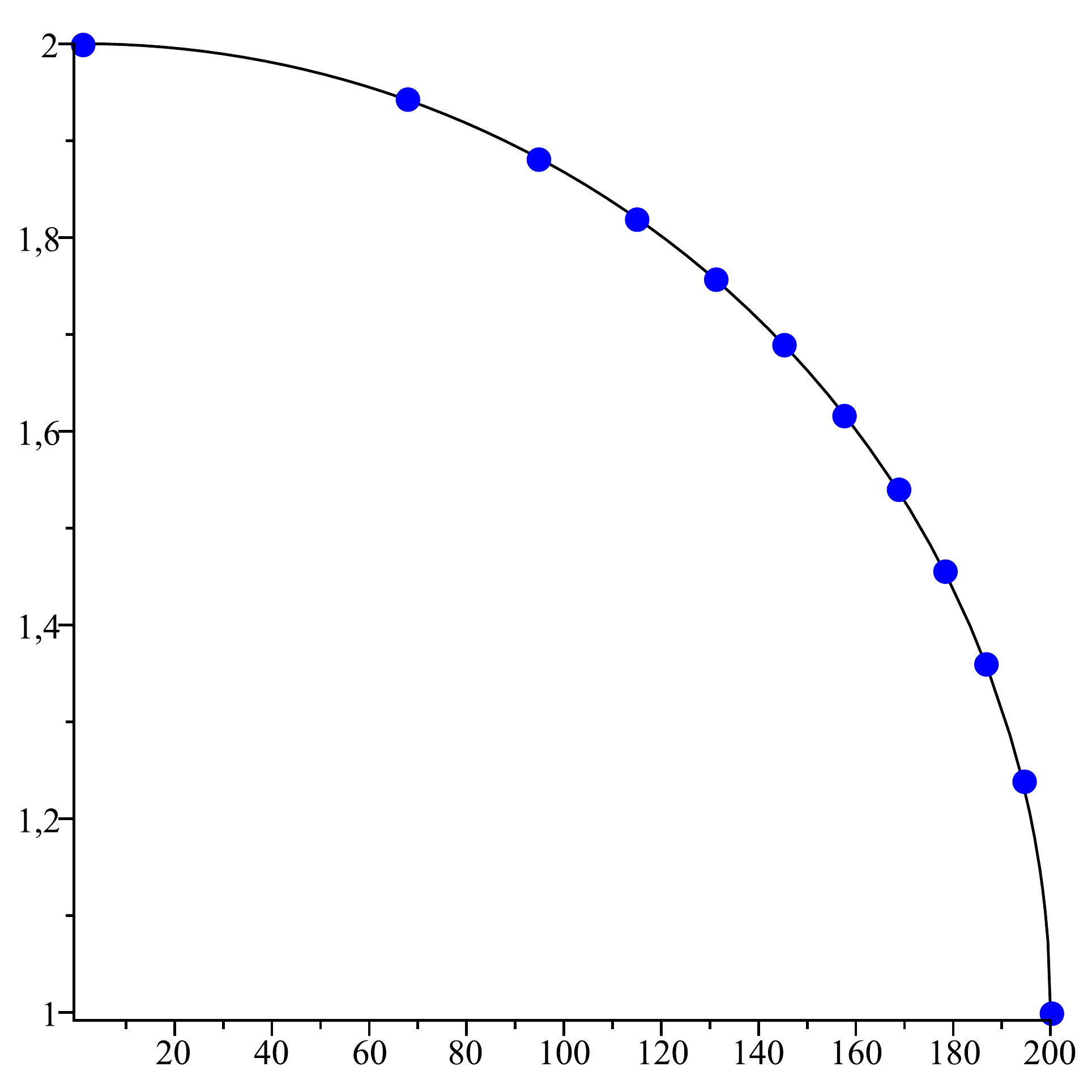}}; 
	\foreach \x in {1,40,80,...,200}{
	\pgfmathparse{0.08+0.01358*(\x-1)}
	\node at (\pgfmathresult,.1) [below] {\footnotesize\pgfmathprintnumber{\x}};
	}
	\foreach \y in {1,1.2,...,2}{
	\pgfmathparse{0.08+2.71*(\y-1)}
	\node at (.1,\pgfmathresult) [left] {\footnotesize\pgfmathprintnumber{\y}};
	}
	\end{tikzpicture}
        \label{fig:asyapp}
    }
    \caption{Optimal point distributions for asymmetric front $\fasy_2$. 
             Note that the optimal hypervolume distribution and the
             optimal approximation distribution differ in this case.
             The set of points maximizing the hypervolume yields an
             approximation ratio of $\APP{\OPTHYP{12}{\fasy_2}}\approx1.038$,
             which is $0.839\%$ larger than the optimal approximation ratio
             $\APP{\OPTAPP{12}{\fasy_2}}\approx1.030$.
             }
    \label{fig:asy}
\end{figure}

The analysis of the distribution of an optimal set of search points tends to be hard or is impossible for more complex functions.
Hence, resorting to numerical analysis methods constitutes a possible escape from this dilemma.
This section is dedicated to the numerical analysis of a larger class of functions.

%\subsection{Parametrized family of Pareto fronts}

Our goal is to study the optimal hypervolume distribution for different shapes of Pareto fronts and investigate how the shape of such a front influences the approximation behavior of the hypervolume indicator.
We examine a family of fronts of the shape $x^p$ where $p>0$ is a parameter that determines the degree of the polynomial describing the Pareto front. Furthernmore, we allow scaling in both dimensions.

The Pareto fronts that we consider can be defined by a function of the form $f_p \colon [x_1,x_\mu] \to [y_\mu,y_1]$ with
\begin{equation*}
	f_p(x) := y_\mu - (y_\mu-y_1) \cdot \left( 1 - \left( \frac{x-x_1}{x_\mu-x_1} \right) ^p \right)^{1/p}.
\end{equation*}
We use the notation $y_i = f(x_i)$ for the function value $f(x_i)$ of a point $x_i$.
As we assume the reference point to be sufficiently negative, the leftmost point $(x_1,y_1)$
and the rightmost point $(x_\mu,y_\mu)$ are always contained in the optimal hypervolume distribution as well as
in the optimal approximation. 
We will mainly concentrate on two parameter sets of $f_p$, that is,
\begin{itemize}
\setlength{\itemsep}{0pt}
\setlength{\parskip}{0pt}
	\item the \emph{symmetric} front $\fsym_p \colon [1,2] \to [1,2]$ and
	\item the \emph{asymmetric} front $\fasy_p \colon [1,201] \to [1,2]$.
\end{itemize}

Note, that choosing $p=1$ corresponds to the well-known test function DTLZ1~\cite{DTLZ02}.
For $p=2$ the shape of the front corresponds to functions DTLZ2, DTLZ3, and DTLZ4.

%\subsection{Numerical investigations}

Our goal is to study the optimal hypervolume distribution for our parametrized family of Pareto fronts and relate it to an optimal multiplicative approximation. 
Therefore, we calculate for different functions $f_p$ and $\mu \geq 3$
\begin{itemize}
\setlength{\itemsep}{0pt}
\setlength{\parskip}{0pt}
	\item the set of $\mu$ points $\OPTHYP{\mu}{f_p}$ which maximizes the dominated hypervolume, and
	\item the set of $\mu$ points $\OPTAPP{\mu}{f_p}$ which minimizes the multiplicative approximation ratio.
\end{itemize}

As in Section~\ref{sec3}, we assume that both extreme have to be included in both distributions.
For the optimal hypervolume distribution, it suffices to find the $x_2,x_3,\dots,x_{\mu-1}$ that maximize the dominated hypervolume, that is, the solutions of
\begin{align*}
	\argmax_{x_2,\dots,x_{\mu-1}}\Bigg((x_2-x_1) \cdot (f(x_2)-f(x_\mu)) 
	 + \sum_{i=3}^{\mu-1} (x_i-x_{i-1}) \cdot (f(x_i)-f(x_\mu))\Bigg)
\end{align*}
We solve the arising nonlinear continuous optimization problem numerically by means of sequential quadratic programming~\cite{SQP}.

In the optimal multiplicative approximation, we have to solve the following system of nonlinear equations
\begin{align*}
	\frac{z_1}{x_1} = \frac{z_2}{x_2} &= \dots = \frac{z_{\mu-1}}{x_{\mu-1}}=\\
	= \frac{f(z_1)}{f(x_2)} = \frac{f(z_2)}{f(x_3)} &= \dots = \frac{f(z_{\mu-1})}{f(x_\mu)}
\end{align*}
with auxiliary variables $z_1,\dots,z_{\mu-1}$ due to \lemref{optapprox}.
The numerical solution of this system of equations can be determined easily
by any standard computer algebra system. We used the Optimization package
of Maple~15.

%%% GIVE INTUITION

%\subsection{Numerical results}

In the following, we present the results that have been obtained by our numerical investigations.
We first examine the case of $f_2$.
\figrefs{sym}{asy} show different point distributions for $f_2$. It can be observed that the hypervolume distribution differs from the optimal distribution.
\figrefs{symhyp}{symapp} show the distributions for the symmetric front
\begin{equation*}
	f_2(x)=1+\sqrt{1-(x-1)^2}
\end{equation*}
with
$(x_1,y_1)=(1,2)$ and $(x_\mu,y_\mu)=(2,1)$.
\figrefs{asyhyp}{asyapp} show the asymmetric front
\begin{equation*}
	f_2(x)=1+\sqrt{1-(x/200-1/200)^2}
\end{equation*}
with $(x_1,y_1)=(1,2)$ and $(x_\mu,y_\mu)=(201,1)$.

It can be observed that the relative positions of the hypervolume points stay the same in \figrefs{symhyp}{asyhyp}
while the relative positions achieving an optimal approximation change with scaling
(cf.\ \figrefs{symapp}{asyapp}).
Hence, the relative position of the points maximizing the hypervolume
is robust with respect to scaling.
But as the optimal point distribution for a multiplicative approximation
is dependent on the scaling, the hypervolume cannot achieve the best possible
approximation quality.

\begin{figure*}[tb]
    \centering
    \subfigure[\APP{\OPT{3}{f_{1/3}}}.]{
	\renumberx{x3-p0333}
    }
    \subfigure[\APP{\OPT{3}{f_{1/2}}}.]{
    	\renumberx{x3-p05}    }
    \subfigure[\APP{\OPT{3}{f_{1}}}.]{
    	\renumberx{x3-p1}
    }
    \subfigure[\APP{\OPT{3}{f_{2}}}.]{
    	\renumberx{x3-p2}
    }
    \subfigure[\APP{\OPT{3}{f_{3}}}.]{
    	\renumberx{x3-p3}
    }
    \caption{Approximation ratio 
             of the optimal hypervolume distribution
             (\protect\includegraphics[width=2mm,clip]{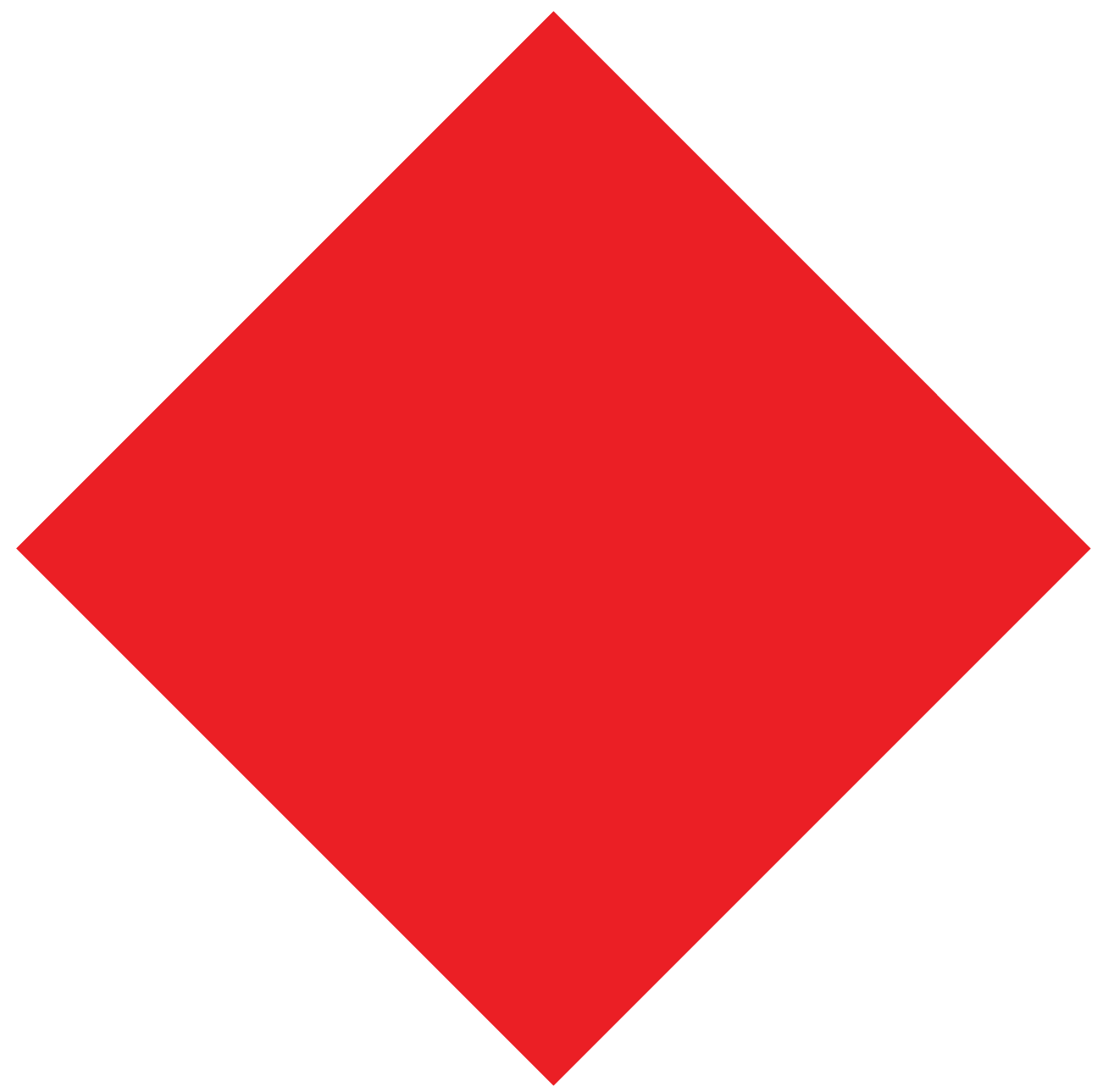})
             and  
             the optimal approximation distribution
             (\protect\includegraphics[width=2mm,clip]{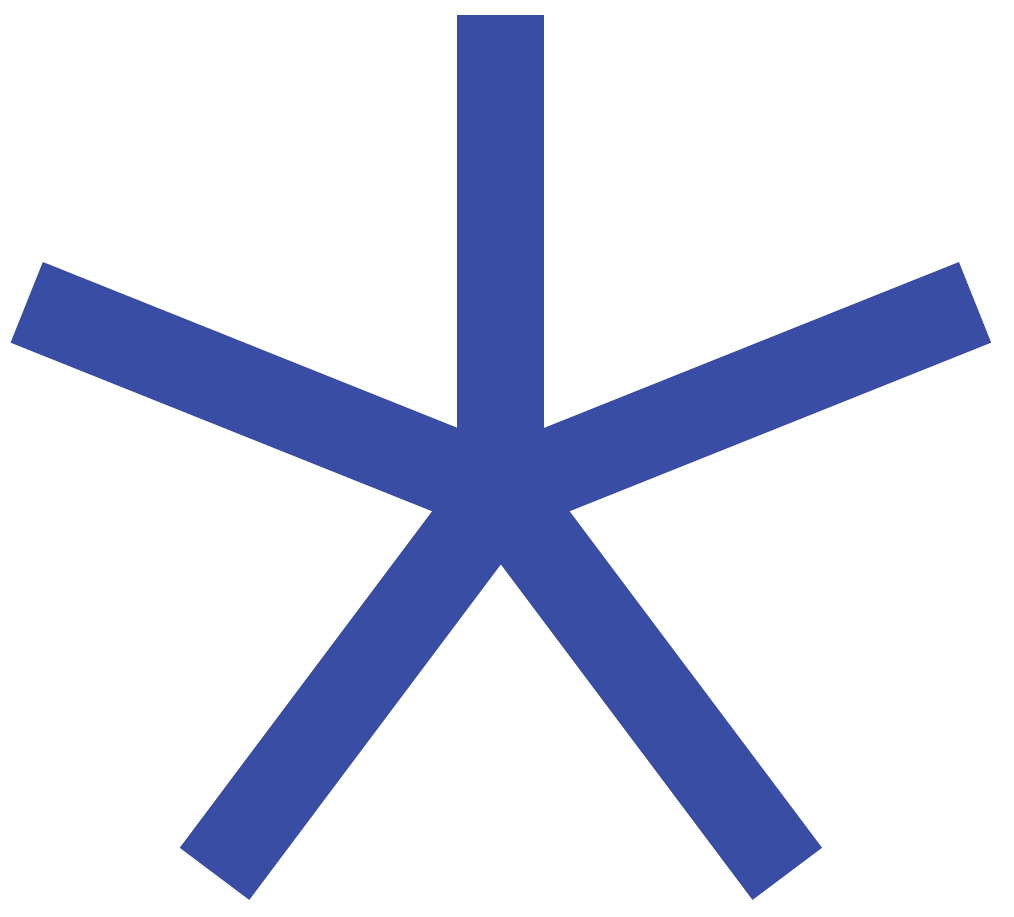})
             \emph{depending on the scaling $x_\mu$}
             of the fronts $f_p$ (cf.\ \defref{OPT}).
             We omit the values of the $y$-axis as we are
             only interested in the relative comparison 
             (\protect\includegraphics[width=2mm,clip]{pdfs/icon-hyp-eps-converted-to.pdf}
             vs.~\protect\includegraphics[width=2mm,clip]{pdfs/icon-app-eps-converted-to.pdf})
             for each front $f_p$.
             Note that as analytically predicted in \thmref{lin},
             both curves coincide in (c) for the linear function $f_1$
             independent of the scaling.
             }
    \label{fig:x3}
\end{figure*}

In the example of \figrefs{sym}{asy} the optimal multiplicative approximation 
factor for the symmetric and asymmetric case is
1.021 (\figref{symapp}) % 1.020740074 
and 1.030 (\figref{asyapp}), % 1.029683282 
respectively,
while the hypervolume only achieves an approximation of
1.025 (\figref{symhyp}) % 1.025400132 
and 1.038 (\figref{asyhyp}), % 1.038321742 
respectively.
Therefore in the symmetric and asymmetric case of $f_2$ the hypervolume is not calculating
the set of points with the optimal multiplicative approximation.

\begin{figure*}[tbp]
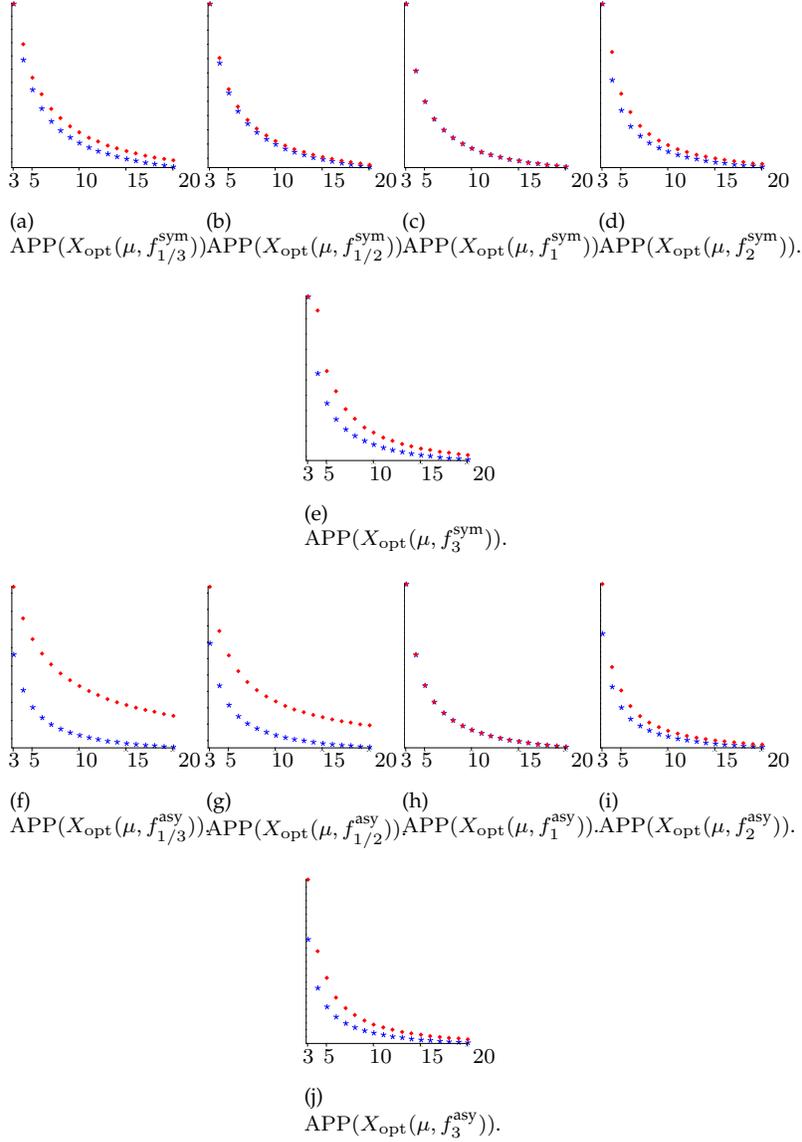

    \centering
    \subfigure[\APP{\OPT{\mu}{\fsym_{1/3}}}.]{
        \renumbermu{mu-p0333-sym}
    }
    \subfigure[\APP{\OPT{\mu}{\fsym_{1/2}}}.]{
        \renumbermu{mu-p05-sym}
    }
    \subfigure[\APP{\OPT{\mu}{\fsym_{1}}}.]{
        \renumbermu{mu-p1-sym}
    }
    \subfigure[\APP{\OPT{\mu}{\fsym_{2}}}.]{
        \renumbermu{mu-p2-sym}
    }
    \subfigure[\APP{\OPT{\mu}{\fsym_{3}}}.]{
        \renumbermu{mu-p3-sym}
    }\\
    \centering
    \subfigure[\APP{\OPT{\mu}{\fasy_{1/3}}}.]{
        \renumbermuu{mu-p0333-asy}
        \label{fig:mu:asy:13}
    }
    \subfigure[\APP{\OPT{\mu}{\fasy_{1/2}}}.]{
        \renumbermuu{mu-p05-asy}
        \label{fig:mu:asy:12}
    }
    \subfigure[\APP{\OPT{\mu}{\fasy_{1}}}.]{
        \renumbermu{mu-p1-asy}
    }
    \subfigure[\APP{\OPT{\mu}{\fasy_{2}}}.]{
        \renumbermu{mu-p2-asy}
    }
    \subfigure[\APP{\OPT{\mu}{\fasy_{3}}}.]{
        \renumbermu{mu-p3-asy}
    }
    \caption{Approximation ratio 
             of the optimal hypervolume distribution
             (\protect\includegraphics[width=2mm,clip]{pdfs/icon-hyp-eps-converted-to.pdf})
             and  
             the optimal approximation distribution
             (\protect\includegraphics[width=2mm,clip]{pdfs/icon-app-eps-converted-to.pdf})
             \emph{depending on the number of points $\mu$}
             for symmetric and asymmetric fronts $f_p$
             and different parameters $p$ (cf.\ \defref{OPT}).
             We omit the values of the $y$-axis as we are
             only interested in the relative comparison 
             (\protect\includegraphics[width=2mm,clip]{pdfs/icon-hyp-eps-converted-to.pdf}
             vs.~\protect\includegraphics[width=2mm,clip]{pdfs/icon-app-eps-converted-to.pdf})
             for each front $f_p$.
             Note that (c) and (h) show that the approximation ratio of the optimal hypervolume distribution
             \APP{\OPTHYP{\mu}{\fsym_{1}}} and
             the optimal approximation distribution
             \APP{\OPTHYP{\mu}{\fsym_{1}}} 
             are equivalent for all examined $\mu$.
             That maximizing the hypervolume yields the optimal approximation ratio
             can also be observed for all symmetric $\fsym_{p}$ with $\mu=3$ in
             (a)--(e).
             }
    \label{fig:mu}
\end{figure*}

\begin{figure*}[tbp]
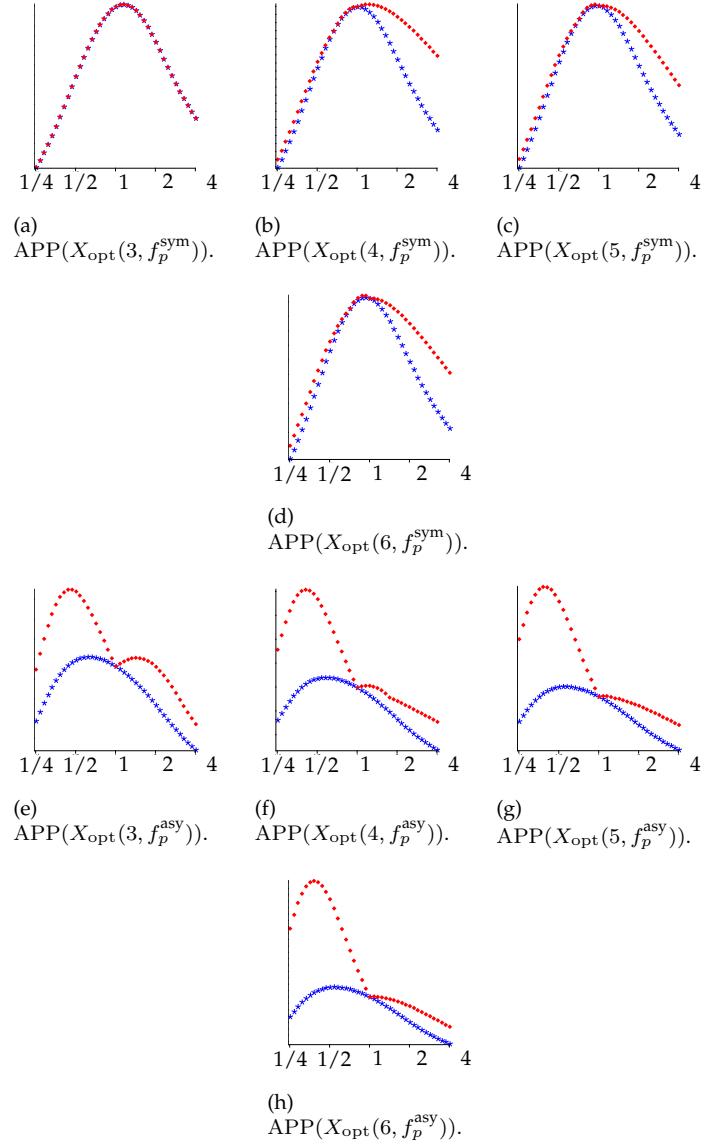

    \centering
    \subfigure[\APP{\OPT{3}{\fsym_{p}}}.]{
        \renumberp{p-mu1-sym}
        \label{fig:p:sym:mu1}
    }
    \hspace*{.02\textwidth}
    \subfigure[\APP{\OPT{4}{\fsym_{p}}}.]{
        \renumberp{p-mu2-sym}
        \label{fig:p:sym:mu2}
    }
    \hspace*{.02\textwidth}
    \subfigure[\APP{\OPT{5}{\fsym_{p}}}.]{
        \renumberp{p-mu3-sym}
        \label{fig:p:sym:mu3}
    }
    \hspace*{.02\textwidth}
    \subfigure[\APP{\OPT{6}{\fsym_{p}}}.]{
        \renumberp{p-mu4-sym}
        \label{fig:p:sym:mu4}
    }\\
    \centering
    \subfigure[\APP{\OPT{3}{\fasy_{p}}}.]{
        \renumberpp{p-mu1-asy}
        \label{fig:p:asy:mu1}
    }
    \hspace*{.02\textwidth}
    \subfigure[\APP{\OPT{4}{\fasy_{p}}}.]{
        \renumberpp{p-mu2-asy}
        \label{fig:p:asy:mu2}
    }
    \hspace*{.02\textwidth}
    \subfigure[\APP{\OPT{5}{\fasy_{p}}}.]{
        \renumberp{p-mu3-asy}
        \label{fig:p:asy:mu3}
    }
    \hspace*{.02\textwidth}
    \subfigure[\APP{\OPT{6}{\fasy_{p}}}.]{
        \renumberp{p-mu4-asy}
        \label{fig:p:asy:mu4}
    }
    \caption{Approximation ratio 
             of the optimal hypervolume distribution
             (\protect\includegraphics[width=2mm,clip]{pdfs/icon-hyp-eps-converted-to.pdf})
             and  
             the optimal approximation distribution
             (\protect\includegraphics[width=2mm,clip]{pdfs/icon-app-eps-converted-to.pdf})
             \emph{depending on the convexity/concavity parameter $p$}
             for symmetric and asymmetric fronts $f_p$
             and different population sizes $\mu$ (cf.\ \defref{OPT}).
             The $x$-axis is scaled logarithmically.
             We omit the values of the $y$-axis as we are
             only interested in the relative comparison 
             (\protect\includegraphics[width=2mm,clip]{pdfs/icon-hyp-eps-converted-to.pdf}
             vs.~\protect\includegraphics[width=2mm,clip]{pdfs/icon-app-eps-converted-to.pdf})
             for each front $f_p$ and population size $\mu$.
             Note that (a) shows that the approximation ratio of the optimal hypervolume distribution
             $\APP{\OPTHYP{3}{\fsym_{p}}}$ and
             the optimal approximation distribution
             $\APP{\OPTAPP{3}{\fsym_{p}}}$ 
             are equivalent for all examined $p$.
             }
    \label{fig:p}
\end{figure*}

We have already seen that scaling the function has a high impact on the optimal approximation distribution but not on the optimal hypervolume distribution. We want to investigate this effect in greater detail. 
The influence of scaling the parameter $x_{\mu} \geq 2$ of different functions $f_{p} \colon [1,x_{\mu}] \to [1,2]$ is depicted in \figref{x3} for $p=1/3,1/2,1,2,3$. For fixed $\mu=3$ it shows the achieved approximation
ratio. As expected, the larger the asymmetry ($x_\mu$) the larger the approximation
ratios. For concave fronts ($p>1$) the approximation ratios seem to converge
quickly for large enough $x_\mu$. The approximation of $f_2$ tends towards
the golden ratio $\sqrt{5}-1\approx1.236$ for the optimal approximation
and $4/3\approx1.333$ for the optimal hypervolume. For $f_3$ they tend towards
1.164 and 1.253, respectively. Hence, for $f_2$ and $f_3$ the hypervolume
is never more than 8\% worse than the optimal approximation.
This is different for the convex fronts ($p<1$). There, the ratio between the
hypervolume and the optimal approximation appears divergent.

Another important question is how the choice of the population size influences the relation between an optimal approximation and the approximation achieved by an optimal hypervolume distribution.
We investigate the influence of the choice of $\mu$ on the approximation behavior in greater detail.
\figref{mu} shows the achieved approximation ratios depending on the 
number of points $\mu$. 
For symmetric $f_p$'s with $(x_1,y_1)=(y_\mu,x_\mu)$ and $\mu=3$
the hypervolume achieves an optimal approximation distribution for all $p>0$. The same
holds for the linear function $f_1$ independent of the scaling
implied by $(x_1,y_1)$ and $(y_\mu,x_\mu)$.

For larger populations, the approximation ratio of the hypervolume distribution and the optimal distribution decreases.
However, the performance of the hypervolume measure is especially poor even for larger $\mu$
for convex asymmetric fronts, that is, $\fasy_p$ with $p<1$
(e.g.\ \figrefs{mu:asy:13}{mu:asy:12}). Our investigations show that the 
approximation of an optimal hypervolume distribution may differ significantly 
from an optimal one depending on the choice of $p$. 
An important issue is whether the front is convex or concave~\cite{LizarragaLizarraga08}.
The hypervolume was thought to prefer convex regions to concave regions~\cite{ZitzlerThiele98}
while \cite{Augetal2009} showed that the density of points only depends on the slope of the front and
not on convexity or concavity.
To illuminate the impact of convex vs.\ concave further,
\figref{p} shows the approximation ratios depending on $p$.
As expected, for $p=1$ the hypervolume calculates the optimal
approximation. However, the influence of the $p$ is very different
for the symmetric and the asymmetric test function.
For $\fsym_p$ the convex ($p<1$) fronts 
are much better approximated
by the hypervolume than the concave ($p>1$) fronts (cf.~\figref{p}(a)--(d)).
For $\fasy_p$ this is surprisingly the other way around (cf.~\figref{p}(e)--(h)).

%---------------------------------------------------------------------------------
%---------------------------------------------------------------------------------
%---------------------------------------------------------------------------------
%---------------------------------------------------------------------------------
%---------------------------------------------------------------------------------
%---------------------------------------------------------------------------------
%---------------------------------------------------------------------------------
%---------------------------------------------------------------------------------
%---------------------------------------------------------------------------------
%---------------------------------------------------------------------------------
%---------------------------------------------------------------------------------
%---------------------------------------------------------------------------------
%---------------------------------------------------------------------------------
%---------------------------------------------------------------------------------
%---------------------------------------------------------------------------------
%---------------------------------------------------------------------------------
%---------------------------------------------------------------------------------
%---------------------------------------------------------------------------------

\section{Influence of the reference point}\label{sec:depref}

In all previous investigations, we have not considered the impact of the reference point. To allow a fair comparison we assumed that the optimal approximation distribution and the optimal hypervolume distribution have to include both extreme points. This is clearly not optimal when considering the optimal approximation distribution. Therefore, we relax our assumption and allow any set consisting of $\mu$ points and raise the question how the optimal approximation distribution looks in this case. 
Considering the hypervolume indicator, the question arises whether this optimal approximation distribution can be achieved by choosing a certain reference point. Therefore, the goal of this section is to examine the impact of the reference point for determining optimal approximation distributions.

For this we have to redefine parts of the notation.  We mark all variables
with a hat (like \ $\widehat\ $\ )  %$_r$
to make clear that we do not require the extreme points to be
included anymore.

Consider again a Pareto front $f$.  We now let $\cX_r(\mu,f)$ be the set of all subsets of 
\[
	\left\{(x,f(x)) \mid x \in [x_{\min},x_{\max}]\right\}
\]
of cardinality $\mu$, where we do \emph{not} assume
that $(x_{\min},f(x_{\min}))$ and $(x_{\max},f(x_{\max}))$ have to be necessarily contained.
We also have to redefine the notion of optimal hypervolume distributions
and optimal approximation distribution.

\begin{definition}\label{def:OPTr}
    The optimal hypervolume distribution
    \begin{equation*}
    	\OPTHYPr{\mu,r}{f} := \argmax_{X\in\cX_r(\mu,f)}{\HYP_r(X)}
    \end{equation*}
    consists of $\mu$ points that maximize the hypervolume with respect to $f$.
    The optimal approximation distribution
    \begin{equation*}
    	\OPTAPPr{\mu}{f} := \argmin_{X\in\cX_r(\mu,f)}{\APP{X}}
    \end{equation*}
    consists of $\mu$ points that minimize the approximation ratio with respect to $f$.
\end{definition}

\subsection{Optimal approximations}

Similar to Lemma~\ref{lem:optapprox}, the following lemma states conditions for an optimal approximation distribution which does not have to contain the extreme points.

\begin{lemma}
\label{lem:optapprox2}
Let $f \colon [x_{\min}, x_{\max}] \to \R$ be a Pareto front and $X = \{x_{1}, \dots, x_{\mu}\}$ a solution set with $x_{i} < x_{i + 1}$ for all $1 \leq i < \mu$.
If there is a ratio $\ratio > 1$ and a set $Z = \{z_{1}, \dots, z_{\mu - 1}\}$ with $x_{i} < z_{i} < x_{i + 1}$ for all $1 \leq i < \mu$ such that
\begin{itemize}
\setlength{\itemsep}{0pt}
\setlength{\parskip}{0pt}
\item $z_{i} = \ratio \cdot x_{i}$ for all $1 \leq i \leq \mu$ (where $z_{\mu} = x_{\max}$) and
\item $f(z_{i}) = \ratio \cdot f(x_{i + 1})$ for all $0 \leq i < \mu$ (where $z_{0} = x_{\min}$)
\end{itemize}
then $X = \OPTAPPr{\mu}{f}$ is the optimal approximation distribution with approximation ratio $\ratio$.
\end{lemma}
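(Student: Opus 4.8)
The plan is to prove the two directions separately: first that $X$ together with $\ratio$ and $Z$ really is a $\ratio$-approximation (achievability), and then that no $\mu$-element set can do strictly better (optimality). For achievability I would partition the front into the $\mu-1$ interior intervals $[x_i,x_{i+1}]$ together with the two boundary pieces $[x_{\min},x_1]$ and $[x_\mu,x_{\max}]$. On an interior interval the reasoning is identical to the one used in \thmref{lin} and \thmref{con}: since $f$ is strictly decreasing, every $x\le z_i$ satisfies $x/x_i\le z_i/x_i=\ratio$ and is therefore $\ratio$-approximated by $x_i$, while every $x\ge z_i$ satisfies $f(x)/f(x_{i+1})\le f(z_i)/f(x_{i+1})=\ratio$ and is $\ratio$-approximated by $x_{i+1}$. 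For the left piece the convention $f(z_0)=f(x_{\min})=\ratio\cdot f(x_1)$ guarantees that $x_1$ covers all of $[x_{\min},x_1]$ within $\ratio$ (the first coordinate is free, as these points lie to the left of $x_1$); symmetrically $z_\mu=x_{\max}=\ratio\cdot x_\mu$ shows that $x_\mu$ covers $[x_\mu,x_{\max}]$. Hence $\APP{X}\le\ratio$.

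For optimality, the reformulation I would use is that a single point $(x',f(x'))$ $\ratio'$-approximates exactly those front points whose $x$-coordinate lies in $[f^{-1}(\ratio'\,f(x')),\,\ratio'\,x']$, the left endpoint coming from the second coordinate and the right endpoint from the first. I would then define $\Phi_\rho(t):=\rho\cdot f^{-1}(f(t)/\rho)$, the furthest $x$-coordinate still coverable by one more point whose coverage interval reaches back to $t$. The hypotheses say precisely that $X$ is the $\rho=\ratio$ orbit of $x_{\min}$: they give $\Phi_\ratio(z_{i-1})=z_i$ with $z_0=x_{\min}$, so $z_k=\Phi_\ratio^{(k)}(x_{\min})$ and in particular $\Phi_\ratio^{(\mu)}(x_{\min})=z_\mu=x_{\max}$.

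Next I would suppose some $X'=\{x_1'<\dots<x_\mu'\}$ achieves a ratio $\ratio'<\ratio$ and derive a contradiction. Writing $b_k':=\ratio'\,x_k'$ for the rightmost $x$-coordinate covered by the first $k$ points, the requirement that the coverage intervals union to the whole front without gaps forces $b_1'\le\Phi_{\ratio'}(x_{\min})$ and, inductively, $b_{k+1}'\le\Phi_{\ratio'}(b_k')$, whence $b_\mu'\le\Phi_{\ratio'}^{(\mu)}(x_{\min})$. Because $f$ is strictly decreasing, $\Phi_\rho(t)$ is strictly increasing both in $t$ and in $\rho$, so $\ratio'<\ratio$ gives $\Phi_{\ratio'}^{(\mu)}(x_{\min})<\Phi_\ratio^{(\mu)}(x_{\min})=x_{\max}$ by a routine monotone-iteration induction. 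Thus $b_\mu'<x_{\max}$, meaning the rightmost portion of the front is left uncovered, contradicting that $X'$ is a $\ratio'$-approximation.

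I expect the main obstacle to be this optimality direction, specifically justifying the recursion $b_{k+1}'\le\Phi_{\ratio'}(b_k')$: one must argue that a valid cover leaves no gap, so that the left endpoint of the $(k{+}1)$-st interval cannot exceed $b_k'$, and then combine this with the monotonicity of the iterated map $\Phi$ in the ratio. This is the analogue of the shifting/iteration argument of \lemref{optapprox}, now complicated by the two \emph{free} extreme points; the conventions $z_0=x_{\min}$ and $z_\mu=x_{\max}$ are exactly what absorb that extra freedom and make the two boundary pieces behave like ordinary covering steps.
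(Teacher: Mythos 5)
Your proposal is correct, but its optimality argument is organized genuinely differently from the paper's. The paper compares a hypothetical competitor $X'$ with ratio at most $\ratio$ \emph{pointwise against $X$}: it takes the smallest index $i$ with $x_i'\neq x_i$ and distinguishes two cases --- if $x_i'<x_i$, then $z_i'=\ratio x_i'<z_i$ forces $x_{i+1}'<x_{i+1}$, and iterating pushes $x_\mu'$ strictly left of $x_\mu$, so that $x_{\max}$ is only approximated within $x_{\max}/x_\mu'>\ratio$; if $x_i'>x_i$, the points in $\bigl(z_{i-1},f^{-1}(\ratio f(x_i'))\bigr)$ are immediately uncovered. Your argument never compares $X'$ with $X$ at all: you bound the maximal reach $b_\mu'=\ratio' x_\mu'$ of \emph{any} $\mu$-point set at ratio $\ratio'$ by the orbit $\Phi_{\ratio'}^{(\mu)}(x_{\min})$ of the covering map, note that the hypotheses say precisely that $X$ is the $\rho=\ratio$ orbit with $\Phi_{\ratio}^{(\mu)}(x_{\min})=x_{\max}$, and finish by monotonicity of $\Phi_\rho$ in $\rho$. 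This buys a uniform induction that absorbs both of the paper's cases and makes transparent why $\ratio$ is exactly the critical ratio; what it gives up is uniqueness --- the paper's exchange argument shows $X$ is the \emph{only} set with ratio at most $\ratio$, while yours shows only that no set beats $\ratio$ (which is all the statement strictly needs). One detail you flagged must indeed be nailed down, and the order of steps in your sketch does not quite do it: the step $b_{k+1}'\le\Phi_{\ratio'}(b_k')$ is only justified when $b_k'<x_{\max}$ (otherwise no uncovered points to the right constrain $x_{k+1}'$, and $f(b_k')$, hence $\Phi_{\ratio'}(b_k')$, is not even defined, since $f$ lives on $[x_{\min},x_{\max}]$). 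So you cannot first run the recursion to $b_\mu'\le\Phi_{\ratio'}^{(\mu)}(x_{\min})$ and only afterwards compare orbits; instead, carry the comparison inside the induction, proving jointly that $b_k'\le\Phi_{\ratio'}^{(k)}(x_{\min})\le z_k$ for all $k<\mu$, where $z_k=\Phi_{\ratio}^{(k)}(x_{\min})<x_{k+1}\le x_\mu<\ratio x_\mu=x_{\max}$. This keeps every quantity strictly inside the domain of $f$ (indeed $f(t)/\ratio'>f(x_{\max})$ for all $t\le z_k$), licenses each recursion step, and at $k=\mu$ yields $b_\mu'\le\Phi_{\ratio'}^{(\mu)}(x_{\min})<\Phi_{\ratio}^{(\mu)}(x_{\min})=x_{\max}$, the desired contradiction. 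With that strengthening your proof is complete; your achievability half coincides with the paper's.
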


\begin{proof}
For each $i$, $1 \leq i \leq \mu-1$, $z_i$ is the worst approximated point in the interval $[x_i, x_{i+1}]$. Furthermore, $z_0=x_{\min}$ is the worst approximated point in the interval $[x_{\min}, x_1]$ and $z_{\mu}=x_{\max}$ is the worst approximated point in the interval $[x_{\mu}, x_{\max}]$.
This implies that the approximation ratio of $X$ is 
$$\ratio = \max \left\{\frac{f(z_0)}{f(x_1)}, \frac{z_{\mu}}{x_{\mu}},  \frac{z_i}{x_i}, \frac{f(z_i)}{f(x_{i+1})} \mid 1\leq i \leq \mu-1 \right\} .$$ 

Assume there is a different solution set $X' = \{x'_{1}, \dots, x'_{\mu}\}$ with $x'_{i} < x'_{i + 1}$ for all $1 \leq i < \mu$ and approximation ratio at most $\ratio$.

Since $X' \neq X$ there is an index $i$ with $x'_{i} \neq x_{i}$.
Consider the smallest such index.
We distinguish the two cases $x'_{i} < x_{i}$ and $x'_{i} > x_{i}$.

Assume $x'_{i} < x_{i}$.
Consider the point $z_i' = \ratio \cdot x'_{i}$.
Since $z_i'= \ratio \cdot x'_{i} < \ratio \cdot x_{i} = z_{i}$, we derive $x'_{i + 1} < x_{i + 1}$ as otherwise $\frac{f(z'_i)}{f(x_{i+1}')} \geq \frac{f(z'_i)}{f(x_{i+1})} > \ratio$  would contradict our assumption that $X'$ achieves an approximation ratio of at most $\ratio$.
Repeating the argument $(\mu - i)$-times leads to $x'_{\mu} < x_{\mu}$, which gives $\ratio \cdot x'_{\mu} < \ratio \cdot x_{\mu} = x_{\max}$.
This implies that the approximation of $x_{\max}$ by $X'$ is $\frac{ x_{\max}}{x'_{\mu}} > \ratio$ which contradicts the assumption that $X'$ achieves an approximation ratio of at most $\delta$.

Assume $x'_{i} > x_{i}$.
Then all points within $(z_{i - 1}, f^{-1}(\delta \cdot f(x'_{i})))$ are not $\ratio$-approximated.
The interval is not empty since $f^{-1}(\ratio \cdot f(x'_{i})) > x'_{i} > x_{i} > z_{i - 1}$ due to $\ratio > 1$ and $f$ strictly monotonically decreasing.
We have another contradiction.

Altogether, we get that $X = \OPTAPPr{\mu}{f}$  is the unique set achieving an approximation ratio of at most $\ratio$ and therefore an optimal approximation distribution.
\end{proof}

The previous lemma can be used to compute the overall optimal approximation distribution of $\mu$ for a given function describing the Pareto front. In the following, we will use this to compare it to the optimal hypervolume distribution depending on the chosen reference point. Again we consider the class of linear fronts and the class of convex fronts given in Section~\ref{sec3}.

%\begin{figure}[tb]
%    \centering
%        \includegraphics[viewport=21mm 73mm 192mm 208mm,width=.33\textwidth,clip]{}
%    \hspace*{.05\textwidth}
%    \caption{Optimal hypervolume distributions depending on the reference point for linear functions
%    defined on $[x_{\min},x_{\max}]$.
%       Depending on the position of the reference point, the leftmost point of a optimal
%       hypervolume distribution $x_1$ is either at the border ($x_1=x_{\min}$) or
%       inside the domain ($x_1>x_{\min}$).  Similarly, the rightmost point $x_{\mu}$
%       is either at the border ($x_{\mu}=x_{\max}$) or inside the domain ($x_{\mu}<x_{\max}$).}
%    \label{fig:optlinr}
%\end{figure}

\newcommand{\renumberlinr}[1]{
\hspace*{-3mm}
\begin{tikzpicture}
	\node (full) at (0,0) [inner sep=0pt,above right]
	%% LINKS UNTEN RECHTS OBEN
	            {\includegraphics[width=\subfigwidth,viewport=32mm 79mm 192mm 208mm,width=\doubledistfigwidth,clip]{pdfs/#1.pdf}}; 
	\foreach \x in {0,0.5,...,2}{
	\pgfmathparse{0.035+2.7*(\x)}
	\node at (\pgfmathresult,.1) [below] {\footnotesize\pgfmathprintnumber{\x}};
	}
	\foreach \y in {0,0.5,...,2}{
	\pgfmathparse{0.05+2.13*(\y)}
	\node at (.1,\pgfmathresult) [left] {\footnotesize\pgfmathprintnumber{\y}};
	}
\end{tikzpicture}
\hspace*{-3mm}
}

\newcommand{\renumberlinzoom}[1]{
\hspace*{-3mm}
\begin{tikzpicture}
	\node (full) at (0,0) [inner sep=0pt,above right]
	%% LINKS UNTEN RECHTS OBEN
	            {\includegraphics[width=\subfigwidth,viewport=32mm 79mm 192mm 208mm,width=\doubledistfigwidth,clip]{pdfs/#1.pdf}}; 
	\foreach \x in {0.92,0.94,...,1.01}{
	\pgfmathparse{0.51+59.98*(\x-0.92)}
	\node at (\pgfmathresult,.1) [below] {\footnotesize\pgfmathprintnumber{\x}};
	}
	\foreach \y in {0.92,0.94,...,1.01}{
	\pgfmathparse{0.425+47.3*(\y-0.92)}
	\node at (.1,\pgfmathresult) [left] {\footnotesize\pgfmathprintnumber{\y}};
	}
\end{tikzpicture}
\hspace*{-3mm}
}

\subsection{Analytic results for linear fronts}

We first consider linear fronts.  The optimal multiplicative approximation factor
can be easily determined with \lemref{optapprox2} as shown in the following theorem.

\begin{theorem}\label{thm:optlinr}
	Let $f \colon [1,(1-d)/c] \to [1,c+d]$ 
	be a linear function $f(x) = c \cdot x + d$ where $c < 0$ and $d > 1 - c$ are arbitrary constants.
	Then
	\begin{equation*}
		\OPTAPPr{\mu}{f} = \{x_1, \ldots, x_{\mu} \},
	\end{equation*}
	where $x_i = 
	\frac{d\,(\mu c - i\,(c+d-1))}{c\,(c+(\mu+1)\,d-1)}$,
	$1 \leq i \leq \mu$, and
	\[
	\APP{\OPTAPPr{\mu}{f}} = \frac{c+(\mu+1)\,d-1}{\mu d}.
	\]
\end{theorem}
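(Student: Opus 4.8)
The plan is to apply \lemref{optapprox2} to the candidate set $X = \{x_1,\dots,x_\mu\}$, so it suffices to produce witnesses $z_0 = x_{\min},\, z_1,\dots,z_{\mu-1},\, z_\mu = x_{\max}$ satisfying the two bulleted conditions, together with the ordering and range hypotheses. Rather than verify the closed forms blindly, I would re-derive them. Treating $x_1,\dots,x_\mu,z_1,\dots,z_{\mu-1}$ and $\ratio$ as unknowns, the conditions of \lemref{optapprox2} form a system of $2\mu$ equations. Substituting $z_i = \ratio\,x_i$ into $f(z_i) = \ratio f(x_{i+1})$ and using $f(x)=cx+d$, each \emph{interior} equation ($1\le i\le\mu-1$) collapses to $c\,\ratio\,(x_i - x_{i+1}) = d(\ratio-1)$, i.e.\ a \emph{constant} spacing $x_{i+1}-x_i = \frac{d(\ratio-1)}{-c\,\ratio} > 0$. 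Thus the interior conditions alone force the optimal points to be equally spaced, exactly as in the extreme-point-constrained linear case of \thmref{lin}, only now shifted inward.

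The two boundary conditions close the system. The condition $z_\mu = x_{\max} = \ratio\,x_\mu$ gives $x_\mu = \frac{1-d}{c\,\ratio}$ (recall $x_{\max}=(1-d)/c$), while $f(z_0) = f(x_{\min}) = \ratio f(x_1)$, that is $c+d = \ratio(c x_1 + d)$, gives $x_1 = \frac{c+d-d\ratio}{c\,\ratio}$. Combining these with the common spacing through $x_1 = x_\mu - (\mu-1)\cdot\frac{d(\ratio-1)}{-c\,\ratio}$ and clearing denominators yields a single linear equation in $\ratio$, which I expect to reduce to $\mu d\,\ratio = c+(\mu+1)d-1$, i.e.\ $\ratio = \frac{c+(\mu+1)d-1}{\mu d}$, matching the claimed approximation ratio. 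Back-substituting $\ratio$ into $x_i = x_1 + (i-1)\cdot\frac{d(1-\ratio)}{c\,\ratio}$ and simplifying (using $c\,\ratio = \frac{c\,(c+(\mu+1)d-1)}{\mu d}$) should reproduce the stated coordinates $x_i = \frac{d\,(\mu c - i\,(c+d-1))}{c\,(c+(\mu+1)d-1)}$.

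It then remains to confirm the structural hypotheses. The inequality $\ratio > 1$ is equivalent to $c+(\mu+1)d-1 > \mu d$, i.e.\ $c+d-1>0$, which is precisely the standing assumption $d>1-c$; this is the single point where that hypothesis is needed. Strict monotonicity $x_i<x_{i+1}$ follows from positivity of the spacing. For the nesting $x_i < z_i < x_{i+1}$, the left inequality is just $x_i < \ratio\,x_i$ (immediate from $\ratio>1$, $x_i>0$), and the right follows since $f(z_i) = \ratio f(x_{i+1}) > f(x_{i+1})$ (as $\ratio>1$ and $f\ge 1>0$) forces $z_i < x_{i+1}$ by strict monotonicity of $f$. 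Domain membership is equally clean: $x_\mu = x_{\max}/\ratio < x_{\max}$, and $f(x_1)=f(x_{\min})/\ratio < f(x_{\min})$ forces $x_1 > x_{\min}$, so every $x_i\in[x_{\min},x_{\max}]$. With all hypotheses met, \lemref{optapprox2} delivers the conclusion.

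I expect the only real obstacle to be bookkeeping rather than mathematics: keeping the boundary indices $z_0 = x_{\min}$ and $z_\mu = x_{\max}$ distinct from the interior witnesses when assembling the system, and tracking signs carefully, since $c<0$ reverses several inequalities (notably in the spacing expression and in arguing $x_1>x_{\min}$). The underlying system is linear throughout, which is exactly why both $\ratio$ and the coordinates $x_i$ emerge in closed form.
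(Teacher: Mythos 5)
Your proposal is correct and takes essentially the same route as the paper: both reduce the theorem to \lemref{optapprox2} and solve the resulting system of equations for $\ratio$ and the $x_i$. The paper simply asserts the unique solution of that system, while you derive it explicitly (constant spacing from the interior conditions, boundary conditions closing the system in $\ratio$) and additionally verify the lemma's hypotheses ($\ratio>1$ via $d>1-c$, strict ordering, nesting of the $z_i$, and domain membership), which is a more detailed rendition of the same argument.
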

\begin{proof}
Using Lemma~\ref{lem:optapprox2},
we get the following system of $2\,(\mu+1)$ linear equations
\begin{align*}
z_0 &= 1,\\
z_i &= \delta \, x_i \text{\qquad\qquad for $i = 1 \dots \mu$},\\
z_\mu &= (1-d)/c,\\
c \,y_i +d &= \delta \,( c\, x_{i+1} +d) \text{\quad for $i = 0 \ldots \mu-1$}.
\end{align*}
The unique solution of this system of linear equations is
\begin{align*}
\delta &= \frac{c+(\mu+1)\,d-1}{\mu d},\\
x_i &= \frac{d\,(\mu \, c - i\,(c+d-1))}{c\,(c+(\mu+1)\,d-1)}  \text{\qquad for $i = 1 \dots \mu$},\\
z_i &= 1-i \, \frac{c+d-1}{\mu \, c} \text{\qquad\qquad for $i = 0 \dots \mu$},
\end{align*}
which proves the claim.
\end{proof}

It remains to analyze the approximation factor achieved by an optimal hypervolume distribution. 
The impact of the reference point for the class of linear functions has been 
investigated by Brockhoff in~\cite{Brockhoff2010}. Using his results, we can conclude the 
following theorem.

\begin{theorem}\label{thm:linr}
	Let $f \colon [1,(1-d)/c] \to [1,c+d]$ 
	be a linear function $f(x) = c \cdot x + d$ where $c < 0$ and $d > 1 - c$ are arbitrary constants.
	Let $\mu\geq2$ and
	\begin{align*}
	M_1 &:= \min\left\{
		c+d-r_1,\frac{\mu}{\mu-1}\cdot(c+d-1),\right.\\
		&
		\hspace*{1cm}
		\left.(d-1)+\frac{d+(\mu+1)\,c}{\mu}+ \frac{r_2+d-1}{\mu c}+ \frac{d-1}{\mu c^2}
		\right\}, \\
	M_2 &:= \min\left\{
            	\frac{1-d}{c}-r_2,
		\frac{\mu}{\mu-1} \cdot \frac{1-c-d}{c},\frac{1-c-d}{c}+\frac{r_1-c-d}{\mu c}
                        \right\}.
	\end{align*}
	%	\NOTE{T}{This is a new notation to make clear that
%	$\OPTHYP{}$ depends on $r$!}
	Then the optimal hypervolume distribution with respect to the reference point $r$ is
	\[
		\OPTHYPr{\mu,r}{f} = \{x_1, \ldots, x_{\mu} \}
	\]
	where
	\[
	 x_i =  \frac{M_1-d+1}{c}  + i\cdot \frac{d-M_1+(M_2+1)\,c-1}{c\,(\mu+1)} .
	\]
\end{theorem}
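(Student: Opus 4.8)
The plan is to characterize $\OPTHYPr{\mu,r}{f}$ directly as the maximizer of a concave program and then read off the stated formula; this is essentially the route of Brockhoff~\cite{Brockhoff2010}, whose exhaustive analysis of linear fronts one specializes to the present parametrization. First I would write the hypervolume of a sorted set $x_1 \le \dots \le x_\mu$ explicitly as
\[
  \HYP_r(X) = \sum_{i=1}^{\mu} (x_i - x_{i-1})\,(c\,x_i + d - r_2), \qquad x_0 := r_1 ,
\]
which is a quadratic function of $(x_1,\dots,x_\mu)$. Its quadratic part equals $c\bigl(\tfrac12 x_1^2 + \tfrac12 x_\mu^2 + \tfrac12\sum_{i=2}^{\mu}(x_i-x_{i-1})^2\bigr)$, so it is strictly concave because $c<0$. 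Since the feasible set $\{x_{\min}\le x_1\le\dots\le x_\mu\le x_{\max}\}$ is a convex polytope, the maximizer is unique and is completely determined by the Karush--Kuhn--Tucker conditions; it therefore suffices to verify that the stated points satisfy them.

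Next I would analyze interior stationarity. Setting $\partial \HYP_r/\partial x_j = 0$ for a free variable $x_j$ gives $x_{j+1}-x_j = x_j-x_{j-1}$, i.e.\ equal spacing, while the conditions at the two ends tie the spacing to the reference point: the left end forces a virtual point $x_0 = r_1$, and the right end forces a virtual point $x_{\mu+1}$ with $c\,x_{\mu+1}+d = r_2$. Thus, absent the domain constraints, the optimum consists of $\mu+2$ equally spaced abscissae running from the $x$\nobreakdash-reference $r_1$ to the point where the extended front meets the $y$\nobreakdash-reference $r_2$, the $\mu$ inner ones being the solution. This is the reference-dependent refinement of the equidistance result of Auger et al.~\cite{Augetal2009} used in \thmref{lin}, and it already explains the role of $r_1$ in $M_1$, of $r_2$ in $M_2$, and the common factor $\tfrac1{\mu+1}$ in the claimed $x_i$.

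The substance of the proof is the feasibility clamping. Whenever the unconstrained leftmost (resp.\ rightmost) point would leave $[x_{\min},x_{\max}]$, the corresponding constraint $x_1 = x_{\min}$ (resp.\ $x_\mu = x_{\max}$) becomes active and the associated virtual endpoint is replaced accordingly, which in turn changes the spacing and couples the two sides. Writing the effective virtual endpoints as $x_0 = (M_1-d+1)/c$ and $x_{\mu+1} = M_2+1$, the three arguments of each minimum correspond to the three possible active-constraint regimes: the reference point is binding, the near extreme is binding, or the opposite extreme is binding and feeds back through the common spacing (the third, coupled term). For each regime I would substitute the candidate $x_i$ into the stationarity equations for the free variables and check complementary slackness together with nonnegativity of the multipliers of the active constraints; strict concavity then upgrades ``KKT point'' to ``unique global maximizer'', yielding $\OPTHYPr{\mu,r}{f} = \{x_1,\dots,x_\mu\}$.

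I expect the case analysis of the clamping to be the main obstacle: one must determine, as a function of $r$, exactly which of the three terms attains each minimum, and the coupled regime (the third term of $M_1$, involving $r_2$, and of $M_2$, involving $r_1$) requires checking that the induced spacing keeps all remaining points feasible and all multipliers nonnegative. Matching this partition of reference-point space to Brockhoff's exhaustive list, rather than the algebra of solving the linear system within each case, is where the care is needed.
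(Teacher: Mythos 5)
Your overall strategy---writing $\HYP_r$ as a strictly concave quadratic in the sorted abscissae (your expansion of the quadratic part is correct), deriving equal spacing from interior stationarity, identifying the virtual endpoints $x_0=r_1$ and $x_{\mu+1}=(r_2-d)/c$, and then treating the clamping at $x_{\min}$ and $x_{\max}$ via KKT---is sound, and it is genuinely different from the paper's proof: the paper proves nothing itself, it invokes Theorem~3 of Brockhoff~\cite{Brockhoff2010} together with a minimization-to-maximization translation. Your route is the self-contained one, and it is exactly the right way to attack the statement.

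The gap lies in the step you defer but assert will succeed, namely that ``the three arguments of each minimum correspond to the three possible active-constraint regimes'' and that the stated points then verify the KKT conditions. That correspondence fails unless $c=-1$. The stated formula encodes the virtual endpoints via $f(x_0)=M_1+1$ and $x_{\mu+1}=M_2+1$. In the regime where no domain constraint is active, your own (correct) stationarity conditions force $x_0=r_1$, i.e.\ $M_1=c\,r_1+d-1$, whereas the theorem's first argument is $c+d-r_1$; these coincide iff $(c+1)(r_1-1)=0$. Likewise on the right, the required value $(r_2-d)/c-1$ equals the stated $(1-d)/c-r_2$ iff $(c+1)(r_2-1)=0$. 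Concretely, take $c=-2$, $d=4$ (front from $(1,2)$ to $(3/2,1)$), $\mu=2$, $r=(0.9,0.9)$: stationarity gives the true optimum $x_1=67/60$, $x_2=4/3$ with hypervolume $\approx 0.2817$, while the theorem's recipe gives $M_1=1.1$, $M_2=0.6$, hence $x_1\approx 1.167$, $x_2\approx 1.383$ with hypervolume $\approx 0.2767$, which is strictly worse. So the statement you are trying to prove is itself false for general slopes; only the second argument of each minimum (the both-extremes-clamped regime, where the formula reduces to \thmref{lin}) translates correctly for all $c<0$, which localizes the defect to the reference-point-dependent terms in the paper's translation of Brockhoff's result (note the paper's own numerical illustration only exercises $f(x)=3-x$, i.e.\ $c=-1$, where everything checks out). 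Carried out honestly, your KKT verification would expose this discrepancy rather than complete the proof; as written, the proposal asserts a verification that cannot go through.
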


\thmref{lin} follows immediately from Theorem~3 of Brockhoff~\cite{Brockhoff2010}
by translating their minimization setting into our maximization setting.
Knowing the set of points which maximize the hypervolume, we can now determine
the achieved approximation depending on the chosen reference point.

\begin{theorem}\label{thm:lina}
	Let $f \colon [1,(1-d)/c] \to [1,c+d]$ 
	be a linear function $f(x) = c \cdot x + d$ where $c < 0$ and $d > 1 - c$
	are arbitrary constants.
	Let $\mu\geq2$ and $M_1$ and $M_2$ defined as in \thmref{linr}, then
	\[
		\APP{\OPTHYPr{\mu,r}{f}} = \max\{A_\ell, A_c, A_r \}
	\]
	where
	\begin{align*}
	 A_\ell :=  \frac{(c+d)\,(\mu+1)}{\mu+c+d+M_1\mu+M_2 c},\\
	 A_c := \frac{d\,(\mu+1)}{d \mu +c+2 d -1-M_1+M_2 c},\\
	 A_r := \frac{(1-d)\,(\mu+1)}{c\mu -d+1+M_1+ M_2 c \mu}.
	\end{align*}
\end{theorem}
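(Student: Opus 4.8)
The plan is to combine the explicit description of the optimal hypervolume distribution from \thmref{linr} with a region-by-region analysis of where the front is worst approximated, exactly in the spirit of \lemref{optapprox2}. By \thmref{linr} the points $\OPTHYPr{\mu,r}{f}=\{x_1,\dots,x_\mu\}$ satisfy
\[
	c\,x_i = (M_1-d+1) + i\cdot\frac{d-M_1+(M_2+1)\,c-1}{\mu+1},
\]
so they form an arithmetic progression with constant spacing $x_{i+1}-x_i=\frac{d-M_1+(M_2+1)\,c-1}{c\,(\mu+1)}$. Since the extreme points need no longer be contained, the front $[x_{\min},x_{\max}]=[1,(1-d)/c]$ splits into three types of regions: the left boundary strip $[x_{\min},x_1]$, the $\mu-1$ interior intervals $[x_i,x_{i+1}]$, and the right boundary strip $[x_\mu,x_{\max}]$. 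I would show that the global approximation ratio is the maximum of the worst ratios incurred on these three region types, and identify those with $A_\ell$, $A_c$, and $A_r$.

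First I would treat the two boundary strips. For $x\in[x_{\min},x_1]$ the only candidate approximators lying to the right are $x_1,\dots,x_\mu$; since $f$ is strictly decreasing, using $x_1$ minimises $f(x)/f(x_j)$ while the $x$-coordinate constraint $x\le\ratio\,x_j$ is slack, so $x_1$ is the best approximator and the worst point is the leftmost one, $x_{\min}=1$, giving ratio $f(1)/f(x_1)=(c+d)/f(x_1)$. Substituting the formula for $x_1$ and simplifying yields $f(x_1)=\frac{\mu+c+d+M_1\mu+M_2c}{\mu+1}$, so this ratio equals $A_\ell$. Symmetrically, on $[x_\mu,x_{\max}]$ the best approximator is $x_\mu$ and the worst point is $x_{\max}=(1-d)/c$, giving ratio $x_{\max}/x_\mu$; computing $c\,x_\mu=\frac{c\mu-d+1+M_1+M_2c\mu}{\mu+1}$ turns this into $A_r$. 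When the reference point forces $x_1=x_{\min}$ or $x_\mu=x_{\max}$ these ratios collapse to $1$ and are simply dominated by the others, so the formula remains valid.

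Next I would handle an interior interval $[x_i,x_{i+1}]$. As in the proof of \thmref{lin}, the point that is worst approximated there is the one balancing its two neighbouring approximators, i.e.\ the $z_i$ with $z_i/x_i=f(z_i)/f(x_{i+1})$; writing $z_i=\ratio\,x_i$ and $f(z_i)=\ratio\,f(x_{i+1})$ and using $f(x)=cx+d$ gives, after eliminating $z_i$,
\[
	\ratio = \frac{d}{d+c\,(x_{i+1}-x_i)}.
\]
Because the spacing $x_{i+1}-x_i$ is constant, this ratio is the same for every interior interval; substituting the spacing and simplifying produces exactly $A_c$. One checks $\ratio>1$, which also certifies $x_i<z_i<x_{i+1}$, so $z_i$ genuinely lies inside the interval. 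Taking the maximum over the left strip, all interior intervals, and the right strip then gives $\APP{\OPTHYPr{\mu,r}{f}}=\max\{A_\ell,A_c,A_r\}$.

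The routine part is the algebra that rewrites $f(x_1)$, $x_\mu$, and the interior ratio in terms of $M_1$ and $M_2$. The conceptual point to get right — and the main difference from \thmref{lin} — is that, with the extremes no longer forced into the distribution, the two boundary strips can now be the binding regions, so all three quantities must be compared rather than only the interior balance. The only genuine case checking is verifying that $x_1$ and $x_\mu$ really are the optimal approximators for their strips (a one-line monotonicity argument) and handling the degenerate situations where a boundary strip is empty.
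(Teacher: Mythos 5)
Your proposal is correct and follows essentially the same route as the paper: invoke \thmref{linr} for the point positions, split the front into the left strip $[1,x_1]$, the interior intervals, and the right strip $[x_\mu,(1-d)/c]$, identify the worst points as $\tilde{x}=1$, the balanced interior points, and $\tilde{x}=(1-d)/c$ respectively, and take the maximum of the resulting ratios $A_\ell$, $A_c$, $A_r$. Your algebraic shortcut for the interior ratio, $\ratio = d/(d+c\,(x_{i+1}-x_i))$ with constant spacing, is a slightly tidier packaging of the same balance computation the paper performs by solving for $\tilde{x}$, and your explicit monotonicity check that $x_1$ and $x_\mu$ are the best approximators of their strips is an accurate spelling-out of what the paper asserts implicitly.
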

\begin{proof}
	We want to determine the approximation ratio of the optimal hypervolume
	distribution $\OPTHYPr{\mu,r}{f} = \{x_1, \ldots, x_{\mu} \}$ as defined
	in \thmref{linr}. For this, we distinguish between three cases.
	The approximation ratio of the inner points $\tilde{x}$ with
	$x_1\leq\tilde{x}\leq x_\mu$ can be determined
	as in the proof of \thmref{lin}.  It suffices to plug
	the definition of $x_i$ and $x_{i+1}$ from \thmref{linr}
	into equation~\ref{eqn:approx}. Let $\tilde{x}$ be the solution of this linear
	equation.  Then the inner approximation factor is \[
	A_c=\frac{\tilde{x}}{x_i}
	= \frac{d\,(\mu+1)}{d \mu +c+2 d -1-M_1+M_2 c},
	\]
	which is independent of $i$.
	
	It remains to determine the outer approximation factors.  The approximation
	factor of the points $\tilde{x}$ with $1\leq\tilde{x}\leq x_1$ is maximized
	for $\tilde{x}=1$.
	The left approximation factor is therefore
	\[
	A_\ell=\frac{c+d}{f(x_1)}
	= \frac{(c+d)\,(\mu+1)}{\mu+c+d+M_1\mu+M_2 c}.
	\]
	The approximation factor 
	 of the points $\tilde{x}$ with $x_{\mu}\leq\tilde{x}\leq (1-d)/c$ is maximized
	 for $\tilde{x}=x_\mu$.
	 The right approximation factor is therefore 
	\[
	A_r=
	\frac{1-d}{c x_\mu}
	=\frac{(1-d)\,(\mu+1)}{c\mu -d+1+M_1+ M_2 c \mu}.\]
	The overall approximation factor is then the largest approximation factor
	of the three parts, that is, 
	$\max\{A_\ell, A_c, A_r \}$.
\end{proof}

\newcommand{\renumberconr}[1]{
\hspace*{-3mm}
\begin{tikzpicture}
	\node (full) at (0,0) [inner sep=0pt,above right]
	%% LINKS UNTEN RECHTS OBEN
	            {\includegraphics[width=\subfigwidth,viewport=32mm 79mm 192mm 208mm,width=\doubledistfigwidth,clip]{pdfs/#1.pdf}}; 
	\foreach \x in {0,0.5,...,2}{
	\pgfmathparse{0.035+2.7*(\x)}
	\node at (\pgfmathresult,.1) [below] {\footnotesize\pgfmathprintnumber{\x}};
	}
	\foreach \y in {0,0.5,...,2}{
	\pgfmathparse{0.05+2.13*(\y)}
	\node at (.1,\pgfmathresult) [left] {\footnotesize\pgfmathprintnumber{\y}};
	}
\end{tikzpicture}
\hspace*{-3mm}
}

\newcommand{\renumberconzoom}[1]{
\hspace*{-3mm}
\begin{tikzpicture}
	\node (full) at (0,0) [inner sep=0pt,above right]
	%% LINKS UNTEN RECHTS OBEN
	            {\includegraphics[width=\subfigwidth,viewport=32mm 79mm 192mm 208mm,width=\doubledistfigwidth,clip]{pdfs/#1.pdf}}; 
	\foreach \x in {0.94,0.96,...,1.01}{
	\pgfmathparse{1.238+67.4*(\x-0.94)}
	\node at (\pgfmathresult,.1) [below] {\footnotesize\pgfmathprintnumber{\x}};
	}
	\foreach \y in {0.94,0.96,...,1.01}{
	\pgfmathparse{0.99+53.1*(\y-0.94)}
	\node at (.1,\pgfmathresult) [left] {\footnotesize\pgfmathprintnumber{\y}};
	}
\end{tikzpicture}
\hspace*{-3mm}
}

\subsection{Analytic results for a class of convex fronts}

\begin{figure}[tb]
    \centering
        \includegraphics[viewport=21mm 73mm 192mm 208mm,width=.4\textwidth,clip]{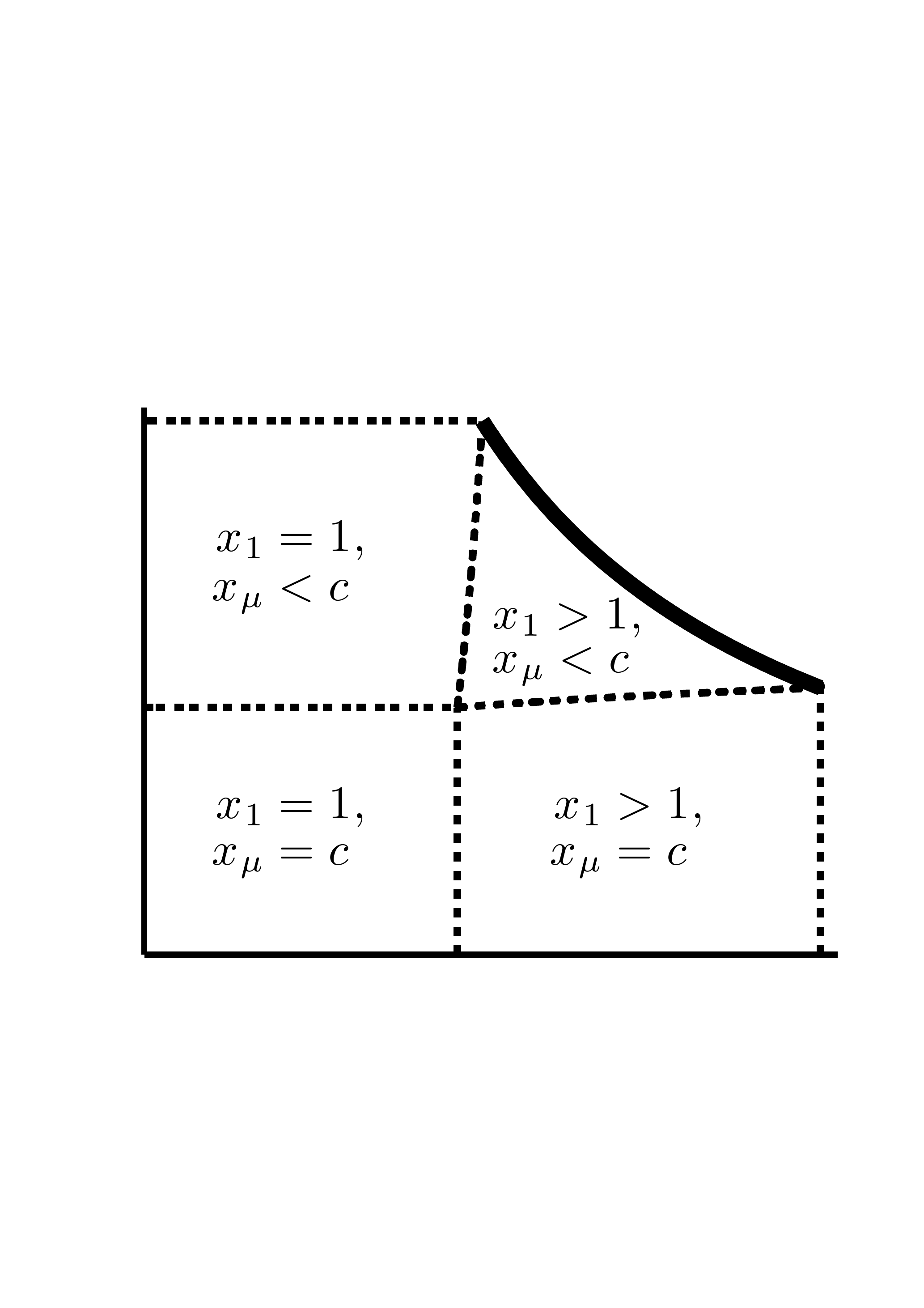}
    \hspace*{.05\textwidth}
    \caption{Position of the extremal points of optimal hypervolume distributions depending on the reference point for convex functions
    $f \colon [1,c] \to [1,c]$ with $f(x) = c/x$ and $c > 1$.
       Depending on the position of the reference point, the leftmost point of a optimal
       hypervolume distribution $x_1$ is either at the border ($x_1=x_{\min}=1$) or
       inside the domain ($x_1>x_{\min}=1$).  Similarly, the rightmost point $x_{\mu}$
       is either at the border ($x_{\mu}=x_{\max}=c$) or inside the domain ($x_{\mu}<x_{\max}=c$).
       (Note that the figure looks very similar for linear functions.)
       }
    \label{fig:hyprefcon}
\end{figure}

We now consider convex fronts and investigate the overall optimal multiplicative approximation first which does not have to include the extreme points.
The following theorem shows how such an optimal approximations looks like and will serve later for the comparison to an optimal hypervolume distribution in dependence of the chosen reference point.

\begin{theorem}\label{thm:optcon}
	Let $f \colon [1,c] \to [1,c]$ be a convex front with $f(x) = c/x$ where $c > 1$ is an arbitrary constant.
	Then
	\begin{equation*}
		\OPTAPPr{\mu}{f} = \{x_1, \ldots, x_{\mu} \},
	\end{equation*}
	where $x_i = c^{\frac{2i-1}{2 \mu}}$, $1 \leq i \leq \mu$, and  $\APP{\OPTAPPr{\mu}{f}} = c^{\frac{1}{2\mu}}$.
\end{theorem}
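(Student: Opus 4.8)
The plan is to apply \lemref{optapprox2}, which reduces the claim to exhibiting a ratio $\delta>1$ together with auxiliary points $z_0,\dots,z_\mu$ satisfying the two families of equations stated there. First I would write out the system explicitly for $f(x)=c/x$ with the prescribed boundary data $z_0 = x_{\min} = 1$ and $z_\mu = x_{\max} = c$. For an interior index the two conditions $z_i = \delta\,x_i$ and $f(z_i) = \delta\,f(x_{i+1})$ combine, via $f(x)=c/x$, into $\frac{c}{\delta x_i} = \frac{\delta c}{x_{i+1}}$, that is, the recurrence $x_{i+1} = \delta^2 x_i$ for $1 \le i < \mu$. Hence the candidate abscissae are forced to form a geometric progression with ratio $\delta^2$, and the whole problem collapses to fixing the two free parameters $\delta$ and $x_1$.

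Next I would use the two boundary equations to pin down these parameters. The equation $f(z_0)=\delta\,f(x_1)$ with $z_0 = 1$ gives $c = \delta\,c/x_1$, hence $x_1 = \delta$; and $z_\mu = \delta\,x_\mu = c$ together with $x_\mu = x_1\,\delta^{2(\mu-1)} = \delta^{2\mu-1}$ forces $\delta^{2\mu} = c$, i.e.\ $\delta = c^{1/(2\mu)}$. Substituting back yields $x_i = \delta^{2i-1} = c^{(2i-1)/(2\mu)}$ and $z_i = \delta\,x_i = c^{i/\mu}$, matching the claimed formula for the points and the claimed approximation ratio $c^{1/(2\mu)}$.

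Finally I would verify the remaining hypotheses of \lemref{optapprox2}, which is the only place any care is needed: that $\delta>1$, that the $x_i$ are strictly increasing, and that each auxiliary point lies strictly inside its interval, $x_i < z_i < x_{i+1}$. All three follow at once from $c>1$ and the comparison of exponents $2i-1 < 2i < 2i+1$ over the common base $c^{1/(2\mu)}$, since $z_i = c^{2i/(2\mu)}$. Once these are checked, \lemref{optapprox2} immediately delivers that $X = \{x_1,\dots,x_\mu\}$ is \emph{the} optimal approximation distribution with approximation ratio $\delta = c^{1/(2\mu)}$. I do not expect a genuine obstacle; the only subtlety is the boundary indexing, namely recognizing that $z_0=x_{\min}$ and $z_\mu=x_{\max}$ are the binding constraints that quantize $\delta$, after which everything reduces to a direct verification.
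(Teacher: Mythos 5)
Your proposal is correct and follows essentially the same route as the paper's proof: both apply Lemma~\ref{lem:optapprox2}, use the boundary conditions $z_0=1$ and $z_\mu=c$ to derive $x_1=\delta$ and $\delta^{2\mu}=c$, and obtain the geometric progression $x_{i+1}=\delta^2 x_i$ from the interior equations. Your explicit verification of $\delta>1$ and $x_i<z_i<x_{i+1}$ is a small extra care the paper leaves implicit, but it is not a different argument.
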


\begin{proof}
Using Lemma~\ref{lem:optapprox2},
we have $z_0 = x_{\min} = 1$ and $z_{\mu}=x_{\max}=c$. Furthermore, 
\begin{eqnarray*}
& & f(z_0) = \delta f(x_1)\\
& \Leftrightarrow & c = \delta \cdot \frac{c}{x_1}\\
& \Rightarrow & x_1 = \delta.
\end{eqnarray*}
 We have $z_i = \delta x_i$, $1 \leq i \leq \mu$, and 
 \begin{eqnarray*}
 & & f(z_i) = \delta \cdot f(x_{i+1})\\
 & \Leftrightarrow & \frac{c}{z_i} = \delta \frac{c}{x_{i+1}}\\
 & = & \frac{c}{\delta x_i} = \delta \frac{c}{x_{i+1}}\\
 & \Rightarrow & \frac{x_{i+1}}{x_i} = \delta^2
 \end{eqnarray*}
for $1\leq i < \mu$. This implies $x_i = \delta^{2i-1}$, $1 \leq i \leq \mu$. 
Furthermore, 
\begin{eqnarray*}
& & y_{\mu} = \delta \ x_{\mu}\\
& \Leftrightarrow & c = \delta^{2\mu}\\
& \Rightarrow & \delta = c^{\frac{1}{2\mu}}
 \end{eqnarray*}
This implies  $x_i = c^{\frac{2i-1}{2 \mu}}$, $1 \leq i \leq \mu$ and  $\APP{\OPTAPPr{\mu}{f}} = c^{\frac{1}{2\mu}}$ which completes the proof.
\end{proof}

Now, we consider the optimal hypervolume distribution depending on the choice of the reference point and compare it to the optimal multiplicative approximation.

\begin{theorem}\label{thm:hyprefcon}
	Let $f \colon [1,c] \to [1,c]$ be a convex front with $f(x) = c/x$ where $c > 1$ is an arbitrary constant.
	Then
	\begin{equation*}
		\OPTHYPr{\mu,r}{f} = \{x_1, \ldots, x_{\mu} \},
	\end{equation*}
	where $x_i$, $1 \leq i \leq \mu$, depend on the choice of the reference point $r$
	as follows:
	\begin{enumerate}
	\item
	If $r_1\leq c^{-1/(\mu-1)}$ and $r_2\leq c^{-1/(\mu-1)}$,
	then $x_1=1$, $x_\mu=c$, 
	$
		x_i  = c^{(i-1)/(\mu-1)}
	$
	for $1 \leq i \leq \mu$,
	and  \[
	\APP{\OPTHYPr{\mu,r}{f}} = c^{\frac{1}{2\mu-2}}.
	\]
	
	\item 
	If $r_1\leq c \,r_2^\mu$ and $r_2\leq c\, r_1^\mu$,
	then $x_1>1$, $x_\mu<c$,
	$
	x_i = ( c^i \, r_1^{\mu-i+1} / r_2^i )^{1/(\mu+1)}
	$
	for $1 \leq i \leq \mu$,
	and 
	\begin{align*}
	&\APP{\OPTHYPr{\mu,r}{f}} =
\max \left\{
\left( \frac{c \cdot r_1^{\mu}}{r_2} \right)^{\frac{1}{\mu+1}}, 
\left(\frac{c}{r_1 r_2} \right)^{\frac{1}{2(\mu+1)}},
c \, \left( \frac{r_2^\mu}{c^\mu \cdot r_1} \right)^{\frac{1}{\mu+1}}
\right\}.
	\end{align*}
	
	\item
	If $r_2\geq c^{-1/(\mu-1)}$, $r_2\leq c$, and $r_2\geq c r_1^\mu$,
	then
	$x_1=1$,
	$
	x_i = (c/ r_2)^{(i-1)/\mu}
	$
	for $1 \leq i \leq \mu$,
	and 
	\[
	\APP{\OPTHYPr{\mu,r}{f}} = \max \left\{
\left(\frac{c}{r_2}\right)^{\frac{1}{2\mu}},
c\left(\frac{c}{r_2}\right)^{\frac{\mu}{\mu-1}}
\right\}.
	\]
%	\[
%	\APP{\OPTHYPr{\mu,r}{f}} = \max \left\{
%\left(\frac{c}{r_2}\right)^{\frac{1}{2\mu}},
%c^{\frac{2\mu-1}{\mu-1}} 
%\, r_2^{\frac{\mu}{1-\mu}}
%\right\}.
%	\]
	
	\item
	If $r_1\geq c^{-1/(\mu-1)}$, $r_1\leq c$ and $r_1\geq c \, r_2^\mu$,
	then
	$x_\mu=c$,
	$
	x_i = r_1 ( c/r_1)^{i/\mu}
	$
	for $1 \leq i \leq \mu$,
	and 
	\[
	\APP{\OPTHYPr{\mu,r}{f}} = 
	\max \left\{
\left(\frac{c}{r_1}\right)^{\frac{1}{2\mu}},
\left(c r_1^{\mu-1}\right)^{\frac{1}{\mu}}
\right\}.
\]	
%	\[
%	\APP{\OPTHYPr{\mu,r}{f}} = 
%	\max \left\{
%\left(\frac{c}{r_1}\right)^{\frac{1}{2\mu}},
%\left(c^{\frac{1}{\mu}}  r_1^{\frac{\mu-1}{\mu}}\right)
%\right\}.
%\]	

	\end{enumerate}
\end{theorem}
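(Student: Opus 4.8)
The plan is to reduce the hypervolume maximization to a convex minimization and then read off the four cases from which of the two endpoint constraints are active. First I would write down the hypervolume relative to $r=(r_1,r_2)$ for a sorted set $1\le x_1\le\dots\le x_\mu\le c$. Slicing the dominated region vertically gives $\HYP_r(X)=\sum_{i=1}^{\mu}(x_i-x_{i-1})(c/x_i-r_2)$ with the convention $x_0=r_1$, valid whenever $r_1<x_1$ and $r_2<c/x_\mu$ (a validity check I would carry out in each region). Expanding and dropping the additive constants $c\mu$ and $r_1r_2$, maximizing $\HYP_r$ is equivalent to minimizing
\[
 g(x_1,\dots,x_\mu)=c\Big(\tfrac{r_1}{x_1}+\sum_{i=2}^{\mu}\tfrac{x_{i-1}}{x_i}\Big)+r_2\,x_\mu
\]
over the region $1\le x_1\le\dots\le x_\mu\le c$.

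The key structural observation is that after the substitution $u_i=\ln x_i$ every summand becomes $e^{u_{i-1}-u_i}$, $r_1 e^{-u_1}$, or $r_2 e^{u_\mu}$, each convex, while the ordering and the endpoint constraints $u_1\ge 0$ and $u_\mu\le\ln c$ become linear. Hence $g$ is a convex function restricted to a polyhedron, so it has a unique global minimizer characterized by the KKT conditions, and no separate global-optimality argument is needed. The interior stationarity equations $\partial g/\partial x_j=0$ for $1<j<\mu$ always force the geometric relation $x_{j+1}/x_j=x_j/x_{j-1}$. At a free left (resp.\ right) endpoint, stationarity reads $x_2/x_1=x_1/r_1$ (resp.\ $r_2=c\,x_{\mu-1}/x_\mu^2$), which I would interpret via \emph{phantom endpoints} $x_0=r_1$ and $x_{\mu+1}=c/r_2=f^{-1}(r_2)$: the real points together with whichever phantoms are active form a single geometric progression with common ratio $\rho$. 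This immediately yields the four candidate sets — a progression from $x_1=1$ to $x_\mu=c$ over $\mu-1$ ratios (Case~1), from $r_1$ to $c/r_2$ over $\mu+1$ ratios (Case~2), from $x_1=1$ to $c/r_2$ over $\mu$ ratios (Case~3), and from $r_1$ to $x_\mu=c$ over $\mu$ ratios (Case~4) — and solving $\rho^{k}=x_{\mathrm{end}}/x_{\mathrm{start}}$ for the common ratio gives the stated $x_i$.

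It then remains to identify which case applies for which $r$, which comes from complementary slackness: the constraint $x_1\ge 1$ is active exactly when the KKT sign condition $\partial g/\partial x_1\ge 0$ holds at $x_1=1$, i.e.\ $r_1\le 1/x_2$. Substituting the case-specific $x_2$ gives $r_1\le c^{-1/(\mu-1)}$ in Case~1 and $r_2\ge c\,r_1^\mu$ in Case~3, and symmetrically at the right endpoint with $r_2$; I would verify these inequalities are precisely the region descriptions in the statement and that the four regions tile the admissible parameter space. For the approximation ratios I would reuse the worst-point characterization from \thmref{con}: on each interior interval $[x_i,x_{i+1}]$ the worst-approximated point yields ratio $\sqrt{x_{i+1}/x_i}$, constant across intervals by the geometric progression; when $x_1>1$ the leftover strip $[1,x_1]$ contributes the $y$-direction ratio $f(1)/f(x_1)=x_1$, and when $x_\mu<c$ the strip $[x_\mu,c]$ contributes the $x$-direction ratio $c/x_\mu$. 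Taking the maximum of the present terms produces the stated $\APP{\cdot}$ expressions.

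The main obstacle I anticipate is the bookkeeping in the last step: checking that the endpoint KKT sign conditions, after substituting each case-specific progression, collapse cleanly to the four stated regions, that those regions cover all relevant reference points without gaps, and that $r_1<x_1$ and $r_2<f(x_\mu)$ hold throughout (so the hypervolume formula applies and the progression is genuinely increasing, e.g.\ $r_1 r_2<c$ in Case~2). The log-substitution removes any difficulty about local-versus-global optimality, so the genuinely delicate part is this complementary-slackness case split together with its validity checks; everything else is routine algebra on geometric progressions.
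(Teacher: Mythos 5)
Your proposal is correct and follows essentially the same route as the paper's proof: the same vertical-slicing expansion of $\HYP_r$, the same reduction to minimizing $c\bigl(r_1/x_1+\sum_{i\geq2}x_{i-1}/x_i\bigr)+r_2x_\mu$, the same stationarity equations forcing a geometric progression, the same four-way split according to which endpoint constraints are active, and the same assembly of the approximation ratio from the common inner ratio $\sqrt{x_{i+1}/x_i}$ plus the two boundary strips $f(1)/f(x_1)=x_1$ and $c/x_\mu$. What your version genuinely adds is rigor that the paper leaves implicit: the paper only computes zeros of the gradient of $h$ and never argues that these critical points are global optima, whereas your substitution $u_i=\ln x_i$ turns the problem into a convex program over a polyhedron, so the KKT conditions are necessary and sufficient, and the case boundaries (e.g.\ $r_1\leq c^{-1/(\mu-1)}$ in case~1, $r_2\geq c\,r_1^\mu$ in case~3) fall out of complementary slackness instead of the paper's ad hoc clamping $x_1=\min\{\max\{\cdot,1\},c\}$; your phantom-endpoint picture also unifies the four progressions cleanly. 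Two caveats. First, convexity of the terms $r_1e^{-u_1}$ and $r_2e^{u_\mu}$ requires $r_1,r_2\geq0$, while cases 1, 3, and 4 of the theorem admit negative reference coordinates; for those you need the (one-line) preliminary observation that $r_1\leq 0$ makes $g$ strictly increasing in $x_1$, pinning $x_1=1$ and removing the offending term before convexity is invoked (symmetrically, $r_2\leq0$ pins $x_\mu=c$). Second, your bookkeeping for the right strip in case~3 yields $c/x_\mu=c\,(r_2/c)^{(\mu-1)/\mu}$, which agrees with the computation inside the paper's proof but not with the formula $c\,(c/r_2)^{\mu/(\mu-1)}$ displayed in the theorem statement; the statement's exponent is a typo (it can exceed $c$, which is impossible for a set containing $x_1=1$), so do not be alarmed that your derivation refuses to reproduce it.
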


\begin{proof}
In order to proof \thmref{hyprefcon}, we distinguish four cases, namely
whether $x_1=1$ or $x_1>1$ and
whether $x_\mu=c$ or $x_\mu<c$.
\figref{hyprefcon} gives an illustration of the four cases.

\emph{
The first case $x_1=1$ and $x_\mu=c$} corresponds to the previous situation where 
we required that both extreme points are included.  The statement
of \thmref{hyprefcon} for this case follows immediately from 
equations~\ref{eqn:hypcon} and~\ref{eqn:conapp}
in \secref{confixed}. % and \thmref{con}.

\emph{The second case $x_1>1$ and $x_\mu<c$} is more involved.
First note that we consider only points that have a positive contribution with respect to the given reference point. Therefore, we assume that $r_1 < x_1$ and $r_2< f(x_{\mu})$ holds.

The hypervolume of a set of points $X=\{x_1,\dots,x_{\mu}\}$, where \wlo\ $x_1 \leq x_2 \leq \dots \leq x_{\mu}$, with respect to a reference point $r=(r_1, r_2)$ with $r_1 < x_1$ and $r_2< f(x_{\mu})$ is then given by
\begin{align*}
	\HYP(X,r) = {} & (x_1 -r_1) \cdot (f(x_1)-r_2)\\
	& + (x_2 -x_1) \cdot (f(x_2)-r_2)\\
	& \dots\\
	& +  (x_{\mu} - x_{\mu-1}) \cdot (f(x_{\mu})-r_2)\\
%	&= (x_1 -r_1) \cdot (c/x_1)-r_2) + x_2 \cdot (c/x_2-r_2) - x_1 \cdot (c/x_2-r_2)\\
%	 {} + \dots + x_{\mu} \cdot (c/x_{\mu}-r_2) - x_{\mu-1} \cdot (c/x_{\mu}-r_2)\\
%	= c \cdot \mu + r_1r_2   - x_1 \cdot (c/x_2)\\
%	 {} - \dots - x_{\mu-1} \cdot (c/x_{\mu})+ x_{\mu} \cdot (-r_2)\\
	 = {} & c \cdot \mu + r_1r_2 \\
	& \!-c (r_1/x_1 + x_1/x_2 + x_2/x_3 + \dots + x_{\mu-1} / x_{\mu})\\
	& \!-x_{\mu} \cdot r_2.
\end{align*}

\noindent
In order to maximize the hypervolume, we 
consider the function
\begin{align*}
h(x_1, \ldots, x_{\mu})
= c\, \left(\frac{r_1}{x_1} + \frac{x_1}{x_2} + \frac{x_2}{x_3} + \dots + \frac{x_{\mu-1}}{x_{\mu}}\right) + x_{\mu} \cdot r_2
\end{align*}
and compute its partial derivatives.

We have $1<x_1 < x_2 < \ldots < x_\mu<c$ as the equality of two points implies that one of them can be exchanged for another and thereby increases the hypervolume. 
We work under these assumptions and aim to find a set of points $X$ that minimizes the function $h$.
To do this, we consider the gradient vector given by the partial derivatives

\begin{align*}
	h'(x_1,\dots,x_{\mu})
	= \left(\frac{c}{x_2}-\frac{cr_1}{x_1^2}, \frac{c}{x_3}-\frac{cx_1}{x_2^2},\dots, \frac{c}{x_{\mu}}-\frac{cx_{\mu-2}}{x_{\mu-1}^2}, r_2 - \frac{cx_{\mu-1}}{x_{\mu}^2}\right).
\end{align*}
This implies that $h$ can be minimized by setting
\begin{equation*}
	\begin{array}{lclcl}
		x_2 & = & x_1^2/r_1 &  & \\
		x_3 & = & x_2^2/x_1 & = &  x_1^3/r_1^2\\
		x_4 & = & x_3^2/x_2 & = &  x_1^4/r_1^3\\
		 & \vdots &  \\
		x_\mu & = & x_{\mu-1}^2/x_{\mu-2} & = &  x_1^{\mu}/r_1^{\mu-1}\\
		x_{\mu}^2 & = & cx_{\mu-1}/r_2.
	\end{array}
\end{equation*}
Hence with 
\[
\frac{cx_{\mu-1}}{r_2} = \frac{cx_1^{\mu-1}}{r_1^{\mu-2}r_2} = \frac{x_1^{2\mu}}{r_1^{2(\mu-1)}}
\]
we get
\[
x_1 = \min \left\{\max \left\{ \left(\frac{cr_1^{\mu}}{r_2}\right)^{\frac{1}{\mu+1}},1 \right\},c \right\}.
\]
As we can assume $x_1\geq1$ and $x_{\mu} \leq c$,
we get for
$r_2\leq c r_1^{\mu}$ and $r_1 \leq c r_2^{\mu}$ that
%\[
%x_1 = \left(\frac{cr_1^{\mu}}{r_2}\right)^{\frac{1}{\mu+1}}
%\]
\[
x_i = \left( \frac{c^i \cdot r_1^{\mu-i+1}}{r_2^i} \right)^\frac{1}{\mu+1}
\]
for $1 \leq i \leq \mu$.
It now remains to determine the achieved approximation factor.
For this, we proceed as in \thmref{con} and use that
\[
\frac{x}{x_i} = \frac{f(x)}{f(x_{i+1})}\\
\Longrightarrow x = \sqrt{x_i \cdot x_{i+1}}.
\]

\noindent
This gives an approximation factor of the inner points of
\[
\frac{x_{i+1}}{x} = \left(\frac{c}{r_1 r_2} \right)^{\frac{1}{2(\mu+1)}}.
\]

\noindent
For the upper end points the approximation is 
\[
\frac{c}{f(x_1)}
= x_1
= \left( \frac{c \cdot r_1^{\mu}}{r_2} \right)^{\frac{1}{\mu+1}}.
\]

\noindent
For the lower end points the approximation is 
\[
\frac{c}{x_{\mu}}
=
c \cdot \left( \frac{r_2^\mu}{c^\mu \cdot r_1} \right)^{\frac{1}{\mu+1}}.
\]

\noindent
Hence the overall approximation factor in the second case is
\[
\max \left\{
\left( \frac{c \cdot r_1^{\mu}}{r_2} \right)^{\frac{1}{\mu+1}}, 
\left(\frac{c}{r_1 r_2} \right)^{\frac{1}{2(\mu+1)}},
c \cdot \left( \frac{r_2^\mu}{c^\mu \cdot r_1} \right)^{\frac{1}{\mu+1}}
\right\}.
\]

\emph{The third case $x_1=1$ and $x_\mu<c$} fixes only the left end of the front.
Here, we consider the function
\begin{align*}
&h(x_2, \ldots, x_{\mu}) = c \, \left (\frac{1}{x_2} + \frac{x_2}{x_3} + \frac{x_3}{x_4} + \dots + \frac{x_{\mu-1}}{x_{\mu}}\right) + x_{\mu} \cdot r_2.
\end{align*}
Not that in contrast to the second case, $h(\cdot)$ does not depend on $r_1$.
We can assume without loss of generality that $1=x_1 < x_2 < \ldots < x_\mu<c$.
The partial derivatives are therefore
\begin{align*}
	&h'(x_2,\dots,x_{\mu})= \left(\frac{c}{x_3}-\frac{c}{x_2^2},
	\frac{c}{x_4}-\frac{c x_2}{x_3^2},
	\dots, \frac{c}{x_{\mu}}-\frac{cx_{\mu-2}}{x_{\mu-1}^2}, r_2 - \frac{cx_{\mu-1}}{x_{\mu}^2}\right).
\end{align*}
This implies that $h$ can be minimized by setting
\begin{equation*}
	\begin{array}{lclcl}
		x_3 & = & x_2^2 \\
		x_4 & = & x_3^2/x_2 & = &  x_2^3\\
		x_5 & = & x_4^2/x_3 & = &  x_2^4\\
		 & \vdots &  \\
		x_\mu & = & x_{\mu-1}^2/x_{\mu-2} & = &  x_2^{\mu-1}\\
		x_{\mu}^2 & = & \frac{cx_{\mu-1}}{r_2}.
	\end{array}
\end{equation*}
Starting with $x_\mu^2=x_\mu^2$ we get,
\[
\frac{cx_{\mu-1}}{r_2}
= \frac{cx_2^{\mu-2}}{r_2} = x_2^{2(\mu-1)}
\]
and
\[
x_2 = \min \left\{\max \left\{ \left(\frac{c}{r_2}\right)^{\frac{1}{\mu}},1 \right\},c \right\}.
\]
Again using that $x_2\geq1$ and $x_{\mu} \leq c$ and assuming that
$r_2\leq c$ and $r_2\geq c^{-\frac{1}{\mu-1}}$, we get
%\[
%x_2 = \left(\frac{c}{r_2}\right)^{\frac{1}{\mu}}
%\]
\[
x_i = \left(\frac{c}{r_2}\right)^{\frac{i-1}{\mu}}.
\]

\noindent
This results in an approximation factor for the inner points of
\[
\frac{x_{i+1}}{\sqrt{x_i \cdot x_{i+1}}} = \left(\frac{c}{r_2}\right)^{\frac{1}{2\mu}}.
\]

\noindent
For the upper end points the approximation is 
\[
\frac{c}{f(x_1)}
= x_1
= 1.
\]

\noindent
For the lower end points the approximation is 
\[
\frac{c}{x_{\mu}}
=
c\left(\frac{r_2}{c}\right)^{\frac{\mu-1}{\mu}}.
\]

\noindent
Hence the overall approximation factor in the third case is
\[
\max \left\{
\left(\frac{c}{r_2}\right)^{\frac{1}{2\mu}},
c\left(\frac{c}{r_2}\right)^{\frac{\mu}{\mu-1}}
\right\} .
\]

\emph{The fourth case $x_1>1$ and $x_\mu=c$} fixes the right end of the front.
We
consider the function
\begin{align*}
&h(x_1, \ldots, x_{\mu-1})
= c \left(\frac{r_1}{x_1} + \frac{x_1}{x_2} + \frac{x_2}{x_3} + \dots + \frac{x_{\mu-2}}{x_{\mu-1}} + \frac{x_{\mu-1}}{ c}\right)
\end{align*}
and compute its partial derivatives
\begin{align*}
	&h'(x_1,\dots,x_{\mu-1})=
	\left(\frac{c}{x_2}-\frac{cr_1}{x_1^2}, \frac{c}{x_3}-\frac{cx_1}{x_2^2},\dots,
	\frac{c}{x_{\mu-1}}-\frac{cx_{\mu-3}}{x_{\mu-2}^2},
	1-\frac{cx_{\mu-2}}{x_{\mu-1}^2}
	\right).
\end{align*}

This implies that $h$ can be minimized by setting
\begin{equation*}
	\begin{array}{lclcl}
		x_2 & = & x_1^2/r_1 &  & \\
		x_3 & = & x_2^2/x_1 & = &  x_1^3/r_1^2\\
		x_4 & = & x_3^2/x_2 & = &  x_1^4/r_1^3\\
		 & \vdots &  \\
		x_{\mu-1} & = & x_{\mu-2}^2/x_{\mu-3} & = &  x_1^{\mu-1}/r_1^{\mu-2}\\
		x_{\mu-1}^2 & = & cx_{\mu-2}.
	\end{array}
\end{equation*}
Setting 
\[
cx_{\mu-2} = 
cx_1^{\mu-2}/r_1^{\mu-3} =
x_1^{2\mu-2}/r_1^{2\mu-4}
\]
we get
\[
x_1 = \min \left\{\max \left\{ \left(c r_1^{\mu-1}\right)^{\frac{1}{\mu}},1 \right\},c \right\}.
\]
Using that $x_1\geq1$ and $x_{\mu-1} \leq c$ and assuming
$r_2\geq c^{-1/(\mu-1)}$ and $r_2\leq c$ gives
%\[
%x_1 = \left(c r_1^{\mu-1}\right)^{\frac{1}{\mu}}
%\]
\[
x_i = r_1 \left( \frac{c}{r_1}\right)^\frac{i}{\mu}.
\]

\noindent
This results in an approximation factor for the inner points of
\[
\frac{x_{i+1}}{\sqrt{x_i \cdot x_{i+1}}} = \left(\frac{c}{r_1}\right)^{\frac{1}{2\mu}}
\]

\noindent
For the upper end points the approximation is 
\[
\frac{c}{f(x_1)}
= x_1
= \left(c r_1^{\mu-1}\right)^{\frac{1}{\mu}}.
\]

\noindent
For the lower end points the approximation is 
\[
\frac{c}{x_{\mu}}
=
1.
\]

\noindent
Hence the overall approximation factor for the fourth case is
\[
\max \left\{
\left(\frac{c}{r_1}\right)^{\frac{1}{2\mu}},
\left(c r_1^{\mu-1}\right)^{\frac{1}{\mu}}
\right\},
\]
which finishes the proof.
\end{proof}

\subsection{Numerical evaluation for two specific fronts}

\begin{figure*}[!h]
    \centering
    \subfigure{
        \renumberlinr{referenzpunktdiskussion_lin}
    }
    \hspace*{.05\textwidth}
    \subfigure{
    	\renumberlinzoom{referenzpunktdiskussion_lin_zoom}
    }
    \caption{Approximation factor of optimal hypervolume distribution depending
    on the reference point for the linear function
    $f \colon [1,2] \to [1,2]$ with  $f(x)=3-x$ for a population size of $\mu=10$.
    The right figure shows a closeup view of the area around the reference reference point with the best approximation ratio,
    which is marked with a red dot.}
    \label{fig:linr}
\end{figure*}

\begin{figure*}[!h]
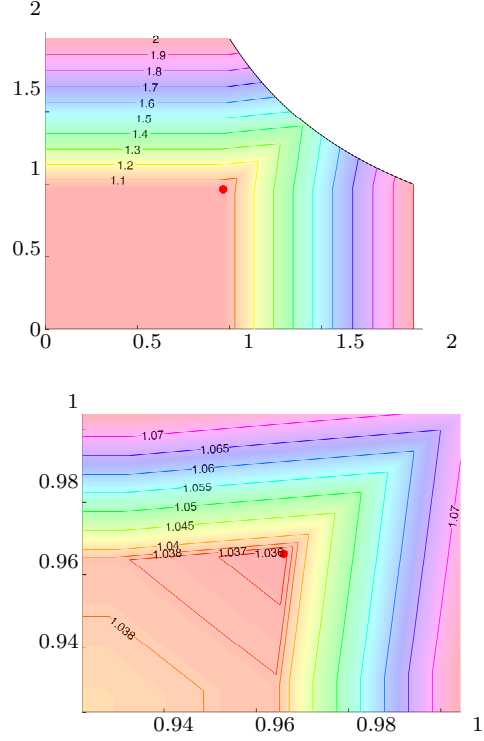

    \centering
    \subfigure{
        \renumberconr{referenzpunktdiskussion_convex}
    }
    \hspace*{.05\textwidth}
    \subfigure{
    	\renumberconzoom{referenzpunktdiskussion_convex_zoom}
    }
    \caption{Approximation factor of optimal hypervolume distribution depending
    on the reference point for the convex function
   $f \colon [1,2] \to [1,2]$ with $f(x) = 2/x$ for a population size of $\mu=10$.
    The right figure shows a closeup view of the area around the reference point with the best approximation ratio,
    which is marked with a red dot.}
    \label{fig:conr}
\end{figure*}

We now use the theoretical results of this \secref{depref} on the approximation
factor depending on the reference point and study two specific fronts as an example.

First, we consider 
the linear front
$f \colon [1,2] \to [1,2]$ with  $f(x)=3-x$.
A plot of this front is shown in \figref{lin}~(a).
For $\mu=10$,
\thmref{optlinr} gives that the optimal distribution
\begin{equation*}
\OPTAPPr{\mu}{f} = \left\{
\frac{33}{31},
\frac{36}{31},
\frac{39}{31},
\frac{42}{31},
\frac{45}{31},
\frac{48}{31},
\frac{51}{31},
\frac{54}{31},
\frac{57}{31},
\frac{60}{31}
\right\}
\end{equation*}
achieves the (optimal) approximation of
$
	\APP{\OPTAPPr{\mu}{f}} = 31/30.
$
With \thmref{lina} we can now also determine the approximation factor
of optimal hypervolume distributions depending on the reference point $r$.
For some specific reference points we get
\begin{eqnarray*}
\APP{\OPTHYPr{\mu,r}{f}} = &2 &\text{for $r=(2,1)$,}\\
\APP{\OPTHYPr{\mu,r}{f}} = &4/3 &\text{for $r=(3/2)$,}\\
\APP{\OPTHYPr{\mu,r}{f}} = &22/21 &\text{for $r=(1,1)$,}\\
\APP{\OPTHYPr{\mu,r}{f}} = &31/30 &\text{for $r=(30/31,30/31)$,}\\
\APP{\OPTHYPr{\mu,r}{f}} = &27/26 &\text{for $r\leq (8/9,8/9)$.}
\end{eqnarray*}
\figref{linr} shows a plot of the approximation factor depending on the
reference point.
We observe that if $r_2>32\, r_1-30$ or $r_1>32\,r_2-30$, the approximation factor
is only determined by the inner approximation factor $A_c$ (cf.~\thmref{lina}).
Moreover, for $r_1>10\,r_2-8$ and $r_2>10\, r_1-8$ the approximation factor
only depends  on $r_1$ and $r_2$, respectively.  For $r\leq (8/9,8/9)$ it is constant.
The optimal approximation factor is achieved for the reference point
\[
\Big(
\tfrac{c^2+d\,(c+1+\mu)-1}{c+d\,(\mu+1)-1},
\tfrac{c^2+d\,(c+1+\mu)-1}{c+d\,(\mu+1)-1}
\Big)=
(30/31,30/31).
\]

Let us now consider a specific convex function 
$f \colon [1,c] \to [1,c]$ with $f(x) = c/x$ and $c=2$ for a population size of $\mu=10$.
The function is shown in \figref{con2}.
\thmref{optcon} gives that the optimal distribution 
\[
\OPTAPPr{\mu}{f} = \big\{
2^{\frac{1}{20}},
2^{\frac{3}{20}},
2^{\frac{5}{20}},
\cdots
%2^{\frac{7}{20}},
%2^{\frac{9}{20}},
%2^{\frac{11}{20}},
%2^{\frac{13}{20}},
,
2^{\frac{15}{20}},
2^{\frac{17}{20}},
2^{\frac{19}{20}}
\big\}
\]
achieves the (optimal) approximation factor of
\[
\APP{\OPTAPPr{\mu}{f}} = 2^{1/20}\approx1.0353.
\]
With \thmref{hyprefcon} we can determine the approximation factor
of optimal hypervolume distributions depending on the reference point $r$.
\figref{conr} shows the behavior of the approximation factor depending on the
choice of the reference point $r$. 
We observe that for $r_2>c\,r_1^\mu$ and $r_1>c\,r_2^\mu$, the approximation factor
only depends  on $r_1$ and $r_2$, respectively.  For
\[
	r\leq (c^{1/(1-\mu)},c^{1/(1-\mu)})
	=(2^{-1/9},2^{-1/9})
	\approx (0.926,0.926)
\]
the approximation factor is invariably 
\[
c^{1/(2 \mu-2)}=2^{1/18}\approx1.0393.
\]
The optimal approximation factor
%of $\APP{\OPTAPPr{\mu}{f}}$
%\[
%c^{1/2\mu}=2^{1/20}\approx1.0353
%\]
is achieved for the reference point \[
(c^{-1/2\mu},c^{-1/2\mu}) =
(2^{-1/20},2^{-1/20})\approx
(0.966,0.966).
\]

%---------------------------------------------------------------------------------
%---------------------------------------------------------------------------------

\section{Conclusions}
Evolutionary algorithms have been shown to be very successful for dealing with multi-objective optimization problems. This is mainly due to the fact that such problems are hard to solve by traditional optimization methods. The use of the population of an evolutionary algorithm to approximate the Pareto front seems to be a natural choice for dealing with these problems.
The use of the hypervolume indicator to measure the quality of a population in an 
evolutionary multi-ob\-ject\-ive algorithm has become very popular in recent years. 
Understanding the optimal distribution of a population consisting of $\mu$ 
individuals is a hard task and the optimization goal when using the hypervolume 
indicator is rather unclear. Therefore, it is a challenging task to understand 
the optimization goal by using the hypervolume indicator as a quality measure 
for a population. 

We have examined how the hypervolume indicator approximates 
Pareto fronts of different shapes and related it to the best possible 
approximation ratio. 
We started by considering the case where we assumed that the extreme points with respect to the given objective functions have to be included in both distributions.
Considering linear fronts and a class of convex 
fronts we have pointed out that the hypervolume indicator gives provably the 
best multiplicative approximation ratio that is achievable. To gain further insights into the optimal hypervolume distribution and its relation to multiplicative approximations, we carried out numerical investigations. These 
investigations point out that the shape as well the scaling of the objectives 
heavily influences the approximation behavior of the hypervolume indicator. 
Examining fronts with different shapes we have shown that the approximation 
achieved by an optimal set of points with respect to the hypervolume may differ 
from the set of $\mu$ points achieving the best approximation ratio.

After having obtained these results, we analyzed the impact of the reference points on the hypervolume distribution and compared the multiplicative approximation ratio obtained by this indicator to the overall optimal approximation that does not have to contain the extreme points. Our investigations show that also in this case the hypervolume distribution can lead to an overall optimal approximation when the reference point is chosen in the right way for the class of linear and convex functions under investigation. Furthermore, our results point out the impact of the choice of the reference point with respect to the approximation ratio that is achieved as shown in Figures~\ref{fig:linr} and \ref{fig:conr}. 

Our results provide insights into the connection of the optimal hypervolume distribution and approximation ratio for special classes of functions describing the the Pareto fronts of multi-objective problems having two objectives.
For future work, it would be interesting to obtain results for broader classes of functions as well as problems having more than $2$ objectives.

%\section*{Acknowledgement}

%The authors thank the reviewers for their constructive comments that helped to improve the presentation of the paper.

\bibliography{../../hypervol}
%\bibliography{bibtex}

\end{document}